\def\nL{\nabla L}
\def\nR{\nabla\cR}
\def\AT{A^\top}
\def\Pip{\Pi_\perp}
\def\llog{\ell_{\log}}
\def\lexp{\ell_{\exp}}
\def\cRlog{\cR_{\log}}
\def\heta{\hat\eta}
\newenvironment{proofof}[1]{\begin{proof}\textbf{(of {#1})}}{\end{proof}}
\title{Risk and parameter convergence of logistic regression}
\author{Ziwei Ji\qquad Matus Telgarsky\\
\tt{\{ziweiji2,mjt\}@illinois.edu}\\
University of Illinois, Urbana-Champaign}
\author{Ziwei Ji\thanks{\tt{<\url{ziweiji2@illinois.edu}>} and \tt{<\url{mjt@illinois.edu}>};
University of Illinois, Urbana-Champaign.}
\and
Matus Telgarsky\footnotemark[1]}
\date{}
\begin{document}

\maketitle

\begin{abstract}
  Gradient descent, when applied to the task of logistic regression,
  outputs iterates which are biased to follow a unique ray defined by the data.
  The direction of this ray is the maximum margin predictor of a maximal linearly separable
  subset of the data; the gradient descent iterates converge to this ray \emph{in direction}
  at the rate $\cO(\nicefrac{\ln\ln t }{\ln t})$.
  The ray does not pass through the origin in general, and its offset is the bounded
  global optimum of the risk over the remaining data;
 gradient descent recovers this offset at a rate $\cO(\nicefrac{(\ln t)^2}{\sqrt{t}})$.
\end{abstract}

\section{Introduction}

Logistic regression is the task of finding a vector $w\in\R^d$ which approximately minimizes
the \emph{empirical logistic risk}, namely
\[
  \cRlog(w) := \frac 1 n \sum_{i=1}^n \llog(\ip{w}{-x_iy_i})
  \qquad
  \textup{with}
  \quad
  \llog(r) := \ln\del{1 + \exp(r)},
\]
where $\llog$ is the \emph{logistic loss}.  A traditional way to minimize $\cRlog$ is to pick
an arbitrary $w_0$, and from there recursively construct \emph{gradient descent iterates}
$(w_j)_{j\geq 0}$
via $w_{j+1} := w_j - \eta_j \nabla \cRlog(w_j)$,
where $(\eta_j)_{j\geq 0}$ are step size parameters.

Despite the simplicity of this setting, a general characterization of the gradient descent
path has escaped the literature for a variety of reasons.
It is possible for the data to be configured so that $\cRlog$ is strongly convex,
in which case standard convex optimization tools grant the existence of a unique bounded
optimum, and moreover a rate at which gradient descent iterates converge to it.
It is also possible, however, that data is \emph{linearly separable}, in which case
$\cRlog$ has an infimum of 0 despite being positive everywhere; the optimum is off at infinity,
and convergence analyses operate by establishing a \emph{maximum margin} property of the
normalized iterates $\nicefrac{w_j}{|w_j|}$ \citep{nati_logistic},
just as in the analysis of AdaBoost \citep{schapire_freund_book_final,mjt_margins}.

In general, data can fail to induce a strongly convex risk or
a linearly separable problem.
Despite this, there is still a unique characterization of the gradient descent path.
Specifically, gradient descent is biased to follow an \emph{optimal ray}
$\cbr{ \barv + r\cdot\baru\ : \ r\geq 0}$, which is constructed as follows.
First, as detailed in \Cref{sec:struct}, the data uniquely determines (via a greedy procedure)
a linearly separable subset, with a corresponding maximum margin predictor $\baru$;
gradient descent converges to $\baru$ \emph{in direction}, meaning $\nicefrac {w_j}{|w_j|} \to \baru$.
The remaining data span a space $S$; the empirical risk of the remaining data
is strongly convex over bounded subsets of $S$, and possesses a unique optimum $\barv$,
to which the projected gradient descent iterates converge, meaning $\Pi_S w_j \to \barv$.

\begin{theorem}[name={Simplification of \Cref{fact:struct,thm:risk_converge,fact:min_norm}}]
  \label{fact:main}
  Let examples $((x_i,y_i))_{i=1}^n$ be given satisfying $|x_iy_i|\leq 1$,
  along with a loss $\ell \in\{\llog, \exp\}$, with corresponding risk $\cR$ as above.
  Consider gradient descent iterates $(w_j)_{j\geq 0}$ as above, with $w_0 = 0$.
  \begin{enumerate}
    \item\textbf{(Convergence in risk.)}
      For any step sizes $\eta_j \leq 1$ and any $t\geq 1$,
      \[
        \cR(w_t) - \inf_{w\in\R^d} \cR(w)
        =
        \cO\del{
          \frac 1 t + \frac{\ln(t)^2}{\sum_{j<t}\eta_j}
        }
        =
        \begin{cases}
          \cO(\ln(t)^2/t)
          &\eta_j = \Omega(1),
          \\
          \cO(\ln(t)^2/\sqrt{t})
          &\eta_j = \Omega(1/\sqrt{j+1}),
        \end{cases}
      \]
      where $\cO(\cdot)$ hides problem-dependent constants.

    \item\textbf{(Convergence in parameters; implicit bias and regularization.)}
      The data uniquely determines a subspace $S$
      and a vector $\barv \in S$,
      such that if $\eta_j := 1/\sqrt{j+1}$
      and $t^2 = \Omega(n\ln(t))$,
      letting $\Pi_S$ denote orthogonal projection onto $S$
      and $\barw_t := \argmin\cbr{ \cR(w) : |w| \leq |w_t|}$
      denote the solution to the constrained optimization problem,
      then
      \begin{align*}
        |\Pi_S w_t| = \Theta(1)
        \qquad&\textup{and}\qquad
        \max\cbr{ |\Pi_S w_t - \barv|^2,\ {} |\Pi_S \barw_t - \barv|^2 } = \cO\del{\frac{\ln(t)^2}{\sqrt{t}}}.
      \end{align*}
      If there are examples outside $S$, their projection onto $S^\perp$ is linearly separable with maximum margin
      predictor $\baru\in S^\perp$,
      and
      \begin{align*}
          |\Pi_{S^\perp} w_t| = \Theta(\ln(t))
          \qquad&\textup{and}\qquad
          \max\cbr{ \envert{\frac{w_t}{|w_t|} - \baru }^2,\ {}
                    \envert{\frac{\barw_t}{|\barw_t|} - \baru }^2 }
          = \cO\del{\frac{\ln\ln t}{\ln t}}.
      \end{align*}
      In particular, $\nicefrac{w_t}{|w_t|} \to \baru$ and  $\Pi_S w_t \to \barv$.
  \end{enumerate}
\end{theorem}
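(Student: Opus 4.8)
The plan is to read \Cref{fact:main} off the three more technical statements it bundles: the structural decomposition \Cref{fact:struct}, the risk rate \Cref{thm:risk_converge}, and the norm / regularization-path comparison \Cref{fact:min_norm}. So I would first fix the data decomposition from \Cref{fact:struct}: a partition of the examples into a set $D_0$ that spans $S$ and whose risk $\cR_0$ (the sum of those loss terms, a function of $\Pi_S w$ alone) is coercive on $S$ and $\mu$-strongly convex on every bounded subset, with unique minimizer $\barv\in S$; and a set $D_1$ whose $S^\perp$-projections are linearly separable by a unit vector $\baru\in S^\perp$ with margin $\gamma>0$, so $\ip{\baru}{x_iy_i}\ge\gamma$ for $i\in D_1$. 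The two consequences I will lean on are that $\inf_w\cR(w)=\cR_0(\barv)$, attained only in the limit along $\barv+r\baru$ as $r\to\infty$, and that by positivity of the loss $\cR(w)\ge\cR_0(\Pi_S w)\ge\cR_0(\barv)$, so $\cR_1(w):=\cR(w)-\cR_0(\Pi_S w)$ satisfies $0\le\cR_1(w)\le\cR(w)-\inf\cR$.

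For part 1 I would invoke \Cref{thm:risk_converge}, whose proof is the standard smooth--convex argument: $\cR$ is $1$-smooth since $|x_iy_i|\le1$ and $\llog'',\lexp''\le1$, so $\eta_j\le1$ yields the descent inequality $\cR(w_{j+1})\le\cR(w_j)-\tfrac{\eta_j}{2}|\nR(w_j)|^2$; combining this, the convex comparison of $\cR(w_j)$ against the comparator $z_t:=\barv+\tfrac1\gamma\ln(t)\,\baru$ (which has $\cR(z_t)\le\inf\cR+\cO(1/t)$ and $|z_t|^2=\cO(\ln(t)^2)$), and the monotonicity of $j\mapsto\cR(w_j)$, gives $\cR(w_t)-\inf\cR=\cO\del{1/t+\ln(t)^2/\sum_{j<t}\eta_j}$; the two displayed cases follow by summing $\eta_j=\Omega(1)$ resp.\ $\eta_j=\Omega(1/\sqrt{j+1})$, the latter producing the rate $\epsilon_t:=\cO(\ln(t)^2/\sqrt t)$ used below.

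For the parameter statements, fix $\eta_j=1/\sqrt{j+1}$, so $\cR(w_t)-\inf\cR\le\epsilon_t$, and note $\cR(\barw_t)\le\cR(w_t)$ since $w_t$ is feasible for $|w|\le|w_t|$. On $S$: from $\cR_0(\Pi_S w_t)-\cR_0(\barv)\le\cR(w_t)-\inf\cR\le\epsilon_t$, coercivity of $\cR_0$ confines $\Pi_S w_t$ to a fixed bounded set, and strong convexity there gives $\tfrac\mu2|\Pi_S w_t-\barv|^2\le\epsilon_t$; the same with $\barw_t$ gives the claimed $\cO(\epsilon_t)=\cO(\ln(t)^2/\sqrt t)$ bounds on $|\Pi_S w_t-\barv|^2$ and $|\Pi_S\barw_t-\barv|^2$, and $\Pi_S w_t\to\barv$ (so $|\Pi_S w_t|$ is bounded, indeed $\Theta(1)$). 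On $S^\perp$: since $\cR_1(w_t)\le\epsilon_t$ and $\cR_1(w_t)\ge\tfrac1n\ell(\ip{w_t}{-x_iy_i})$ for each $i\in D_1$, inverting the loss ($\ell(r)\le\delta\Rightarrow r\le\ln(2\delta)$ for small $\delta$) gives $\ip{w_t}{x_iy_i}\ge\ln\tfrac{1}{2n\epsilon_t}$, which is $\Omega(\ln t)$ precisely under the hypothesis $t^2=\Omega(n\ln t)$; subtracting the $\cO(1)$ contributed by $|\Pi_S w_t|$ and using $|\Pi_{S^\perp}x_iy_i|\le1$ yields $|\Pi_{S^\perp}w_t|\ge\Omega(\ln t)$. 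For the matching upper bound I would appeal to \Cref{fact:min_norm}, which controls $|w_t|$ by essentially the smallest norm that drives $\cR$ down to $\cR(w_t)$; since reaching $\cR_1$-risk $\epsilon$ costs norm at most $\tfrac1\gamma\ln(1/\epsilon)+\cO(1)$ (place the mass along $\baru$), this gives $|w_t|\le\tfrac1\gamma\ln(1/\epsilon_t)+\cO(1)=\cO(\ln t)$, so $|\Pi_{S^\perp}w_t|=\Theta(\ln t)$, and likewise for $\barw_t$.

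Directional convergence then follows by writing $\hat w_t:=w_t/|w_t|$: since $|\Pi_S w_t|=\cO(1)$ while $|w_t|=\Theta(\ln t)\to\infty$, $\Pi_S\hat w_t\to0$, and by the computation above $\hat w_t$ separates $\{\Pi_{S^\perp}x_iy_i\}_{i\in D_1}$ with margin at least $(\Omega(\ln t)-\cO(1))/|w_t|$; comparing this with the maximum margin $\gamma$ — the numerator $\ln(1/\epsilon_t)$ and denominator $|w_t|$ each agreeing with $\tfrac1\gamma\ln t$ up to $\cO(\ln\ln t)$, so the margin gap is $\cO(\ln\ln t/\ln t)$ — and invoking uniqueness of the max-margin direction gives $\envert{\hat w_t-\baru}^2=\cO(\ln\ln t/\ln t)$, with the identical estimate for $\barw_t/|\barw_t|$; the ``in particular'' limits are the $t\to\infty$ specializations. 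I expect the $S^\perp$ analysis to be the main obstacle: $\nR(w_j)$ superimposes a $D_0$-contribution of size $\asymp|\Pi_S w_j-\barv|$, which runs a $\Theta(1)$-scale strongly-convex dynamics, onto a $D_1$-contribution of size $\asymp\cR_1(w_j)$, which is exponentially small in $|\Pi_{S^\perp}w_j|$ and runs a $\Theta(\ln t)$-scale dynamics, and the crude bound $|\Pi_{S^\perp}w_t|\le\sum_{j<t}\eta_j\cR_1(w_j)$ only gives $\cO(\ln(t)^3)$ unless one first shows $\cR_1(w_j)=\cO(1/\sqrt j)$; establishing that tighter decay — and the slow $\cO(\ln\ln t/\ln t)$ directional rate — is exactly where \Cref{fact:min_norm} and a careful ``separable part makes geometric progress'' estimate must be combined so that the two scales do not contaminate each other.
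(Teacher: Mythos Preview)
Your high-level plan is right: \Cref{fact:main} is by construction a corollary of \Cref{fact:struct}, \Cref{thm:risk_converge}, and \Cref{fact:min_norm}, and you correctly read off Part~1 and the $S$-half of Part~2 (the latter is exactly the paper's \Cref{fact:param_converge_S}). The lower bound $|\Pi_{S^\perp}w_t|=\Omega(\ln t)$ via ``invert the loss on each $i\in D_1$'' also matches the paper's argument.

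Two points in the $S^\perp$ sketch, however, would not go through as written.

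\textbf{Upper bound on $|w_t|$.} Your justification (``the smallest norm that drives $\cR$ down to $\cR(w_t)$ is $\tfrac1\gamma\ln(1/\epsilon_t)+\cO(1)$'') bounds $|\barw_t|$, not $|w_t|$; nothing so far prevents gradient descent from wandering to a much larger norm before reaching risk $\epsilon_t$. The paper's bound (\Cref{fact:wt_lnt}) is a perceptron-style argument: set $\ell'_{<t}:=\tfrac1n\sum_{j<t}\eta_j|\nL(A_cw_j)|_1$, note $|\Pi_\perp w_t|\le \ell'_{<t}$, then sandwich $\ip{\Pi_\perp w_t-\tfrac{\ln t}{\gamma}\baru}{\baru}$ between $\gamma\ell'_{<t}-\tfrac{\ln t}{\gamma}$ (from the dual representation of $\baru$) and the square-root of an expansion of $|\Pi_\perp w_t-\tfrac{\ln t}{\gamma}\baru|^2$ (from \Cref{lem:magic_ineq_perp}); solving the resulting quadratic gives $\ell'_{<t}\le 4\ln(t)/\gamma^2$.

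\textbf{Directional rate.} Your margin-matching argument needs the leading constants in the numerator $\ln(1/\epsilon_t)$ and the denominator $|w_t|$ to agree, but with step size $1/\sqrt{j+1}$ the former is $\tfrac12\ln t+\cO(\ln\ln t)$ while any honest upper bound on $|w_t|$ (e.g.\ the $4\ln(t)/\gamma^2$ above) has a different constant; you would only conclude that $\hat w_t$ achieves margin $\ge c\gamma$ for some fixed $c<1$, which together with uniqueness of $\baru$ does not yield $\hat w_t\to\baru$, let alone a rate. The paper sidesteps this mismatch by never comparing two independent estimates: using the dual form $\baru=-A_\perp^\top\barq/\gamma$ and Fenchel--Young with $g=\ln\cR_{\exp}$ gives
\[
\tfrac12\envert{\hat w_t-\baru}^2 \le 1+\frac{\ln\cR(w_t)}{\gamma|w_t|}+\frac{\cO(\ln n)}{\gamma|w_t|},
\]
and then the smoothness inequality (\Cref{fact:smooth_ineq}) bounds \emph{both} $-\ln\cR(w_t)$ and $|w_t|$ by the \emph{same} sum $\sum_{j<t}\heta_j\gamma_j$ (up to a factor $\gamma_j\ge(1-\epsilon)\gamma$), so the leading ``$1$'' cancels exactly and only $\cO(1)/(\gamma|w_t|)$ survives. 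In the nonseparable case the same cancellation is engineered by replacing $\cR$ with $\cR-\bar\cR$ (\Cref{fact:gen:ip,fact:gen:iter}). This exact-cancellation mechanism, not a stability estimate for the max-margin problem, is what delivers the $\cO(\ln\ln t/\ln t)$ rate.
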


This theorem captures \emph{implicit bias} by showing that gradient descent follows the unique
ray $\{ \barv + r\cdot\baru : r\geq 0\}$, even though the risk itself may be minimized by any
vector which lies in the relative interior of a convex cone defined by the problem.
Similarly, the theorem captures \emph{implicit regularization} by showing that the gradient
descent iterates also track the sequence of constrained optima $(\bar w_j)_{j\geq 1}$.

This paper is organized as follows.

  \paragraph{Problem structure (\Cref{sec:struct}).}
    This section first builds up the case of general data with a few illustrative examples,
    including strongly convex and linearly separable cases.
    Thereafter, a complete construction and characterization of the optimal ray
    $\{\barv + r\cdot\baru : r\geq 0\}$ and related objects is provided in \Cref{fact:struct}.

  \paragraph{Risk convergence (\Cref{sec:risk}).}
    The preceding section on problem structure reveals that the (boun\-ded) point
    $\barv + \baru\del{\nicefrac{\ln(t)}{\gamma}}$ achieves low risk;
    plugging this into a modified smoothness argument yields converge in risk with no
    apparent dependence on the optimum at infinity.

  \paragraph{Parameter convergence (\Cref{sec:param}).}
  The preceding problem structure reveals $\cR$ is strongly convex over bounded subsets of
  $S$, which gives convergence to $\barv$ via standard convex optimization tools.

    To prove $\nicefrac{w_j}{|w_j|} \to \baru$,
    the first key to the analysis is to study not $\cR$ but instead $\ln\cR$,
    which more conveniently captures
    local smoothness (extreme flattening) of $\cR$.
    To complete the proof, a number of technical issues
    must be worked out, including bounds on $|w_t|$, which rely upon an adaptation of the
    perceptron convergence proof.
       This proof goes through much more easily for the exponential loss,
     which is the main reason for its appearance in \Cref{fact:main}.

  \paragraph{Related work (\Cref{sec:related}).}
    The paper closes with a discussion of related work.

\subsection{Notation}

The data sample will be denoted by $((x_i,y_i))_{i=1}^n$.
Collect these examples into a matrix $A \in \R^{n\times d}$, with $i^{\textup{th}}$ row $A_i := -y_ix_i^\top$;
it is assumed that $\max_i |A_i| = \max_i |x_iy_i| \leq 1$, where $|\cdot|$ denote the $\ell_2$-norm in this paper.
Given loss function $\ell:\R\to\R_{\geq 0}$, for any $k$ and any $v\in \mathbb{R}^k$, define a coordinate-wise form
$L(v) := \sum_{i=1}^k \ell(v_i)$, whereby the empirical risk
$\cR(w) := \sum_{i=1}^n \ell\left(\ip{w}{-x_iy_i}\right)/n$
satisfies $\cR(w) = L(Aw)/n$, with gradient $\nR(w) = \AT\nL(Aw)/n$.
Define $\lexp(z) := \exp(z)$ and $\llog(z) := \ln(1+\exp(z))$,
and correspondingly $L_{\exp}$, $\cR_{\exp}$, $L_{\log}$, and $\cR_{\log}$.

As in \Cref{fact:main}, and will be elaborated in \Cref{sec:struct},
the matrix $A$ defines a unique division of $\R^d$
into a direct sum of subspaces $\R^d = S \oplus S^\perp$.
The rows of $A$ are either within $S$ or $S^c$ (i.e., $\mathbb{R}^d\setminus S$), and without loss of generality reorder the examples (and
permute the rows of $A$) so that $A :=\sbr{ \begin{smallmatrix}A_S\\A_c\end{smallmatrix}}$
where the rows of $A_S$ are within $S$ and the rows of $A_c$ are within $S^c$;
tying this to the earlier discussion, $A_c$ is the linearly separable part of the data,
and $A_S$ is the strongly convex part.
Furthermore, let $\Pi_S$ and $\Pip$ respectively denote orthogonal projection onto $S$ and $S^\perp$,
and define $A_\perp = \Pip A_c$, where each row of $A_c$ is orthogonally projected onto $S^\perp$.
By this notation,
\begin{align*}
    \Pip \nabla(L\circ A)(w) = \Pip \sbr{\begin{smallmatrix}A_c\\A_S\end{smallmatrix}}^\top
    \nL\del{ \sbr{\begin{smallmatrix}A_c\\A_S\end{smallmatrix}}w}
    = \Pip \sbr{\begin{smallmatrix}A_c\\A_S\end{smallmatrix}}^\top
    \sbr{\begin{smallmatrix}\nL(A_cw) \\\nabla L(A_Sw) \end{smallmatrix}}
     & = \sbr{\begin{smallmatrix}A_\perp\\0 \end{smallmatrix}}^\top
        \sbr{\begin{smallmatrix}\nL(A_cw) \\\nL(A_Sw) \end{smallmatrix}} \\
        &  = A_\perp^\top \nL(A_cw),
\end{align*}
which has made use of $L$ at varying input dimensions.

Gradient descent here will always start with $w_0 := 0$, and thereafter set
$w_{j+1} := w_j - \eta_j \nR(w_j)$.  It is convenient to define
$\gamma_j := |\nabla (\ln \cR)(w_j)| = |\nR(w_j)| / \cR(w_j)$
and $\heta_j := \eta_j \cR(w_j)$, whereby
\[
  |w_t| \leq \sum_{j<t} \eta_j |\nR(w_j)| = \sum_{j<t} \heta_j \gamma_j.
\]
Moreover, let $\barw_t := \argmin\cbr{ \cR(w) : |w|\leq |w_t| }$ denote the solution to the corresponding constrained optimization problem.

\section{Problem structure}
\label{sec:struct}

This section culminates in \Cref{fact:struct}, which characterizes
the unique ray $\{ \barv + r\cdot \baru : r\geq 0\}$.  To build towards this,
first consider the following examples.

\begin{wrapfigure}{R}{0.5\textwidth}
  \vspace{-0.5em}
  \includegraphics[width=0.5\textwidth]{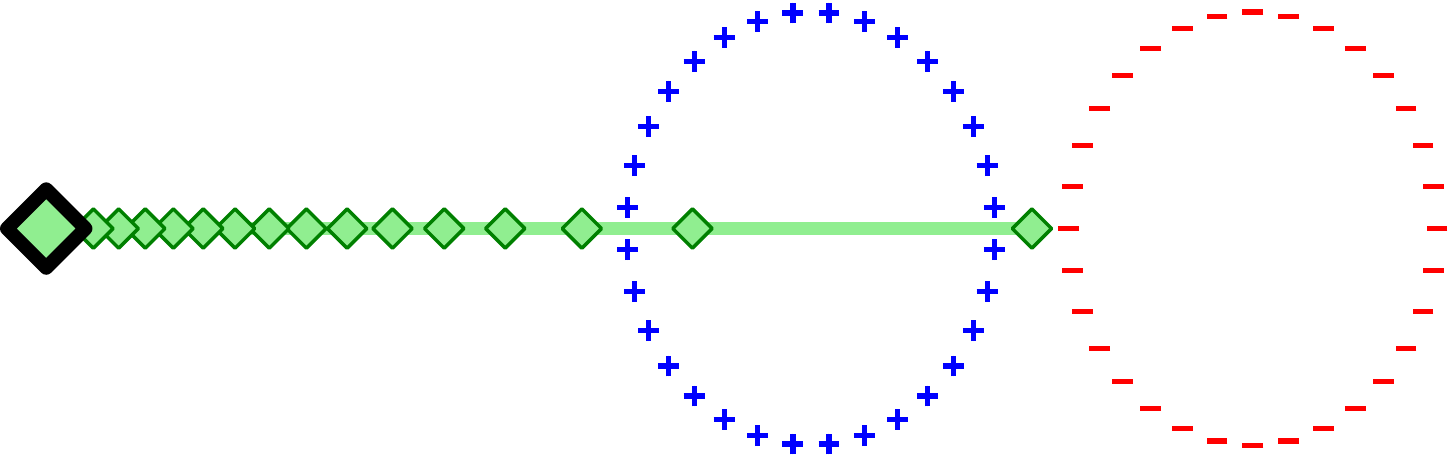}
  \caption{Separable.}
  \label{fig:struct:sep}
\end{wrapfigure}
\paragraph{Linearly separable.}
  Consider the data at right in \Cref{fig:struct:sep}: a blue circle of positive points, and a red circle of negative
  points.  The data is \emph{linearly separable:} there exist vectors $u\in\R^d$
  with positive margin, meaning $\min_i \ip{u}{x_iy_i} > 0$.  Taking any such $u$ and extending it to infinity
  will achieve 0 risk, but this is not what gradient descent chooses.  Constraining $u$ to have unit
  norm, a unique maximum margin point $\baru = -\bfe_1$ is obtained.
  The green gradient descent iterates follow $\baru$ exactly.

\begin{wrapfigure}{L}{0.4\textwidth}
  \vspace{-1.5em}
  \centerline{\includegraphics[width=0.25\textwidth]{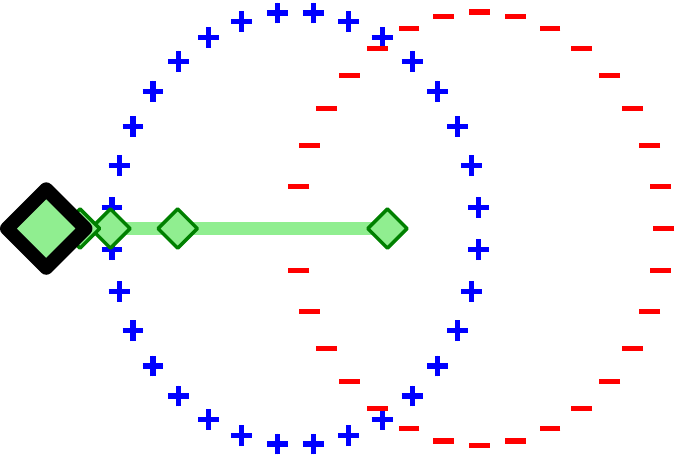}}
  \caption{Strongly convex.}
  \vspace{-1.0em}
  \label{fig:struct:sc}
\end{wrapfigure}
\paragraph{Strong convexity.}
Now consider moving the circles of data in \Cref{fig:struct:sep} until they overlap,
obtaining \Cref{fig:struct:sc}.  This data is \emph{not} linearly separable;
indeed, given any nonzero vector $u\in\R^d$, there exist data points incorrectly classified by
$u$, and therefore extending $u$ indefinitely will cause the risk to also increase to infinity.
It follows that the risk itself is 0-coercive \citep{HULL}, and moreover strongly convex
over bounded subsets, with a unique optimum $\barv$.  Gradient descent converges towards $\barv$.

\begin{wrapfigure}{R}{0.4\textwidth}
  \includegraphics[width=0.4\textwidth]{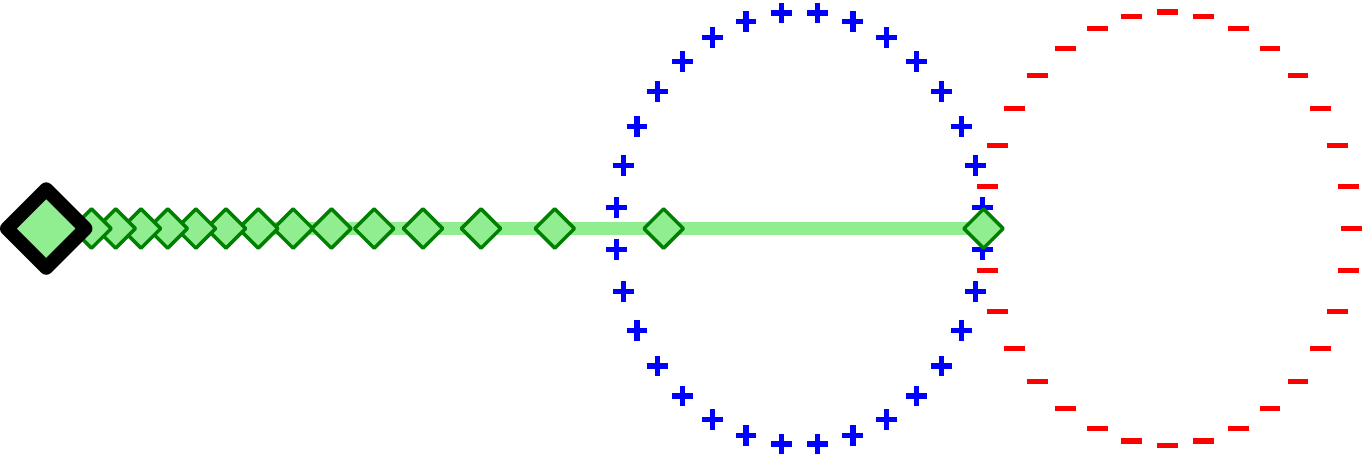}
  \caption{Mixed data.}
  \label{fig:struct:mixed:1}
\end{wrapfigure}
\paragraph{An intermediate setting.}
The preceding two settings either had the circles overlapping, or far apart.  What if they are
pressed together so that they touch at the origin?  Excluding the point at the origin,
the circles may still be separated with the maximum margin separator $\baru = -\bfe_1$ from
the linearly separable instance \Cref{fig:struct:sep}.
This example is our first taste of the general ray $\{\barv + r\cdot \baru:r\geq 0\}$, albeit still
with some triviality: $\barv = 0$.  Specifically, $\baru$ is the maximum margin separator of all data excluding the
point at the origin; the risk in this instance is bounded below by $\ell(0)/n$, which is the necessary error on the
point at the origin; the global optimum for that single point is $\barv=0$.

\begin{wrapfigure}{L}{0.5\textwidth}
  \includegraphics[width=0.5\textwidth]{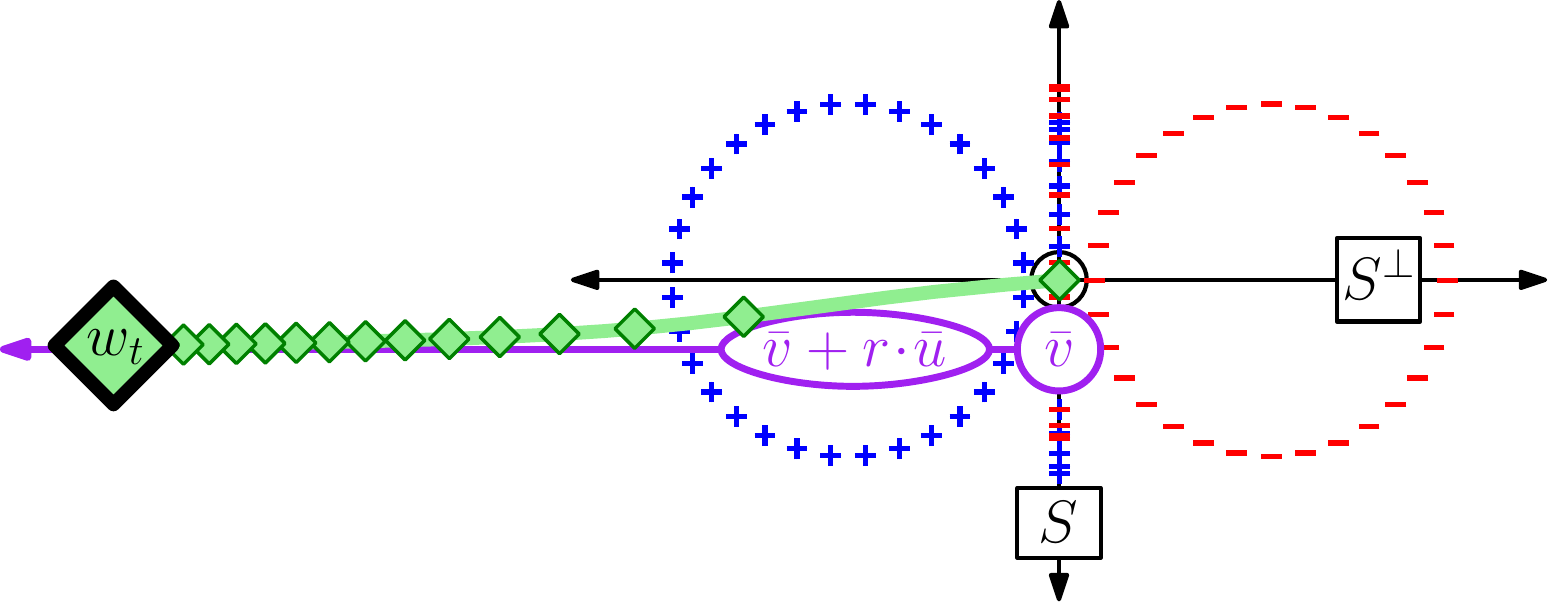}
  \caption{The general case.}
  \label{fig:struct:mixed:2}
\end{wrapfigure}
\paragraph{The general case.}
Combining elements from the preceding examples, the general case may be characterized as follows;
it appears in \Cref{fig:struct:mixed:2}, with all relevant objects labeled.
In the general case, the dataset consists of a \emph{maximal linearly separable subset $Z$}, with the remaining data falling
into a subset over which the empirical risk is strongly convex.  Specifically, $Z$ is constructed with the following
greedy procedure: for each example $(x_i,y_i)$, include it in $Z$ if there exists $u_i$ with $\ip{u_i}{x_iy_i}>0$
and $\min_j \ip{u_j}{x_jy_j}\geq 0$.  The aggregate $u := \sum_{i\in Z} u_i$ satisfies $\ip{u}{x_iy_i} >0$ for $i\in Z$
and $\ip{u}{x_iy_i} = 0$ otherwise.  Therefore $Z$ can be strictly separated by some vector $u$ orthogonal to $Z^c$;
let $\baru$ denote the maximum margin separator of $Z$ which is orthogonal to $Z^c$.

Turning now to $Z^c$, any vector $v$ which is correct on some $(x_i,y_i)\in Z^c$ (i.e., $\ip{u}{x_iy_i}>0$) must also
be incorrect on some other example $(x_j,y_j)$ in $Z^c$ (i.e., $\ip{v}{x_jy_j}<0$);
otherwise, $(x_i,y_i)$ would have been included in $Z$!  Consequently,
as in \Cref{fig:struct:sc} above, the empirical risk restricted to $Z^c$ is strongly convex, with a unique optimum $\barv$.  The gradient descent iterates follow the ray $\{\barv + r\cdot\baru:r\geq 0\}$, which means
they are globally optimal along $Z^c$, and achieve zero risk and follow the maximum margin direction $\baru$.

Turning back to the construction in \Cref{fig:struct:mixed:2}, the linearly separable data $Z$ is the two red and blue circles,
while $Z^c$ consists of data points on the vertical axis.  The points in $Z^c$ do not affect $\baru$, and have been adjusted to move $\barv$ away from 0,
where it rested in \Cref{fig:struct:mixed:1}.

Now using the notation $A_i=-y_ix_i^\top$, for $i\in Z$, the vector $-y_ix_i$ is collected into $A_c$, while for $i\in Z^c$, the vector $-y_ix_i^\top$ is put into $A_S$. The above constructions are made rigorous in the following \namecref{fact:struct}.
The proof of \Cref{fact:struct}, presented in the appendix, follows the intuition above.

\begin{theorem}
  \label{fact:struct}
  The rows of $A$ can be uniquely partitioned into matrices
  $(A_S, A_c)$,
  with a corresponding pair of orthogonal subspaces $(S,S^\perp)$ where
  $S = \SPAN(\AT_S)$
  satisfying the following properties.
  \begin{enumerate}
    \item\textbf{(Strongly convex part.)}
        If $\ell$ is twice continuously differentiable with $\ell''>0$,
        and $\ell \geq 0$,
        and $\lim_{z\to-\infty}\ell(z) = 0$,
        then $L\circ A$ is strongly convex over compact subsets of $S$,
        and $L\circ A_S$ admits a unique minimizer $\barv$
        with
        $
          \inf_{w\in\R^d} L(A w)
          =
          \inf_{v \in S}
          L(A_S v)
          = L(A_S\barv)
        $.

    \item\textbf{(Separable part.)}
      If $A_c$ is nonempty (and thus so is $A_\perp$),
      then $A_\perp$ is linearly separable.
      The maximum margin is given by
      \[
        \gamma
        := -\min\big\{ \max_{i} (A_\perp u)_i  : |u|= 1\big\}
        =\min\big\{|A_\perp^{\top}q|:q \geq 0, \sum_i q_i = 1\big\}
        > 0,
      \]
      and the maximum margin solution $\baru$ is the unique optimum to the primal problem,
      satisfying $\bar{u}=-A_\perp^{\top}\bar{q}/\gamma$ for every dual optimum $\barq$.
      If $\ell \geq 0$ and $\lim_{z\to-\infty}\ell(z) = 0$,
      then
      \begin{align*}
        \inf_{w\in\R^d} L(A w)=L(A_S \barv)+\lim_{r\to\infty}L\del{A_c(\barv+r\baru)}=L(A_S \barv).
      \end{align*}

  \end{enumerate}
\end{theorem}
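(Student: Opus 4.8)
The plan is to build the partition $(A_S,A_c)$ directly from an intrinsic property of $A$, then verify the two halves of the statement more or less separately, the strongly convex half being the one needing care.

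\textbf{Construction and two structural facts.} I would set $K:=\{u\in\R^d:Au\le0\}$, a closed convex cone, and put row $A_i$ into $A_c$ (say $i\in Z$) exactly when some $u\in K$ has $(Au)_i<0$, and into $A_S$ (say $i\in Z^c$) otherwise; then define $S:=\SPAN(\AT_S)$. Since this rule mentions only $A$, uniqueness of the partition is immediate. Two facts then carry the argument. (i) If $i\in Z^c$ then $(Aw)_i=0$ for \emph{every} $w\in K$: by definition of $Z^c$ the number is $\ge 0$, and $w\in K$ forces it $\le 0$. (ii) Summing the witnesses, $\tilde u:=\sum_{i\in Z}u_i$ lies in $K$ (cones are closed under addition) and satisfies $(A\tilde u)_i<0$ for $i\in Z$ and $(A\tilde u)_i=0$ for $i\in Z^c$; the second part says $\tilde u\perp\SPAN(\AT_S)=S$, i.e.\ $\tilde u\in S^\perp$, and $\tilde u\ne 0$ as soon as $A_c$ is nonempty.

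\textbf{Separable part.} This is pure convex geometry, with no role for $\ell$. Since $\tilde u\in S^\perp$, for $i\in Z$ one gets $(A_\perp\tilde u)_i=\ip{\Pip A_i^\top}{\tilde u}=\ip{A_i^\top}{\tilde u}=(A\tilde u)_i<0$, so $\tilde u/|\tilde u|$ witnesses $\gamma>0$ and $A_\perp$ is separable. The two formulas for $\gamma$ come from Sion minimax applied to the bilinear game $\max_{|u|\le 1}\min_{q\in\Delta}\ip{-A_\perp^\top q}{u}$ over the simplex $\Delta:=\{q\ge 0:\sum_i q_i=1\}$, using $\max_{|u|\le 1}\ip{-A_\perp^\top q}{u}=|A_\perp^\top q|$; the outer max is attained on the unit sphere precisely because $\gamma>0$. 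For uniqueness of the primal optimum $\baru$: if two unit vectors both attain margin $\gamma$, their average keeps every coordinate of $-A_\perp(\cdot)$ at least $\gamma$ but has norm $<1$, and rescaling would beat $\gamma$, a contradiction. For any dual optimum $\barq$, strong duality makes $(\baru,\barq)$ a saddle point, so $\baru$ maximizes $u\mapsto\ip{-A_\perp^\top\barq}{u}$ over $|u|\le 1$; since $|A_\perp^\top\barq|=\gamma>0$, that argmax is the single point $-A_\perp^\top\barq/\gamma$.

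\textbf{Strongly convex part and the infimum identities.} On $\R^d$, $\nabla^2(L\circ A)(w)=\AT\diag(\ell''(Aw))A$ with $\ell''>0$; restricting to $S$, for $0\ne v\in S$ we have $A_Sv\ne 0$ (else $v\perp\SPAN(\AT_S)=S$), so $\AT_S A_S$ is positive definite on $S$, and over a compact $C\subseteq S$ the weights $\ell''((Aw)_i)$ are bounded below, giving uniform strong convexity of $L\circ A$ along $S$ on $C$. For existence of the minimizer I would prove coercivity of $v\mapsto L(A_Sv)$ on $S$: the hypotheses force $\ell(z)\to\infty$ as $z\to\infty$, so if $v_k\in S$, $|v_k|\to\infty$ with $L(A_Sv_k)$ bounded then each coordinate sequence $((A_Sv_k)_i)_k$ is bounded above, hence any limit point $\hat v$ of $v_k/|v_k|$ has $\hat v\in S$, $|\hat v|=1$, $A_S\hat v\le 0$; but then $\hat v+t\tilde u\in K$ for large $t$, and fact (i) gives $0=(A(\hat v+t\tilde u))_i=(A_S\hat v)_i$ for all $i\in Z^c$, forcing $\hat v\perp S$ and $\hat v=0$, a contradiction. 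So the minimum is attained on $S$ and is unique by strict convexity; call it $\barv$. The infimum identities then follow: dropping the nonnegative $Z$-terms and noting $L(A_S\cdot)$ depends only on $\Pi_S(\cdot)$ (its rows lie in $S$) gives $L(Aw)\ge L(A_S\barv)$ for all $w$; and evaluating at $\barv+r\baru$, the $Z^c$-terms equal $L(A_S\barv)$ exactly (since $\baru\in S^\perp$ kills them) while for $i\in Z$, $(A_c(\barv+r\baru))_i=A_i\barv+r(A_\perp\baru)_i\le A_i\barv-r\gamma\to-\infty$, so $L(A_c(\barv+r\baru))\to 0$; hence $\inf_w L(Aw)=L(A_S\barv)+\lim_{r\to\infty}L(A_c(\barv+r\baru))=L(A_S\barv)$.

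\textbf{Main obstacle.} The delicate part is everything feeding the strongly convex half: identifying the right intrinsic definition of $Z$ through the cone $K$, and then the coercivity argument, which hinges on combining the ``$+t\tilde u$'' perturbation with the structural fact that $(Aw)_i=0$ whenever $w\in K$ and $i\in Z^c$. The separable-part claims and the infimum identities are, by comparison, routine convex duality and bookkeeping.
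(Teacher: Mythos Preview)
Your proposal is correct and follows essentially the same route as the paper: the same cone-based partition into $(A_S,A_c)$, the same aggregate witness $\tilde u$, the same averaging argument for uniqueness of $\baru$, and the same perturb-by-$\tilde u$ contradiction to rule out $A_S\hat v\le 0$ on $S$. The only cosmetic differences are that you invoke Sion's minimax theorem where the paper uses Fenchel--Rockafellar duality, and you phrase coercivity via a limit-point argument rather than the recession function; neither changes the substance.
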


\section{Risk convergence}
\label{sec:risk}

Gradient descent decreases the risk as follows.

\begin{theorem}\label{thm:risk_converge}
    For $\ell\in\cbr{\llog,\lexp}$, given step sizes $\eta_j\le1$ and $w_0=0$, then for any $t\ge1$,
    \begin{equation*}
      \cR(w_t)-\inf_w \cR(w)
      \le\frac{\exp\left(|\barv|\right)}{t}+\frac{|\barv|^2 + \ln(t)^2/\gamma^2}{2\sum_{j=0}^{t-1}\eta_j}.
    \end{equation*}
\end{theorem}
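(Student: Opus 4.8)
Here is the plan. The goal is to run the textbook descent analysis for a smooth convex objective, but to compare the iterates not against the (infinite) infimizer of $\cR$ but against the bounded reference point $z := \barv + \del{\nicefrac{\ln t}{\gamma}}\baru$ produced by \Cref{fact:struct}. Because $\baru$ separates the separable block with margin $\gamma$ and $\barv$ lies in the orthogonal complement $S$, this $z$ has norm only $\cO(\ln t)$ yet risk within $\nicefrac{e^{|\barv|}}{t}$ of $\inf_w \cR(w)$; feeding it into the standard analysis then contributes a $\nicefrac1t$ term plus a $\nicefrac{|z|^2}{\sum_j\eta_j}$ term, which is exactly the stated bound.

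First I would establish the one-step descent inequality $\cR(w_{j+1}) \le \cR(w_j) - \tfrac{\eta_j}{2}|\nR(w_j)|^2$ for $\eta_j \le 1$, which in particular makes $\cR$ non-increasing along the trajectory. For $\ell = \llog$ this is immediate, since $\llog'' \le \tfrac14$ and $|A_i| \le 1$ give $\nabla^2\cR_{\log} \preceq \tfrac14 I$ globally. For $\ell = \lexp$ the risk is not globally smooth, and handling this is the main obstacle; here I would use the self-bounding bound $\nabla^2\cR_{\exp}(w) = \tfrac1n\sum_i \lexp((Aw)_i)A_iA_i^\top \preceq \cR_{\exp}(w)\,I$ and induct on $j$, starting from $\cR_{\exp}(w_0) = 1$: along the segment $s\mapsto w_j - s\nR(w_j)$, writing $\phi$ for $\cR_{\exp}$ of that point, one has $\phi(0)\le 1$, $\phi'(0) = -|\nR(w_j)|^2$, and $\phi''(s) \le \phi(s)\,|\nR(w_j)|^2$, so as long as $\phi \le 1$ a second-order Taylor estimate gives $\phi(s) \le \phi(0) - \tfrac s2|\nR(w_j)|^2 \le \phi(0)$ for $s\le 1$; this bootstraps on $[0,\eta_j]$, keeps the sublevel set $\cbr{w : \cR_{\exp}(w)\le 1}$ invariant, and yields the descent inequality at $s = \eta_j$. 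This is exactly the delicacy for which $\lexp$ is singled out alongside $\llog$.

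Given the descent inequality, convexity of $\cR$ yields the usual per-step bound $\eta_j\del{\cR(w_{j+1}) - \cR(z)} \le \tfrac12\del{|w_j - z|^2 - |w_{j+1}-z|^2}$ for every fixed $z$ (substitute $\nR(w_j) = (w_j - w_{j+1})/\eta_j$ and expand the inner product). Summing over $j<t$, telescoping, dropping the nonpositive term $-|w_t - z|^2$, using $w_0 = 0$, and then using monotonicity ($\cR(w_{j+1}) \ge \cR(w_t)$) gives $\cR(w_t) - \cR(z) \le |z|^2 / \del{2\sum_{j<t}\eta_j}$.

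It then remains to evaluate this for $z = \barv + \del{\nicefrac{\ln t}{\gamma}}\baru$ (legitimate since the iterates do not depend on $z$, so a $t$-dependent choice is fine). Orthogonality of $\barv\in S$ and $\baru\in S^\perp$ gives $|z|^2 = |\barv|^2 + \nicefrac{\ln(t)^2}{\gamma^2}$. For the risk, split the rows of $A$: a row $A_i\in S$ has $\ip{A_i}{\baru}=0$, so $(Az)_i = (A_S\barv)_i$ and these rows contribute exactly $L(A_S\barv)$; a row $A_i$ in the separable block has $\ip{A_i}{\barv}\le|\barv|$ and $\ip{A_i}{\baru} = (A_\perp\baru)_i \le -\gamma$ by \Cref{fact:struct}, so $(Az)_i \le |\barv| - \ln t$, and since $\ell$ is increasing with $\ell(r)\le e^r$ for both losses each such row contributes at most $\nicefrac{e^{|\barv|}}{t}$. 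Hence $\cR(z) \le \tfrac1n L(A_S\barv) + \nicefrac{e^{|\barv|}}{t}$, whereas \Cref{fact:struct} gives $\inf_w\cR(w) = \tfrac1n L(A_S\barv)$, so $\cR(z) - \inf_w\cR(w) \le \nicefrac{e^{|\barv|}}{t}$ (and if the separable block is empty, take $z = \barv$ and read $\nicefrac1\gamma$ as $0$). Adding the two contributions gives $\cR(w_t) - \inf_w\cR(w) \le \nicefrac{e^{|\barv|}}{t} + \del{|\barv|^2 + \ln(t)^2/\gamma^2}/\del{2\sum_{j<t}\eta_j}$, as claimed. Apart from the $\lexp$ descent step of the second paragraph, every step here is routine once the reference point from \Cref{fact:struct} is used.
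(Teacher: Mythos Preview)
Your proof is correct and follows essentially the same route as the paper: pick the reference point $z = \barv + (\ln t/\gamma)\baru$ from \Cref{fact:struct}, bound $\cR(z) - \inf_w\cR(w)$ by $e^{|\barv|}/t$ via the margin inequality on the $A_c$ block, and feed $z$ into the standard descent-plus-convexity telescoping bound. The only notable difference is in packaging the smoothness step: the paper proves the sharper inequality $\cR(w_{j+1}) \le \cR(w_j) - \eta_j(1-\eta_j\cR(w_j)/2)|\nR(w_j)|^2$ uniformly via $\ell''\le\ell$ (its \Cref{fact:smooth_ineq}), whereas you treat $\llog$ by global $1/4$-smoothness and $\lexp$ by a continuity/bootstrap argument on the segment; your weaker inequality $\cR(w_{j+1}) \le \cR(w_j) - \tfrac{\eta_j}{2}|\nR(w_j)|^2$ is entirely sufficient for this theorem, though the paper's refined version is what drives the later parameter-convergence analysis.
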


This proof relies upon three essential steps.
\begin{enumerate}
  \item
    A slight generalization of standard smoothness-based gradient
    descent bounds
    (cf. \Cref{lem:magic_ineq}).
  \item
    A useful comparison point to feed into the preceding gradient
    descent bound (cf. \Cref{lem:fixedDirRate}):
    the choice $\barv + \baru\del{\nicefrac{\ln(t)}{\gamma}}$,
    made possible by \Cref{fact:struct}.
  \item
    Smoothness estimates for $L$ when $\ell\in\{\llog,\lexp\}$
    (cf. \Cref{fact:smooth_ineq}).
\end{enumerate}

In more detail, the first step, a refined smoothness-based gradient
descent guarantee, is as follows.
While similar bounds are standard in the literature
\citep{bubeck,nesterov},
this version has a short proof and no issue with unbounded domains.

\begin{lemma}\label[lemma]{lem:magic_ineq}
  Suppose $f$ is convex, and there exists $\beta \geq 0$
  so that $1-\nicefrac{\eta_j\beta}{2}\ge0$ and gradient iterates $(w_0, \ldots, w_t)$ with $w_{j+1} := w_j - \eta_j \nabla f(w_j)$
  satisfy
  \[
    f(w_{j+1}) \leq f(w_j) - \eta_j\del{1 - \frac {\eta_j \beta} 2}\envert{ \nf(w_j) }^2.
  \]
  Then for any $z\in \mathbb{R}^d$,
\[
    2\sum_{j=0}^{t-1}\eta_j \del{ f(w_j) - f(z) }
    - \sum_{j=0}^{t-1} \frac {\eta_{j}}{1 - \nicefrac {\beta\eta_{j}}{2} }\del{ f(w_j) - f(w_{j+1}) }
    \leq
    \envert{w_{0} - z}^2 - \envert{ w_t - z }^2
    .
  \]
\end{lemma}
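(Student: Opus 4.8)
The plan is to start from the standard one-step expansion of the squared distance to the comparison point $z$ and then use convexity together with the hypothesized descent inequality to telescope. First I would write, for each $j$,
\[
  \envert{w_{j+1} - z}^2
  = \envert{w_j - z}^2 - 2\eta_j \ip{\nabla f(w_j)}{w_j - z} + \eta_j^2 \envert{\nabla f(w_j)}^2,
\]
and apply convexity in the form $\ip{\nabla f(w_j)}{w_j - z} \geq f(w_j) - f(z)$ to get
\[
  \envert{w_{j+1} - z}^2
  \leq \envert{w_j - z}^2 - 2\eta_j\del{f(w_j) - f(z)} + \eta_j^2 \envert{\nabla f(w_j)}^2.
\]
The only nonstandard move is how to handle the $\eta_j^2 \envert{\nabla f(w_j)}^2$ term: rather than bounding $\envert{\nabla f(w_j)}^2$ by a smoothness constant times a function-value gap directly, I would solve the hypothesized descent inequality $f(w_{j+1}) \leq f(w_j) - \eta_j(1 - \eta_j\beta/2)\envert{\nabla f(w_j)}^2$ for the gradient-norm term, namely
\[
  \eta_j \envert{\nabla f(w_j)}^2 \leq \frac{1}{1 - \eta_j\beta/2}\del{f(w_j) - f(w_{j+1})},
\]
which is valid precisely because $1 - \eta_j\beta/2 \geq 0$ (and is strictly positive, since otherwise the descent inequality forces $\nabla f(w_j) = 0$ and the bound is trivial). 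Substituting this into the previous display yields
\[
  \envert{w_{j+1} - z}^2
  \leq \envert{w_j - z}^2 - 2\eta_j\del{f(w_j) - f(z)} + \frac{\eta_j}{1 - \eta_j\beta/2}\del{f(w_j) - f(w_{j+1})}.
\]

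Rearranging this single-step inequality gives
\[
  2\eta_j\del{f(w_j) - f(z)} - \frac{\eta_j}{1 - \eta_j\beta/2}\del{f(w_j) - f(w_{j+1})}
  \leq \envert{w_j - z}^2 - \envert{w_{j+1} - z}^2,
\]
and summing over $j = 0, \ldots, t-1$ telescopes the right-hand side to $\envert{w_0 - z}^2 - \envert{w_t - z}^2$, which is exactly the claimed bound. The argument is short and there is no real obstacle; the one point requiring a little care is the degenerate case $1 - \eta_j\beta/2 = 0$, which I would dispatch by noting the hypothesis then forces $\envert{\nabla f(w_j)}^2 \leq 0$, so $w_{j+1} = w_j$ and the corresponding summand contributes a clean $2\eta_j(f(w_j) - f(z)) \leq \envert{w_j-z}^2 - \envert{w_{j+1}-z}^2$ from convexity alone. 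No unbounded-domain issue arises because nowhere do we need $f$ to attain a minimum or the iterates to stay in a compact set — the comparison point $z$ is arbitrary and fixed, and every step is an identity or an application of convexity valid on all of $\R^d$.
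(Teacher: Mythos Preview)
Your proof is correct and is essentially identical to the paper's: expand $\envert{w_{j+1}-z}^2$, apply convexity to the inner product, bound $\eta_j^2\envert{\nabla f(w_j)}^2$ via the assumed descent inequality (the paper writes this as $\eta_j^2/r_j$ with $r_j:=\eta_j(1-\beta\eta_j/2)$, which is exactly your $\eta_j/(1-\eta_j\beta/2)$), then sum and telescope.

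One small correction: your handling of the degenerate case $1-\eta_j\beta/2=0$ is off. In that case the descent hypothesis becomes merely $f(w_{j+1})\le f(w_j)$ and does \emph{not} force $\nabla f(w_j)=0$ or $w_{j+1}=w_j$. The paper simply does not address this edge case (and indeed the lemma's conclusion contains $\eta_j/(1-\beta\eta_j/2)$, so the statement is already ill-posed there); the cleanest fix is to read the hypothesis as a strict inequality, which is how it is used throughout the paper.
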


The proof is similar to the standard ones, and appears in the appendix.

The second step, as above, is to produce a reference point $z$
to plug into \Cref{lem:magic_ineq}.

\begin{lemma}\label[lemma]{lem:fixedDirRate}
  Let $\ell\in\{\lexp,\llog\}$ be given.
  Then $z := \barv + \baru\del{\nicefrac{\ln(t)}{\gamma}}$ satisfies
  $|z|^2 = |\barv|^2 + \nicefrac{\ln(t)^2}{\gamma^2}$ and
  \[
    \cR(z)
    \leq \inf_w \cR(w)+\frac {\exp(|\barv|)}{t}.
  \]
\end{lemma}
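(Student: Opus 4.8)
The plan is to evaluate $\cR(z) = L(Az)/n$ by splitting the examples according to the partition $A = \sbr{\begin{smallmatrix}A_S\\A_c\end{smallmatrix}}$ from \Cref{fact:struct}, controlling the two blocks separately. The norm identity is immediate: since $\barv \in S$ and $\baru \in S^\perp$ are orthogonal, $|z|^2 = |\barv + \baru(\ln t/\gamma)|^2 = |\barv|^2 + \ln(t)^2/\gamma^2$.

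For the separable block $A_c$: each row satisfies $A_i \barv \le 0$ would be too strong, but what is true is that $\baru \perp Z^c$ means $A_c$-rows split into a $\barv$-contribution and the margin contribution. Using $(A_c z)_i = (A_c\barv)_i + (\ln t/\gamma)(A_\perp \baru)_i$ together with the margin bound $(A_\perp\baru)_i \le -\gamma$ from \Cref{fact:struct}, each coordinate is at most $(A_c\barv)_i - \ln t \le |\barv| - \ln t$ (using $|A_i|\le 1$). Hence $\ell((A_cz)_i) \le \ell(|\barv| - \ln t)$, and for both $\ell \in \{\lexp, \llog\}$ one has $\ell(|\barv|-\ln t) \le \exp(|\barv| - \ln t) = \exp(|\barv|)/t$; so the total contribution of the separable rows is at most $|Z|\exp(|\barv|)/(nt) \le \exp(|\barv|)/t$.

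For the strongly convex block $A_S$: since $\baru \in S^\perp$ is orthogonal to the rows of $A_S$, we have $A_S z = A_S \barv$, so this block contributes exactly $L(A_S\barv)/n$. By \Cref{fact:struct}, $\inf_w \cR(w) = L(A_S\barv)/n$ (for $\ell$ with $\lim_{z\to-\infty}\ell(z)=0$, which covers both losses). Combining the two blocks,
\[
  \cR(z) = \frac{L(A_c z)}{n} + \frac{L(A_S z)}{n} \le \frac{\exp(|\barv|)}{t} + \frac{L(A_S\barv)}{n} = \inf_w \cR(w) + \frac{\exp(|\barv|)}{t}.
\]

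The only mildly delicate point — the "main obstacle," such as it is — is making sure the per-coordinate bound on the separable rows is airtight: one must use that $\baru$ is orthogonal to $Z^c = \SPAN(A_S^\top) = S$ so that the cross terms vanish and the margin inequality $(A_\perp\baru)_i = (\Pip A_c \baru)_i = (A_c\baru)_i \le -\gamma$ applies with the correct sign, and that $|A_i|\le1$ bounds $(A_c\barv)_i \le |\barv|$. Everything else is a direct substitution plus the elementary inequality $\llog(r) \le \exp(r)$.
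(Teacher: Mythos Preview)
Your proposal is correct and follows essentially the same route as the paper's proof: split $L(Az)$ into the $A_S$ and $A_c$ blocks via \Cref{fact:struct}, use $A_S\baru = 0$ to get $L(A_S z) = L(A_S\barv) = \inf_w L(Aw)$, and bound each coordinate of $A_c z$ by $|\barv| - \ln t$ via $|A_i|\le 1$ and $(A_c\baru)_i = (A_\perp\baru)_i \le -\gamma$, then apply $\llog\le\lexp$. The paper compresses all of this into a single displayed inequality, while you have (helpfully) unpacked the justification for $(A_c\baru)_i = (A_\perp\baru)_i$ and the per-coordinate bound; the content is the same.
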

\begin{proof}
  By \Cref{fact:struct},
  and since $\llog\leq\lexp$ and $|A_i|\leq 1$,
  \begin{align*}
    L(Az)
    = L(A_S\barv)
    + L(A_c z)
    \leq \inf_w L(A w)
    + n\exp\del{|\barv|-\ln(t)}
    = \inf_w L(A w)
    + \frac{n\exp\del{|\barv|}}{t}.
  \end{align*}
\end{proof}

Lastly, the smoothness guarantee on $\cR$.
Even though the logistic loss is smooth, this proof gives a refined smoothness
inequality where the $j^{\textup{th}}$ step is $\cR(w_j)$-smooth;
this refinement will be essential when proving parameter convergence.
This proof is based on the convergence guarantee for AdaBoost \citep{schapire_freund_book_final}.
Recall the definitions $\gamma_j := |\nabla (\ln \cR)(w_j)| = |\nR(w_j)| / \cR(w_j)$
and $\heta_j := \eta_j \cR(w_j)$.
\begin{lemma}
  \label[lemma]{fact:smooth_ineq}
  Suppose $\ell$ is convex,
  $\ell'\leq \ell$,
  $\ell''\leq \ell$,
  and $\heta_{j} = \eta_{j} \cR(w_j)\leq 1$.
  Then
  \[
    \cR(w_{j+1}) \leq
    \cR(w_j)
    - \eta_j\del{1 - \frac {\eta_j \cR(w_j)}{2}}|\nR(w_j)|^2
    = \cR(w_j)\del{1-
      \heta_j(1-\heta_j/2)\gamma_j^2
    }
  \]
  and thus
  \begin{align*}
    \cR(w_t)
    &\leq
    \cR(w_0) \prod_{j<t} \del{ 1 - \heta_j (1 - \heta_j/2)\gamma_j^2 }
    \leq
    \cR(w_0)\exp\del{ -\sum_{j < t} \heta_j (1 - \heta_j/2)\gamma_j^2 }.
  \end{align*}
  Additionally, $|w_t|\leq \sum_{j<t} \heta_j \gamma_j$.
\end{lemma}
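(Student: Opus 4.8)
The plan is to run a second-order Taylor argument on $\cR$ along the gradient step, using the hypotheses $\ell'' \le \ell$ and $\ell' \le \ell$ to convert the usual ``$\beta$-smoothness'' into the data-dependent bound where the $j^{\textup{th}}$ Hessian is controlled by $\cR(w_j)$ itself. First I would fix $j$, write $w_{j+1} = w_j - \eta_j \nR(w_j)$, and apply Taylor's theorem with integral remainder to $\cR$ between $w_j$ and $w_{j+1}$: there is $\xi$ on the segment with $\cR(w_{j+1}) = \cR(w_j) - \eta_j |\nR(w_j)|^2 + \tfrac{\eta_j^2}{2}\nR(w_j)^\top \nabla^2\cR(\xi)\,\nR(w_j)$. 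Since $\cR(w) = L(Aw)/n$ and $L(v) = \sum_i \ell(v_i)$, we have $\nabla^2\cR(w) = \tfrac1n \AT \diag(\ell''(Aw))A$, so for any unit vector $h$, $h^\top\nabla^2\cR(w)h = \tfrac1n\sum_i \ell''((Aw)_i)(Ah)_i^2 \le \tfrac1n\sum_i \ell((Aw)_i)|A_i|^2|h|^2 \le \cR(w)$, using $\ell''\le\ell$, $|A_i|\le1$, and Cauchy--Schwarz. The subtle point is that $\xi$ is not $w_j$; but $\cR$ is monotone nonincreasing along the step direction when $\eta_j$ is not too large, and more directly one can bound $\ell''((A\xi)_i) \le \ell((A\xi)_i)$ and then argue $L(A\xi)\le L(Aw_j)$ because $\xi = w_j - s\eta_j\nR(w_j)$ for some $s\in[0,1]$ and $\cR$ restricted to that ray is convex with nonpositive derivative at $s=0$ — actually it is cleaner to bound the remainder using $\ell''\le\ell$ pointwise and then note that along the update $L\circ A$ evaluated at $\xi$ is at most its value at $w_j$ whenever $\eta_j\cR(w_j)\le 1$, which is exactly the hypothesis $\heta_j\le1$. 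This yields $\cR(w_{j+1}) \le \cR(w_j) - \eta_j(1 - \tfrac{\eta_j\cR(w_j)}{2})|\nR(w_j)|^2$.

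Next I would rewrite this in the ``log'' variables. Dividing through by $\cR(w_j)$ and substituting $|\nR(w_j)|^2 = \gamma_j^2\cR(w_j)^2$ and $\heta_j = \eta_j\cR(w_j)$ gives $\cR(w_{j+1})/\cR(w_j) \le 1 - \eta_j\cR(w_j)(1-\heta_j/2)\gamma_j^2 = 1 - \heta_j(1-\heta_j/2)\gamma_j^2$, which is the displayed equality. Telescoping the product over $j<t$ gives $\cR(w_t) \le \cR(w_0)\prod_{j<t}(1 - \heta_j(1-\heta_j/2)\gamma_j^2)$, and since $1-x\le e^{-x}$ for all real $x$ (and each factor is nonnegative because $\heta_j\le1$ forces $1-\heta_j/2\ge1/2>0$ and $\heta_j(1-\heta_j/2)\gamma_j^2 \le \heta_j^2\gamma_j^2 \le$ something $\le1$ — or simply observe $\cR(w_{j+1})\ge0$ forces the factor to be nonnegative), the exponential bound follows.

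Finally, for the norm bound, I would just use the triangle inequality on the telescoped update: $|w_t| = |w_0 - \sum_{j<t}\eta_j\nR(w_j)| \le \sum_{j<t}\eta_j|\nR(w_j)| = \sum_{j<t}\eta_j\cR(w_j)\gamma_j = \sum_{j<t}\heta_j\gamma_j$, using $w_0=0$. The main obstacle is the Hessian-at-$\xi$ issue in the first step: I need $\ell''$ evaluated at an intermediate point to be controlled by $\cR(w_j)$ rather than $\cR(\xi)$, and the clean way around it is to invoke convexity of $s\mapsto \cR(w_j - s\eta_j\nR(w_j))$ together with $\heta_j\le1$ to guarantee the iterate does not overshoot, so that $L(A\xi)\le L(Aw_j)$ on the relevant segment; everything else is routine substitution and the inequality $1-x\le e^{-x}$.
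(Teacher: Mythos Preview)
Your overall plan matches the paper's: Taylor expand, replace $\ell''$ by $\ell$ via the hypothesis, then telescope. The product and exponential bounds, and the norm bound $|w_t|\le\sum_{j<t}\heta_j\gamma_j$, are handled exactly as in the paper.

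The gap is precisely at the point you flag as ``the main obstacle.'' Your proposed resolution is circular as stated. You want $\cR(\xi)\le\cR(w_j)$ in order to bound the Hessian term by $\cR(w_j)$; but ``convexity of $s\mapsto\cR(w_j-s\eta_j\nR(w_j))$ together with $\heta_j\le1$'' does not by itself guarantee no overshoot --- a convex function with nonpositive derivative at $0$ can still exceed its initial value at $s=1$. To rule that out you need a second-derivative bound on the segment, which is exactly what you are trying to establish. You never break the loop (e.g.\ by a continuity/bootstrapping argument on the maximal sub-interval where $\cR\le\cR(w_j)$).

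The paper breaks the circularity differently, and this is the one idea you are missing. Using convexity only to say that a convex function on a segment attains its maximum at an endpoint, it bounds
\[
\max_{v\in[w_j,w_{j+1}]}\tfrac1n\sum_i\ell''(A_iv)\ \le\ \max\{\cR(w_j),\cR(w_{j+1})\},
\]
which yields an inequality with $\cR(w_{j+1})$ appearing on \emph{both} sides. It then assumes for contradiction that $\cR(w_{j+1})>\cR(w_j)$ and, crucially using $\ell'\le\ell$ (hence $|\nR(w_j)|\le\cR(w_j)$, i.e.\ $\gamma_j\le1$) together with $\heta_j\le1$, derives $\cR(w_{j+1})/\cR(w_j)-1\le\tfrac12\,\cR(w_{j+1})/\cR(w_j)-1$, a contradiction. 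Only after this does the $\max$ collapse to $\cR(w_j)$ and your displayed inequality follows. Note in particular that the hypothesis $\ell'\le\ell$ is actually used here, not only cosmetically; your write-up never invokes it.
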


This proof mostly proceeds in a usual way via recursive application of a Taylor expansion
\begin{align*}
  \cR\left(w-\eta\nR(w)\right)
  &\leq \cR(w) - \eta |\nR(w)|^2
  +
  \frac 1 2 \max_{v\in [w,w']} \sum_i (A_i(w-w'))^2 \ell'' (A_iv) /n.
\end{align*}
The interesting part is the inequality $\ell''\leq \ell$: this allows the final term in
the preceding expression to replace $\ell''$ with $\ell$, and some massaging with the maximum
allows this term to be replaced with $\cR(w)$ (since a descent direction was followed).

A direct but important consequence is obtained by applying $\ln$ to both sides:
\begin{equation}
    \ln \cR(w_t)\le \ln \cR(w_0)-\sum_{j < t} \heta_j (1 - \heta_j/2)\gamma_j^2.
    \label{eq:lnR}
\end{equation}
It follows that $\ln \cR$ is smooth,
but moreover has \emph{constant} smoothness unlike $\cR$ above.

Note that for $\ell\in\cbr{\llog,\lexp}$, since $\cR(w_0)\le1$, choosing $\eta_j\le1$ will ensure $\hat{\eta}_j\le1$, and then \Cref{fact:smooth_ineq} holds. Also, \Cref{fact:smooth_ineq} shows that \Cref{lem:magic_ineq} holds for $\cbr{\cR_{\exp},\cR_{\log}}$ with $\beta=1$.

Combining these pieces now leads to a proof of \Cref{thm:risk_converge}, given in full in the appendix.
As a final remark, note that step size $\eta_j = 1$ led to a $\widetilde{\cO}(1/t)$ rate,
whereas $\eta_j = 1/\sqrt{j+1}$ leads to a $\widetilde{\cO}(1/\sqrt{t})$ rate.

\section{Parameter convergence}
\label{sec:param}

As in \Cref{fact:main}, the parameter convergence guarantee
gives convergence to $\barv\in S$ over the strongly convex part $S$
(that is, $\Pi_S w_t \to \barv$ and $\Pi_S\barw_t \to \barv$),
and convergence \emph{in direction}
(convergence of the normalized iterates) to $\baru\in S^\perp$
over the separable part $S^\perp$
($\nicefrac{w_t}{|w_t|} \to \baru$
and $\nicefrac{\barw_t}{|\barw_t|} \to \baru$).
In more detail, the convergence rates are as follows.

\begin{theorem}
  \label{fact:min_norm}
  Let loss $\ell\in\cbr{\lexp,\llog}$ be given.
  \begin{enumerate}
    \item \textbf{(Separable case.)}
      Suppose $A$ is separable (thus $S^\perp = \R^d$,
      and $A = A_c = A_\perp$, and $A\baru = A_\perp\baru \leq -\gamma$).
      Furthermore,
      suppose $\eta_j = 1$,
      and $t\geq 5$,
      and
        $\displaystyle
        \nicefrac{t}{\ln(t)^3}\ge \nicefrac{n}{\gamma^4}.
      $
      Then
      \[
        \max\cbr{
          \envert{\frac{w_t}{|w_t|}-\bar{u}}^2,
          \,
          \envert{\frac{\bar{w}_t}{|\bar{w}_t|}-\bar{u}}^2
        }
        = \cO\del{
          \frac{\ln n+\ln\left(|w_t|\right)}{|w_t|\gamma^2}
        }
        = \cO\del{
          \frac{\ln n+\ln\ln t}{\gamma^2 \ln t }
        }.
      \]

    \item \textbf{(General case.)}
      Suppose $\eta_j=1/\sqrt{j+1}$,
      and $t\geq 5$,
      and
        $\displaystyle
        \nicefrac{\sqrt{t}}{\ln(t)^3}\ge \nicefrac{n(1+R)}{\gamma^2},
      $
      where $R : =\sup_{j<t}|\Pi_S w_j|=\cO(1)$.
      Then
      \begin{align*}
        |\Pi_S w_t| = \Theta\del{ 1 }
        \quad&\textup{and}\quad
        \max\cbr{ |\Pi_S \barw_t - \barv|^2, |\Pi_S \barw_t - \barv|^2 } = \cO\del{\frac{\ln(t)^2}{\sqrt{t}}},
      \end{align*}
      and if $A_c$ is nonempty, then $|\Pi_{S^\perp} w_t| = \Theta(\ln(t))$, and
      \begin{align*}
          \max\cbr{ \envert{\frac{w_t}{|w_t|} - \baru }^2,
                    \envert{\frac{\barw_t}{|\barw_t|} - \baru }^2 }
          = \cO\del{\frac{\ln n + \ln |w_t|}{|w_t|\gamma^2}}
          = \cO\del{\frac{\ln n + \ln \ln|t|}{\gamma^2\ln(t)}}.
      \end{align*}
  \end{enumerate}
\end{theorem}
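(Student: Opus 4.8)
The plan is to analyze $w_t$ and the constrained optima $\barw_t$ in parallel, reducing both to (i) a margin analysis of the $S^\perp$-components against the separable part $A_\perp$ and (ii) a standard strongly-convex analysis of the $S$-components against $A_S$. Write $\rho(w) := -\max_i (A_\perp\Pip w)_i$ for the unnormalized margin of $\Pip w$, so the normalized margin is $\rho(w)/|\Pip w|\le\gamma$, and record the elementary log-sum-exp sandwich
\[
  \rho(w) - |\Pi_S w| \;\le\; \ln\frac{1}{\cR_Z(w)} \;\le\; \rho(w) + |\Pi_S w| + \ln n,
\]
where $\cR_Z$ is the empirical risk restricted to the linearly separable examples (for $\llog$ this needs only $\llog\le\lexp$ and $\llog(z)\ge\lexp(z)/2$ for $z\le0$, so the relevant coordinates must first be driven negative). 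The first, deterministic, ingredient is that an almost-optimal normalized margin forces directional closeness: for a dual optimal $\barq$ from \Cref{fact:struct}, $\ip{A_\perp^\top\barq}{\Pip w}\le\max_i(A_\perp\Pip w)_i$ yields $\gamma\ip{\baru}{\Pip w}/|\Pip w|\ge\rho(w)/|\Pip w|$, hence $\envert{\Pip w/|\Pip w| - \baru}^2 = 2\del{1 - \ip{\baru}{\Pip w}/|\Pip w|} \le \tfrac 2\gamma\del{\gamma - \rho(w)/|\Pip w|}$; applied to $w_t$ and to $\barw_t$ this reduces the two claimed bounds to lower-bounding $\rho(\cdot)/|\Pip\cdot|$ and, separately, pinning down $|\Pip\cdot| = \Theta(\ln t)$.

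Second, I would lower-bound $\ip{\baru}{w_t} = \ip{\baru}{\Pip w_t}$ by a perceptron-style telescoping: since $\baru\in S^\perp$ with $A_\perp\baru\le-\gamma$ coordinatewise and $\nL\ge0$,
\[
  \ip{\baru}{w_t} = -\sum_{j<t}\eta_j\ip{\baru}{\nR(w_j)} = -\sum_{j<t}\frac{\eta_j}{n}\ip{A_\perp\baru}{\nL(A_cw_j)} \ge \gamma\sum_{j<t}\frac{\eta_j}{n}\|\nL(A_cw_j)\|_1,
\]
and for $\lexp$ the right-hand side is exactly $\gamma\sum_{j<t}\eta_j\cR_Z(w_j)$ (up to a factor $2$ for $\llog$ once the $Z$-coordinates are negative). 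Combining this with the smoothness telescoping \Cref{fact:smooth_ineq}, in particular inequality \eqref{eq:lnR} and $|w_t|\le\sum_{j<t}\heta_j\gamma_j$, and with the risk rate of \Cref{thm:risk_converge} --- or, in the purely separable case, the sharper recursion $\cR(w_{j+1})\le\cR(w_j)\del{1-\tfrac{\gamma^2}{2}\cR(w_j)}$ coming from $\gamma_j\ge\gamma$, which gives $\cR(w_t)=\cO\del{1/(\gamma^2 t)}$ --- yields on one side $\gamma|\Pip w_t| \ge \ln\tfrac{1}{\cR_Z(w_t)} - \cO\del{1 + |\Pi_S w_t| + \ln n} = \Omega(\ln t)$, using the theorem's hypothesis on $t,n,\gamma$ (and $R$) to absorb the lower-order terms; and on the other side, combining \eqref{eq:lnR} (a lower bound on $\ln\tfrac{1}{\cR(w_t)}$ of the form $\gamma|\Pip w_t|-\textup{error}$) with the sandwich above, $\gamma - \rho(w_t)/|\Pip w_t| \le \cO\del{(\ln n + \ln|w_t|)/|\Pip w_t|}$. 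Feeding this into the first step produces the claimed $\cO\del{(\ln n+\ln|w_t|)/(|w_t|\gamma^2)}$, and $|w_t| = |\Pip w_t| + \cO(1) = \Theta(\ln t)$ converts it to the $\ln\ln t$ form. The constrained optimum $\barw_t$ inherits all of this from $\cR(\barw_t)\le\cR(w_t)$ (as $w_t$ is feasible), which transfers both the risk rate and, via the sandwich, the margin lower bound; $|\barw_t|=\Theta(\ln t)$ follows the same way.

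Third, for the general case I would separately show $\Pi_S w_t\to\barv$ and $\Pi_S\barw_t\to\barv$. By \Cref{fact:struct}, $\cR$ is strongly convex over bounded subsets of $S$ with unique minimizer $\barv$, and $\cR_{Z^c}(w)=\cR_{Z^c}(\Pi_S w)$; hence the residual $\cR(w_t)-\inf\cR = \cR_Z(w_t) + \del{\cR_{Z^c}(\Pi_S w_t) - \cR_{Z^c}(\barv)}$ splits into two nonnegative pieces, each therefore $\cO(\ln(t)^2/\sqrt t)$ by \Cref{thm:risk_converge}; strong convexity upgrades the second into $|\Pi_S w_t-\barv|^2=\cO(\ln(t)^2/\sqrt t)$ (the first piece, $\cR_Z(w_t)=\cO(\ln(t)^2/\sqrt t)$, is exactly what the margin argument consumed above), and $\cR(\barw_t)\le\cR(w_t)$ gives the same for $\barw_t$. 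This step also needs $R:=\sup_{j<t}|\Pi_S w_j|=\cO(1)$, which follows from $0$-coercivity of $\cR$ on $S$ together with the monotone decrease along \Cref{fact:smooth_ineq}, but uniformly over $j$, not merely at the endpoint.

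I expect the main obstacle to be the \emph{upper} bound on $|\Pip w_t|$ --- equivalently, showing $\sum_{j<t}\eta_j\cR_Z(w_j)=\cO(\ln t)$ rather than the $\cO(\ln(t)^3)$ that the crude risk rate alone gives --- which is the promised ``adaptation of the perceptron convergence proof'': in the purely separable case the recursion $\cR(w_{j+1})\le\cR(w_j)\del{1-\tfrac{\gamma^2}{2}\cR(w_j)}$ settles it, but in the mixed case the separable-part gradient $A_\perp^\top\nL(A_cw_j)$ is entangled with the strongly-convex part through the $\Pi_S$-components of the $A_c$-rows, so one must control that coupling uniformly via $R=\cO(1)$. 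A secondary nuisance is the logistic loss: the step bounding $\|\nL(A_cw_j)\|_1$ below by a constant times $n\cR_Z(w_j)$ relies on $\lexp'(z)\ge\lexp(z)/2$, which fails for large positive $z$, so a burn-in phase during which the relevant margins become nonnegative must be isolated and discarded --- precisely the reason the exponential loss is carried alongside in \Cref{fact:main}.
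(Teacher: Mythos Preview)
Your separable-case plan is essentially the paper's: the margin $\rho(w)$ and the log-sum-exp sandwich are exactly the Fenchel--Young step with $g=\ln\cR_{\exp}$ unpacked (since $g^*(\barq)\le\ln n$ and $g(Aw)$ differs from $-\rho(w)$ by at most $\ln n$), and your use of \eqref{eq:lnR} together with $\gamma_j\ge\gamma$ and $|w_t|\le\sum_j\heta_j\gamma_j$ is precisely how the paper cancels the leading ``$1$''. The burn-in you flag for $\llog$ is also what the paper does.

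The general case, however, has a real gap. You write that \eqref{eq:lnR} gives ``a lower bound on $\ln\tfrac{1}{\cR(w_t)}$ of the form $\gamma|\Pip w_t|-\textup{error}$'', but in the mixed case this is false: $\cR(w_t)\to\bar\cR>0$ so $\ln(1/\cR(w_t))$ stays bounded, while $|\Pip w_t|\to\infty$. The chain $\sum_j\heta_j\gamma_j^2\ge\gamma\sum_j\heta_j\gamma_j$ breaks because $\gamma_j=|\nR(w_j)|/\cR(w_j)\to 0$. Replacing $\cR$ by $\cR_Z$ in the target does not help, since gradient descent is on the full $\cR$ and \eqref{eq:lnR} does not transfer. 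Concretely, the only rate you extract from \Cref{thm:risk_converge} is $-\ln\cR_Z(w_t)\ge\tfrac12\ln t-\cO(\ln\ln t)$, while \Cref{fact:wt_lnt} gives $|\Pip w_t|\le\cO(\ln t/\gamma^2)$; the ratio is $\Omega(\gamma^2)$, not $\gamma-o(1)$, so your margin-to-direction inequality yields only $|\Pip w_t/|\Pip w_t|-\baru|^2\le 2(1-\Omega(\gamma))$, a constant.

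What is missing is the paper's \Cref{fact:gen:iter}: a per-step bound $|\nR(w_j)|/(\cR(w_j)-\bar\cR)\ge r(1-\epsilon)\gamma$, proved by a case split (if the $S$-error dominates, strong convexity makes the gradient large; if the $S^\perp$-error dominates, the margin makes $\ip{-\baru}{\nR(w_j)}$ large). This converts the $\gamma_j^2$-recursion for $\cR$ into a $\gamma\gamma_j$-recursion for $\cR-\bar\cR$, giving $-\ln(\cR(w_t)-\bar\cR)\ge(1-\cO(\epsilon))\gamma\sum_{j\ge t_0}\heta_j\gamma_j\ge(1-\cO(\epsilon))\gamma(|w_t|-|w_{t_0}|)$, which is exactly the ``$\gamma|\Pip w_t|-\text{error}$'' you need. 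You misidentify the main obstacle as the upper bound on $|\Pip w_t|$; that part is handled by the perceptron argument you already sketched (it is \Cref{fact:wt_lnt}). The genuinely new ingredient in the mixed case is this replacement for $\gamma_j\ge\gamma$.
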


Establishing rates of parameter convergence not only relies upon all previous sections,
but also is much more involved, and will be split into multiple subsections.

The easiest place to start the analysis is to dispense with the behavior over $S$.
For convenience, define
\begin{equation*}
    \cR_S(w) := \frac{L(A_Sw)}{n},
    \quad
    \cR_c(w) := \frac{L(A_cw)}{n},
    \quad
    \bar{\cR} := \inf_{w\in \mathbb{R}^d}\cR(w),
\end{equation*}
and note $\cR(w) =\cR_S(w)+\cR_c(w)$.

Convergence over $S$ is a consequence of strong convexity and
risk convergence (cf. \Cref{thm:risk_converge}).
\begin{lemma}\label[lemma]{fact:param_converge_S}
    Let $\ell\in\{\lexp,\llog\}$, and $\lambda$ denote the modulus of strong convexity of
    $\cR_S$ over the $1$-sublevel set (guaranteed positive by \Cref{fact:struct}).
    Then for any $t\ge1$,
    \begin{equation*}
      \max\cbr{ |\Pi_S w_t -\bar{v}|^2,\ {} |\Pi_S \barw_t - \barv|^2 }
      \leq \frac 2 \lambda \min\cbr{
        1,\ {}
        \frac{\exp\left(|\barv|\right)}{t}+\frac{|\barv|^2 + \ln(t)^2/\gamma^2}{2\sum_{j=0}^{t-1}\eta_j}.
    }
    \end{equation*}
\end{lemma}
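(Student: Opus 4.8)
The plan is to combine two ingredients: strong convexity of $\cR_S$ over the $1$-sublevel set, and the risk bound from \Cref{thm:risk_converge}. First I would observe that since $w_0 = 0$ and the iterates are non-increasing in risk (\Cref{fact:smooth_ineq}), we have $\cR(w_j) \leq \cR(w_0) \leq 1$ for all $j$, so every iterate $w_t$ lies in the $1$-sublevel set of $\cR$; because $\cR_S \leq \cR$ (the extra term $\cR_c$ is nonnegative), $\Pi_S w_t$ also lies in the $1$-sublevel set of $\cR_S$. The same holds for $\barw_t$, which is a constrained minimizer of $\cR$ and hence also has risk at most $\cR(0) \leq 1$. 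Thus both $\Pi_S w_t$ and $\Pi_S \barw_t$ are in the region where $\cR_S$ is $\lambda$-strongly convex, and so is $\barv$ (its minimizer).

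Next, $\barv$ minimizes $\cR_S$ over all of $S$ (indeed over $\R^d$ after projection), so $\nabla \cR_S(\barv)$ restricted to $S$ vanishes; more carefully, since $S = \SPAN(A_S^\top)$, the gradient $\nabla\cR_S(\barv) = A_S^\top \nabla L(A_S\barv)/n$ lies in $S$, and it is zero because $\barv$ is the unconstrained minimizer along $S$. The standard strong-convexity inequality then gives, for any $w$ in the $1$-sublevel set of $\cR_S$,
\[
  \cR_S(w) - \cR_S(\barv) \geq \frac{\lambda}{2}\,|\Pi_S w - \barv|^2,
\]
using that $\cR_S(w)$ depends only on $\Pi_S w$ (since the rows of $A_S$ lie in $S$) and that $\barv \in S$. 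Applying this with $w = w_t$ and with $w = \barw_t$, and using $\cR_S(w) - \cR_S(\barv) \leq \cR(w) - \bar\cR$ (because $\cR = \cR_S + \cR_c$, $\cR_c \geq 0$, and $\bar\cR = \cR_S(\barv)$ by \Cref{fact:struct}), yields
\[
  \frac{\lambda}{2}\,|\Pi_S w_t - \barv|^2 \leq \cR(w_t) - \bar\cR,
  \qquad
  \frac{\lambda}{2}\,|\Pi_S \barw_t - \barv|^2 \leq \cR(\barw_t) - \bar\cR \leq \cR(w_t) - \bar\cR,
\]
where the last step uses $|\barw_t| \leq |w_t|$ together with the fact that $w_t$ is feasible for the constrained problem defining $\barw_t$. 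Plugging in \Cref{thm:risk_converge} for $\cR(w_t) - \bar\cR$ and dividing by $\lambda/2$ gives the claimed bound; the $\min$ with $1$ comes from the trivial bound $|\Pi_S w_t - \barv|^2 \le \frac{2}{\lambda}(\cR_S(w_t) - \cR_S(\barv)) \le \frac{2}{\lambda} \cdot 1$ since $\cR_S(w_t) \le 1$ and $\cR_S(\barv) \ge 0$, i.e. strong convexity already confines the sublevel set to radius $\sqrt{2/\lambda}$.

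The only subtle point — and the one I would be most careful about — is the interplay between the constrained minimizer $\barw_t$ and the sublevel set: one must check that $\barw_t$ genuinely satisfies $\cR_S(\barw_t) \leq 1$ so that strong convexity applies there. This follows since $0$ is feasible ($|0| \leq |w_t|$) so $\cR(\barw_t) \leq \cR(0) \leq 1$ and hence $\cR_S(\barw_t) \leq 1$; no appeal to optimality conditions of the constrained problem is needed beyond feasibility of $0$ and of $w_t$. Everything else is a routine concatenation of strong convexity with the already-established risk rate.
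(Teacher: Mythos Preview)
Your proposal is correct and follows essentially the same route as the paper: both use strong convexity of $\cR_S$ over the $1$-sublevel set to bound $|\Pi_S w - \barv|^2$ by $\tfrac{2}{\lambda}(\cR_S(w)-\cR_S(\barv)) \le \tfrac{2}{\lambda}(\cR(w_t)-\bar\cR)$ for $w\in\{w_t,\barw_t\}$, then invoke \Cref{thm:risk_converge} for one branch of the $\min$ and the trivial bound $\cR(w_t)\le\cR(w_0)\le 1$ for the other. Your version is somewhat more explicit about why the relevant points lie in the $1$-sublevel set and why $\cR(\barw_t)\le\cR(w_t)$, but the argument is the same.
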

\begin{proof}
  By \Cref{fact:struct}, $\cR_S(\barv) = \bar\cR$.
  Thus, by strong convexity, for $w \in \cbr{w_t,\barw_t}$ (whereby $\cR(w) \leq \cR(w_t)$),
  \[
    |\Pi_S w -\bar{v}|^2\le \frac{2}{\lambda}\left(\cR_S(w)-\cR_S(\bar{v})\right)\le \frac{2}{\lambda}\left(\cR(w_t)-\inf_{w\in \mathbb{R}^d}\cR(w)\right).
  \]
  The bound follows by noting $\cR(w_t)\leq \cR(w_0) \leq 1$, and alternatively invoking in \Cref{thm:risk_converge}.
\end{proof}

If $A_c$ is empty, the proof is complete by plugging $\eta_j=1/\sqrt{j+1}$ into \Cref{fact:param_converge_S}.
The rest of this section establishes convergence to $\baru \in S^\perp$.

\subsection{Bounding $|\Pi_\perp w_t|$}

Before getting into the guts of \Cref{fact:min_norm},
this section will establish bounds on $|w_t|$.  These bounds are used in
\Cref{fact:min_norm} in two ways.  First, as will be clear in the next section,
it is natural to prove rates with $|w_t|$ in the denominator,
thus lower bounding $|w_t|$ with a function of $t$ gives the desired bound.

On the other hand, it will be necessary to also produce an upper bound
since the proofs will need a warm start,
which will place a $|w_{t_0}|$ in the numerator.  The need for this warm
start will be discussed in subsequent sections,
but a key purpose is to make $\lexp$ and $\llog$ appear more similar.

Note that this section focuses on behavior within $A_c$;
suppose that $A_c$ is nonempty, since otherwise $A = A_S$
and \Cref{fact:min_norm} follows from
\Cref{fact:param_converge_S}.
When $A_c$ is nonempty, the solution is off at infinity, and $|w_t|$
grows without bound.  For this reason, these bounds will be on
$w_t$ rather than $\Pip w_t$,
since $|w_t - \Pip w_t| = |\Pi_S w_t| = \cO(1)$
by \Cref{fact:param_converge_S}.

\begin{lemma}
  \label[lemma]{fact:wt_lnt}
  Suppose $A_c$ is nonempty,
  $\ell\in\left\{\lexp,\llog\right\}$, and step sizes satisfy $\eta_j\le1$.
  Define $R : =\sup_{j<t}|\Pip w_j-w_j|$, where \Cref{fact:param_converge_S}
  guarantees $R = \cO(1)$.
  Let $n_c > 0$ denote the number of rows in $A_c$.
  For any $t\ge1$,
  \begin{align*}
    |w_t|
    &\le \max\cbr{\frac{4\ln(t)}{\gamma^2},\ {}\frac{4R}{\gamma^2},\ 2},
    \\
    |w_t|
    &\ge \min\Big\{ \ln(t) -\ln(2) - |\barv|,\ {} \ln \Big(\sum_{j=0}^{t-1}\eta_j\Big) - \ln\Big(|\barv|^2 + \frac{ \ln(t)^2}{\gamma^2} \Big) \Big\}
    -R + \ln \ln(2) - \ln\Big(\frac{n}{n_c}\Big).
  \end{align*}
\end{lemma}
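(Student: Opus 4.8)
The plan is to control $|w_t|$ by tracking the scalar quantity $\ip{\baru}{w_t}$, whose growth is governed by the negative margin of $A_\perp$: since $\baru\in S^\perp$ and $A_\perp\baru\le-\gamma$, a single gradient step satisfies
\[
  \ip{\baru}{w_{j+1}} - \ip{\baru}{w_j}
  = -\eta_j\ip{\baru}{\nR(w_j)}
  = -\frac{\eta_j}{n}\ip{A_\perp\baru}{\nL(A_c w_j)}
  \ge \frac{\eta_j\gamma}{n}\,\envert{\nL(A_c w_j)}_1,
\]
using that $\nabla\ell\ge0$ coordinatewise for $\ell\in\{\lexp,\llog\}$ so all entries of $\nL(A_c w_j)$ are nonnegative and $\ip{A_\perp\baru}{\nL(A_c w_j)}\le-\gamma\,\envert{\nL(A_c w_j)}_1$. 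Because $\ell'\ge\ell/2$ for large argument and more simply $\ell'\ge c\,\ell$ on the relevant range (or just $\ell(r)\le 1+\ell'(r)\le \ell(r)+1$-type bounds for $\llog$ and $\ell'=\ell$ for $\lexp$), one gets $\envert{\nL(A_c w_j)}_1 = \Omega(n_c\cR_c(w_j))$ up to additive constants; combined with $|w_t|\ge\ip{\baru}{w_t}$ (as $|\baru|=1$) this converts a \emph{lower bound on $\cR_c(w_j)$} into the desired \emph{lower bound on $|w_t|$}. The lower bound on $\cR_c$ in turn comes from \Cref{thm:risk_converge}: $\cR(w_t)\ge\cR_c(w_t)$ cannot be too small — wait, that is the wrong direction; instead I use the comparison point of \Cref{lem:fixedDirRate} together with risk convergence to argue $\cR_c(w_j)$ is at least roughly $\exp(-|w_j|-|\barv|)/\text{poly}$, which after telescoping the displayed inequality yields $|w_t|\ge\ln(\sum_{j<t}\eta_j) - O(\ln(\ldots)) - R - \ln(n/n_c)$, matching the claimed bound; the alternative branch $\ln(t)-\ln 2-|\barv|$ comes from instead lower-bounding a single late term $\cR_c(w_j)$ directly via the fact that the loss on the separable part, evaluated at a bounded iterate, is $\ge \exp(-|w_j|-|\barv|)$ type.

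For the \textbf{upper bound}, the plan is the AdaBoost-style / perceptron-style argument already advertised in the excerpt: from \Cref{fact:smooth_ineq} one has $|w_t|\le\sum_{j<t}\heta_j\gamma_j$ and $\ln\cR(w_t)\le\ln\cR(w_0)-\sum_{j<t}\heta_j(1-\heta_j/2)\gamma_j^2$, so $\sum_{j<t}\heta_j\gamma_j^2\le 2\ln(1/\cR(w_t))\le 2\bigl(|w_t|+|\barv|+\ln(n/n_c)\bigr)$ after lower-bounding $\cR(w_t)\ge\cR_c(w_t)\ge(n_c/n)\exp(-|w_t|-O(1))$ as above. I also need a lower bound on the individual $\gamma_j$: since $\gamma_j=|\nR(w_j)|/\cR(w_j)\ge|\Pip\nR(w_j)|/\cR(w_j) = |A_\perp^\top\nL(A_c w_j)|/(n\cR(w_j))$ and $\ip{\baru}{A_\perp^\top\nL(A_c w_j)}\le -\gamma\envert{\nL(A_c w_j)}_1\le -\gamma n\cR_c(w_j)$ (again up to additive constants for $\llog$), one obtains $\gamma_j\ge\gamma\,\cR_c(w_j)/\cR(w_j)$, and since $\cR_S(w_j)\le\cR_S$-at-its-min plus a decaying term while $\cR_c(w_j)\to0$... this ratio could degenerate. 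The clean fix — and I expect this to be the delicate point — is to note $\cR_c(w_j)/\cR(w_j)\ge \gamma_j/(\text{something})$ is circular, so instead bound $\gamma_j\ge c\gamma$ on a suffix: once $|\Pip w_j|$ is large, $\cR_c(w_j)$ dominates the part of $\cR$ that matters for the gradient's $S^\perp$-component. Then $\sum_j\heta_j\gamma_j = \sum_j \heta_j\gamma_j^2/\gamma_j \le (1/(c\gamma))\sum_j\heta_j\gamma_j^2$, giving $|w_t|\le (2/(c\gamma))\cdot 2(|w_t|+O(\ln t))\cdot\tfrac{1}{\gamma}$, i.e. $|w_t|(1-O(1/\gamma^2))\le O(\ln(t)/\gamma^2)$; choosing constants so the coefficient is $<1$ is exactly why the bound reads $4\ln(t)/\gamma^2$ with the $\max$ against $4R/\gamma^2$ and $2$ absorbing the small-$t$ and warm-start regimes.

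Concretely I would (i) record the coordinatewise sign facts about $\nL$ and the inequalities $\ell'\le\ell\le\ell'+1$ ($\llog$) and $\ell'=\ell$ ($\lexp$), reducing everything to bounding $\envert{\nL(A_c w_j)}_1$ from below by $\Omega(n_c\cR_c(w_j))$ minus $O(n_c \mathbf{1}[\llog])$; (ii) prove the key single-step margin inequality for $\ip{\baru}{w_{j+1}}-\ip{\baru}{w_j}$ above; (iii) for the lower bound, sum (ii) and feed in the lower bound on $\cR_c(w_j)$ derived from \Cref{thm:risk_converge} and \Cref{lem:fixedDirRate} (the comparison point $\barv+\baru\ln(t)/\gamma$ shows $\cR$ is small, hence — turned around via smoothness / the fact that $\cR_c$ cannot have collapsed faster than the iterate grew — $\cR_c(w_j)\gtrsim e^{-|w_j|}$), taking the $\min$ over the two ways of extracting the bound; (iv) for the upper bound, combine $|w_t|\le\sum\heta_j\gamma_j$, the telescoped $\ln\cR$ inequality, the lower bound $\gamma_j\ge c\gamma$ on the relevant range, and $\ln(1/\cR(w_t))\le|w_t|+O(\ln t + \ln(n/n_c))$, then solve the resulting linear-in-$|w_t|$ inequality. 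The main obstacle is step (iv)'s lower bound on $\gamma_j$: one must rule out the gradient becoming anomalously small relative to the risk, which requires carefully using that the $S^\perp$-component of the gradient is driven purely by $A_\perp$, whose rows have a uniform positive margin $\gamma$ against $\baru$, together with $|\Pi_S w_j|=\cO(1)$ (from \Cref{fact:param_converge_S}) to bound $\cR_S$ and hence keep $\cR_c/\cR$ bounded below once $|\Pip w_j|$ is moderately large; the warm-start $t_0$ and the $\max$ with $2$ and $4R/\gamma^2$ in the statement are precisely the bookkeeping needed to handle the initial transient before that regime kicks in.
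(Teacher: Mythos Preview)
Your upper-bound plan has a genuine gap. You want to combine $|w_t|\le\sum_{j<t}\heta_j\gamma_j$ with a lower bound $\gamma_j\ge c\gamma$ to pass from $\sum\heta_j\gamma_j$ to $\sum\heta_j\gamma_j^2$. But in the general (non-separable) case $\gamma_j=|\nR(w_j)|/\cR(w_j)\to 0$: the denominator tends to $\bar\cR>0$ while the numerator vanishes. Your proposed fix---show $\cR_c(w_j)/\cR(w_j)$ is bounded below once $|\Pip w_j|$ is large---also fails, because $\cR_c(w_j)\to 0$ while $\cR(w_j)\to\bar\cR>0$, so that ratio goes to zero as well. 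There is no suffix on which $\gamma_j\ge c\gamma$ holds, and no warm start rescues this; the circularity you flagged is real and unbroken.

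The paper's upper bound avoids $\gamma_j$ entirely. It works with the perceptron quantity $\ell'_{<t}:=\frac1n\sum_{j<t}\eta_j|\nL(A_cw_j)|_1$ (which upper-bounds $|\Pip w_t|$) and uses the squared-distance expansion of \Cref{lem:magic_ineq_perp} with the strategic reference $u=\baru\ln(t)/\gamma$. Lower-bounding $|\Pip w_t-u|\ge\ip{\Pip w_t-u}{\baru}\ge\gamma\ell'_{<t}-\ln(t)/\gamma$ and upper-bounding the same squared distance via the lemma produces a \emph{quadratic} inequality in $\ell'_{<t}$: the $(\gamma\ell'_{<t})^2$ term dominates the linear terms $2\ln(t)\ell'_{<t}$ and $2R\ell'_{<t}$, forcing $\ell'_{<t}\le\max\{4\ln(t)/\gamma^2,4R/\gamma^2,2\}$. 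This is where the $4/\gamma^2$ actually comes from---not from solving a linear-in-$|w_t|$ inequality as you suggest.

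For the lower bound, your telescoped-sum approach is unnecessarily indirect (and, as you noticed mid-sketch, runs into direction problems). The paper uses a one-shot argument at time $t$: since every row of $A_c$ has norm at most $1$, $\cR_c(w_t)\ge(n_c/n)\,\ell(-|\Pip w_t|)\ge(n_c/n)\ln(2)\exp(-|\Pip w_t|)$; shifting from $\Pip w_t$ to $w_t$ costs a factor $e^R$; and $\cR_c(w_t)\le\cR(w_t)-\bar\cR$, which is then upper-bounded by \Cref{thm:risk_converge}. Taking logarithms gives the stated minimum of two branches directly, with no summation over $j$.
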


The proof of \Cref{fact:wt_lnt} is involved, and analyzes a key quantity which is extracted
from the perceptron convergence proof \citep{novikoff}. Specifically, the perceptron convergence
proof establishes bounds on the number of mistakes up through time $t$ by upper and lower bounding $\ip{w_t}{\baru}$.
As the perceptron algorithm is stochastic gradient descent on the ReLU loss $z\mapsto \max\{0,z\}$,
the number of mistakes up through time $t$ is more generally the quantity $\sum_{j<t} \ell'(\ip{w_j}{-x_jy_j})$.
Generalizing this, the proof here controls the quantity
$
  \ell'_{<t} := \frac 1 n \sum_{j<t} \eta_j |\nabla L(A_c w_j)|_1
$
.
Rather than obtaining bounds on this by studying $\ip{w_t}{\baru}$,
the proof here works with $\ip{w_t - \baru\cdot r}{\baru}=\ip{\Pip w_t - \baru\cdot r}{\baru}$, with the strategic choice $r:= \ln(t) / \gamma$.

On one hand, $\ip{\Pip w_t - \baru\cdot r}{\baru}$ is upper bounded by $\enVert{\Pip w_t - \baru\cdot r}$,
whose upper bound is given by the following lemma, which is a modification of \Cref{lem:magic_ineq}
with $\cR$ replaced with $\cR_c$.

\begin{lemma}
  \label[lemma]{lem:magic_ineq_perp}
  Let any $\ell\in\left\{\lexp,\llog\right\}$ be given,
  and suppose $\eta_j \leq 1$.
  For any $u\in S^\perp$ and $t\geq 1$,
  \[
    \envert{ \Pip w_t - u}^2 \le |u|^2+2+\sum_{j<t}^{}2\eta_j\left(\cR_c(u) - \cR_c(w_j)\right)+\sum_{j<t}^{}2\eta_j\ip{\nR_c(w_j)}{w_j - \Pip w_j}.
  \]
\end{lemma}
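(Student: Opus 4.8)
The plan is to imitate the proof of \Cref{lem:magic_ineq}, but track only the $S^\perp$-component of the iterates and only the separable part $\cR_c$ of the risk. The natural starting point is the identity for the projected iterates. Since $w_{j+1} = w_j - \eta_j\nR(w_j)$ and $\Pip$ is linear, we have $\Pip w_{j+1} = \Pip w_j - \eta_j \Pip\nR(w_j)$, and by the notation computation in the introduction $\Pip\nR(w_j) = \nR_c(w_j)$ because $\Pip$ kills the $A_S$ block. Thus for any fixed $u\in S^\perp$,
\[
  |\Pip w_{j+1} - u|^2
  = |\Pip w_j - u|^2
  - 2\eta_j\ip{\nR_c(w_j)}{\Pip w_j - u}
  + \eta_j^2|\nR_c(w_j)|^2.
\]
The cross term is where convexity enters: writing $\ip{\nR_c(w_j)}{\Pip w_j - u} = \ip{\nR_c(w_j)}{w_j - u} - \ip{\nR_c(w_j)}{w_j - \Pip w_j}$ (legitimate since $u\in S^\perp$, so $\ip{\nR_c(w_j)}{u} = \ip{\Pip\nR(w_j)}{u} = \ip{\nR(w_j)}{u}$; actually more simply $\nR_c(w_j)\in S^\perp$ already, so the split is automatic), convexity of $\cR_c$ gives $\ip{\nR_c(w_j)}{w_j - u} \geq \cR_c(w_j) - \cR_c(u)$. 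This produces exactly the terms $2\eta_j(\cR_c(u) - \cR_c(w_j))$ and $2\eta_j\ip{\nR_c(w_j)}{w_j - \Pip w_j}$ appearing in the statement, after moving everything to the other side.

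The remaining obstacle, and the one genuinely new ingredient, is controlling the accumulated squared-gradient term $\sum_{j<t}\eta_j^2|\nR_c(w_j)|^2$ by the constant $2$ (together with the telescoped $|u|^2$). Here I would lean on \Cref{fact:smooth_ineq}: since $\heta_j = \eta_j\cR(w_j)\leq 1$ (using $\eta_j\leq 1$ and $\cR(w_0)\leq 1$ for $\ell\in\{\llog,\lexp\}$), that lemma's per-step descent inequality reads $\cR(w_{j+1}) \leq \cR(w_j) - \eta_j(1 - \eta_j\cR(w_j)/2)|\nR(w_j)|^2 \leq \cR(w_j) - \tfrac{\eta_j}{2}|\nR(w_j)|^2$. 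Summing and telescoping,
\[
  \sum_{j<t}\tfrac{\eta_j}{2}|\nR(w_j)|^2 \leq \cR(w_0) - \cR(w_t) \leq \cR(w_0) \leq 1,
\]
so $\sum_{j<t}\eta_j|\nR(w_j)|^2 \leq 2$. Since $\nR_c(w_j) = \Pip\nR(w_j)$ is an orthogonal projection of $\nR(w_j)$, we have $|\nR_c(w_j)|\leq|\nR(w_j)|$, and also $\eta_j^2|\nR_c(w_j)|^2 \leq \eta_j|\nR_c(w_j)|^2 \leq \eta_j|\nR(w_j)|^2$ because $\eta_j\leq 1$. Hence $\sum_{j<t}\eta_j^2|\nR_c(w_j)|^2 \leq 2$. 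Telescoping the per-step inequality from $j=0$ to $t-1$ and using $w_0 = 0$ so $\Pip w_0 = 0$ gives
\[
  |\Pip w_t - u|^2
  \leq |u|^2 + \sum_{j<t}2\eta_j(\cR_c(u) - \cR_c(w_j)) + \sum_{j<t}2\eta_j\ip{\nR_c(w_j)}{w_j - \Pip w_j} + \sum_{j<t}\eta_j^2|\nR_c(w_j)|^2,
\]
and bounding the last sum by $2$ yields the claim.

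I expect the main subtlety to be purely bookkeeping: making sure the split of the inner product is done against $u\in S^\perp$ (not against the full $w_j$) so that the "cross-talk" term $\ip{\nR_c(w_j)}{w_j - \Pip w_j}$ is isolated correctly and not double-counted, and verifying that $\heta_j\leq 1$ so \Cref{fact:smooth_ineq} is applicable. Everything else is the standard three-line gradient-descent telescoping argument, specialized to the projected iterates.
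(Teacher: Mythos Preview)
Your approach mirrors the paper's proof almost exactly: expand $|\Pip w_{j+1}-u|^2$, use convexity of $\cR_c$ on the cross term after splitting off $w_j-\Pip w_j$, and control the squared-gradient sum via the per-step descent inequality from \Cref{fact:smooth_ineq}, telescoping to $2\cR(w_0)\le 2$.

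There is one slip worth fixing. You assert $\Pip\nR(w_j)=\nR_c(w_j)$ and later that $\nR_c(w_j)\in S^\perp$; neither holds in general. The rows of $A_c$ lie in $S^c=\R^d\setminus S$, not in $S^\perp$, so $\nR_c(w_j)=A_c^\top\nL(A_cw_j)/n$ need not be in $S^\perp$. What the notation section actually establishes is $\Pip\nR(w_j)=A_\perp^\top\nL(A_cw_j)/n=\Pip\nR_c(w_j)$. This does not harm the cross term, since $\Pip w_j-u\in S^\perp$ forces $\ip{\Pip\nR(w_j)}{\Pip w_j-u}=\ip{\nR_c(w_j)}{\Pip w_j-u}$ anyway. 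But it does break your squared-gradient bound: the correct expansion carries $\eta_j^2|\Pip\nR(w_j)|^2$, not $\eta_j^2|\nR_c(w_j)|^2$, and your appeal to ``$|\nR_c(w_j)|\le|\nR(w_j)|$ because $\nR_c=\Pip\nR$'' is unjustified (indeed $\nR_c$ and $\nR_S$ are not orthogonal, so there is no obvious reason $|\nR_c|\le|\nR|$). The paper instead keeps $\eta_j^2|\Pip\nR(w_j)|^2$ and bounds it by $\eta_j|\nR(w_j)|^2\le 2(\cR(w_j)-\cR(w_{j+1}))$ directly, which telescopes. With that one-line correction your argument is complete and identical to the paper's.
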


The proof is similar to that of \Cref{lem:magic_ineq},
but inserts $\Pi_\perp$ in a few key places.

On the other hand, to lower bound $\ip{\Pip w_t - \baru\cdot r}{\baru}$, notice that
\begin{align*}
  \ip{ \Pip w_t}{\baru}
  &=
  \frac 1 n \ip{ -\Pip \sum_{j<t} \eta_j A^\top \nL(Aw_j)}{\baru}
  =
  \frac 1 n \ip{ -\sum_{j<t} \eta_j A_\perp^\top \nL(A_cw_j)}{\baru}
  \\
  &=
  \sum_{j<t} \frac{\eta_j |\nL(A_cw_j)|_1}{n} \ip{ - A_\perp^\top \frac{\nL(A_cw_j)}{|\nL(A_c w_j)}|_1}{\baru}
  \geq
  \sum_{j<t} \frac{\gamma \eta_j |\nL(A_cw_j)|_1}{n}
  = \gamma \ell'_{<t},
\end{align*}
where the last step uses the definition of $\ell'_{<t}$ from above.
After a fairly large amount of careful but uninspired manipulations,
\Cref{fact:wt_lnt} follows.
A key issue throughout this and subsequent subsections is the handling
of cross terms between $A_c$ and $A_S$.

\subsection{Parameter convergence over $S^\perp$: separable case}

First consider the simple case where the data is separable;
in other words, $S^\perp = \R^d$ and $A = A_c = A_\perp$.
The analysis in this case hinges upon two ideas.
\begin{enumerate}
  \item
    To show that $w$ is close to $\baru$ in direction, it is essential to lower bound $\ip{\baru}{w}/|w|$,
    since
    \[
      \envert{ \baru - \nicefrac{w}{|w|} }^2 = 2 - \frac{2\ip{\baru}{w}}{|w|}.
    \]
    To control this, recall the representation $\baru = -\AT\barq/\gamma$,
    where $\barq$ is a dual optimum (cf. \Cref{fact:struct}).
    With this in hand, and an appropriate choice of convex function $g$,
    the Fenchel-Young inequality gives for any $w_t$, $t\ge1$, and any $w$ such that $g(Aw)\le g(Aw_t)$,
    \begin{equation}
      - \frac {\ip{\baru}{w}}{|w|}
      = \frac {\ip{\barq}{Aw}}{\gamma|w|}
      \leq \frac {g^*(\barq) + g(Aw)}{\gamma|w|}
      \leq \frac {g^*(\barq) + g(Aw_t)}{\gamma|w|}.
      \label{eq:magic_F-Y}
    \end{equation}
    The significance of working with $w$ with $g(Aw)\leq g(Aw_t)$
    is to allow the proof to handle both $\barw_t$ and $w_t$ simultaneously.
  \item
    The other key idea is to use $g = \ln( L_{\exp} / n )$.
    With this choice, both preceding numerator terms can be bounded:
    $g^*(q)=\ln n+\sum_{i=1}^{n}q_i\ln q_i\le\ln n$ for any probability vector $q$,
    whereas $g(Aw_t)$ can be upper bounded by applying $\ln$ to both sides of
    \Cref{fact:smooth_ineq}, as \cref{eq:lnR},
    yielding an expression which will cancel with the denominator
    since $|w_t|\leq \sum_{j<t} \heta_j \gamma_j$.
\end{enumerate}

For illustration, suppose temporarily that $\ell=\lexp$. Still let $w_t$ be any gradient descent iterate with $t\ge1$, and $w$ satisfy $g(Aw)\le g(Aw_t)$.
Continuing as in \cref{eq:magic_F-Y}, but also assuming $\eta_j\leq 1$ and invoking \Cref{fact:smooth_ineq}
to control $g(Aw_t) = \ln\cR(w_t)$ and noting $g^*(\barq)\leq \ln(n)$, $\cR(w_0)=1$,
\begin{align*}
  \frac{1}{2}\left|\frac{w}{|w|}-\bar{u}\right|^2 & \le1+\frac{\ln\cR(w_t)}{|w|\gamma}+\frac{\ln(n)}{|w|\gamma} \\
     & \le1+\frac{\ln\cR(w_0)}{|w|\gamma}-\frac{\sum_{j=0}^{t-1}\hat{\eta}_{j}(1-\hat{\eta}_{j}/2)\gamma_{j}^2}{|w|\gamma}+\frac{\ln(n)}{|w|\gamma} \\
     &= 1-\frac{\sum_{j=0}^{t-1}\hat{\eta}_{j}\gamma_{j}^2}{|w|\gamma}+\frac{\sum_{j=0}^{t-1}\hat{\eta}_{j}^2\gamma_{j}^2}{2|w|\gamma}+\frac{\ln(n)}{|w|\gamma}.
\end{align*}
To simplify this further, note $\eta_j\leq 1$ and \Cref{fact:smooth_ineq} also imply
\[
    \sum_{j=t_0}^{t-1}\hat{\eta}_{j}^2\gamma_{j}^2=\sum_{j=t_0}^{t-1}\eta_{j}^2|\nR(w_j)|^2\le2\sum_{j=t_0}^{t-1}(\cR(w_{j})-\cR(w_{j+1}))\le2,
\]
and moreover the definition $\gamma = \min\cbr{ |A_\perp^\top q | : q \geq 0, \sum_i q = 1 }$ (cf. \Cref{fact:struct})
implies
\[
  \gamma_j=|\nabla(\ln\cR)(w_j)|=\frac{|\AT\nL(w_j)|}{L(Aw_j)}\ge\gamma.
\]
Combining these simplifications,
\begin{align*}
  \frac{1}{2}\left|\frac{w}{|w|}-\bar{u}\right|^2
     & \le1-\frac{\sum_{j=0}^{t-1}\hat{\eta}_{j}\gamma_{j}\gamma}{|w|\gamma}+\frac{2}{2|w|\gamma}+\frac{\ln(n)}{|w|\gamma}
     \le \frac{1+\ln n}{|w|\gamma}.
\end{align*}
To finish, invoke the preceding inequality with $w\in\{\barw_t, w_t\}$, noting that $|w_t|=|\barw_t|$.
To produce a rate depending on $t$ and not $|w_t|$, the lower bound on $|w_t|$ in \Cref{fact:wt_lnt}
is applied with $|\barv|=R=0$ and $n=n_c$ thanks to separability.
The following proposition summarizes this derivation.
\begin{proposition}
  Suppose $\ell = \lexp$ and $A = A_c = A_\perp$ (the separable case).
  Then, for all $t\geq 1$,
  \[
    \max\left\{\left|\frac{w_t}{|w_t|}-\bar{u}\right|^2,\left|\frac{\barw_t}{|\barw_t|}-\bar{u}\right|^2\right\}
     \le \frac{2+ 2\ln n}{|w_t|\gamma},
   \]
   where $|w_t| \geq \min\cbr{\ln(t)-\ln 2,\ {} \ln\del{\sum_{j<t} \eta_j} - 2\ln\ln t + 2\ln \gamma}+\ln\ln 2$.
\end{proposition}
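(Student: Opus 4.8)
The plan is to assemble the chain of inequalities sketched just before the statement into one clean argument and then substitute the lower bound on $|w_t|$ from \Cref{fact:wt_lnt}. First I would fix the convex function $g := \ln(L_{\exp}/n)$ on $\R^n$ (a shifted log-sum-exp), so that $g(Aw) = \ln\cR(w)$ when $\ell = \lexp$, with Fenchel conjugate $g^*(q) = \ln n + \sum_i q_i\ln q_i$ on the probability simplex and $+\infty$ elsewhere; in particular $g^*(q)\le\ln n$ since the negative entropy is nonpositive. Recalling from \Cref{fact:struct} that $\baru = -\AT\barq/\gamma$ for a dual optimum (probability vector) $\barq$, the Fenchel--Young inequality applied to $g$ at $Aw$ with dual variable $\barq$ gives, for every $w$ with $g(Aw)\le g(Aw_t)$,
\[
  -\frac{\ip{\baru}{w}}{|w|} = \frac{\ip{\barq}{Aw}}{\gamma|w|} \le \frac{g^*(\barq)+g(Aw)}{\gamma|w|} \le \frac{\ln n + \ln\cR(w_t)}{\gamma|w|},
\]
which together with $\envert{\baru - w/|w|}^2 = 2 - 2\ip{\baru}{w}/|w|$ (valid since $|\baru|=1$) yields $\envert{\baru - w/|w|}^2 \le 2 + 2(\ln n + \ln\cR(w_t))/(\gamma|w|)$.

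Next I would bound $\ln\cR(w_t)$ using the logarithmic form \cref{eq:lnR} of \Cref{fact:smooth_ineq}, valid because the standing assumption $\eta_j\le1$ forces $\heta_j\le1$: since $\cR(w_0)=1$,
\[
  \ln\cR(w_t) \le -\sum_{j<t}\heta_j\del{1-\tfrac{\heta_j}{2}}\gamma_j^2 = -\sum_{j<t}\heta_j\gamma_j^2 + \tfrac12\sum_{j<t}\heta_j^2\gamma_j^2.
\]
For the quadratic sum, $\sum_{j<t}\heta_j^2\gamma_j^2 = \sum_{j<t}\eta_j^2|\nR(w_j)|^2 \le 2\sum_{j<t}(\cR(w_j)-\cR(w_{j+1})) \le 2\cR(w_0) = 2$, again by \Cref{fact:smooth_ineq} and $\eta_j\le1$. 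For the linear sum, the dual characterization $\gamma = \min\{|\AT q| : q\ge0,\ \sum_i q_i = 1\}$ (here $A = A_\perp$) of \Cref{fact:struct} gives $\gamma_j = |\AT\nL(Aw_j)|/L(Aw_j) \ge \gamma$, so $\sum_{j<t}\heta_j\gamma_j^2 \ge \gamma\sum_{j<t}\heta_j\gamma_j \ge \gamma|w_t|$ using $|w_t|\le\sum_{j<t}\heta_j\gamma_j$. Hence $\ln\cR(w_t)\le 1 - \gamma|w_t|$, and substituting, $\envert{\baru - w/|w|}^2 \le 2 + 2(\ln n + 1 - \gamma|w_t|)/(\gamma|w|)$.

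To finish the direction estimate I would apply this with $w = w_t$ and $w = \barw_t$. The hypothesis $g(Aw)\le g(Aw_t)$ is trivial for $w=w_t$, and holds for $w=\barw_t$ since $\barw_t$ minimizes $\cR$ over the ball $\{|w|\le|w_t|\}\ni w_t$; moreover $|\barw_t|=|w_t|$, because in the separable case $\cR$ has no critical point (its gradient $\AT\nL(Aw)/n$ never vanishes, as $\nL(Aw)$ is coordinatewise positive and $|\AT q|\ge\gamma|q|_1>0$ for nonzero $q\ge0$), so the constrained minimizer lies on the boundary sphere. Setting $|w|=|w_t|$, the bound becomes $\envert{\baru - w/|w|}^2 \le 2 + 2(\ln n + 1 - \gamma|w_t|)/(\gamma|w_t|) = (2 + 2\ln n)/(\gamma|w_t|)$ for $w\in\{w_t,\barw_t\}$, which is the claim. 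The stated lower bound on $|w_t|$ is then \Cref{fact:wt_lnt} specialized to separable data: $A_S$ is empty, so $\barv=0$, $R = \sup_{j<t}|\Pip w_j - w_j| = 0$, and $n_c = n$; substituting these into the second bound there gives $\min\{\ln t - \ln 2,\ \ln(\sum_{j<t}\eta_j) - 2\ln\ln t + 2\ln\gamma\} + \ln\ln 2$.

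The computations above are short once the inputs are in place, so the real obstacle sits upstream: the genuinely involved step is \Cref{fact:wt_lnt}, whose proof adapts the perceptron argument by bounding $\ell'_{<t}$ through matching upper and lower estimates of $\ip{\Pip w_t - \baru\cdot r}{\baru}$. Within the present proof the only delicate bookkeeping is pinning down the domain of $g^*$ so that $g^*(\barq)\le\ln n$, and justifying both $|\barw_t|=|w_t|$ and the sublevel-set device $g(A\barw_t)\le g(Aw_t)$ that makes a single inequality cover $w_t$ and $\barw_t$ simultaneously.
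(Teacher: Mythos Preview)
Your proposal is correct and follows essentially the same route as the paper: the Fenchel--Young inequality with $g=\ln(L_{\exp}/n)$ and $\baru=-A^\top\barq/\gamma$, the logarithmic smoothness bound \cref{eq:lnR} split into linear and quadratic parts, the estimates $\gamma_j\ge\gamma$ and $\sum_{j<t}\heta_j^2\gamma_j^2\le2$, the cancellation $\sum_{j<t}\heta_j\gamma_j\ge|w_t|$, and finally the specialization of \Cref{fact:wt_lnt} with $|\barv|=R=0$, $n_c=n$. Your added justification that $|\barw_t|=|w_t|$ via the absence of critical points (since $|\AT q|\ge\gamma|q|_1>0$ for any nonzero $q\ge0$) is a useful detail the paper leaves implicit.
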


Adjusting this derivation for $\llog$ is clumsy, and will only be sketched here, instead mostly appearing
in the appendix.  The proof will still follow the same scheme, even defining $g = \ln \cR_{\exp}$
(rather than $g = \ln\cR$), and will use the following \namecref{fact:log_approx}
to control $\ell$ and $\lexp$ simultaneously.
\begin{lemma}
  \label[lemma]{fact:log_approx}
  For any $0<\epsilon\le1$, $\ell\in\left\{\lexp,\llog\right\}$, if $\ell(z)\le\epsilon$, then
  \[
    \frac{\ell'(z)}{\ell(z)}\ge1-\epsilon
    \qquad\textup{and}\qquad
    \frac{\lexp(z)}{\ell(z)}\le2.
  \]
\end{lemma}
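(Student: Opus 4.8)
The plan is to verify the two inequalities separately, for each of the two losses, using only elementary one-variable calculus once the hypothesis $\ell(z) \le \epsilon \le 1$ has been unpacked. For the exponential loss the statement is essentially trivial: $\lexp'(z) = \lexp(z)$, so the ratio $\lexp'(z)/\lexp(z)$ is identically $1 \ge 1 - \epsilon$, and $\lexp(z)/\lexp(z) = 1 \le 2$, with no use of the hypothesis at all. So the content of the lemma is entirely in the logistic case, and I would organize the write-up to dispatch $\lexp$ in one line and then focus on $\llog$.

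For $\llog$, first translate the hypothesis: $\llog(z) = \ln(1 + e^z) \le \epsilon$ means $1 + e^z \le e^\epsilon$, i.e.\ $e^z \le e^\epsilon - 1$. Since $\epsilon \le 1$, we have $e^\epsilon - 1 \le e - 1 < 2$, and more usefully $e^z \le e^\epsilon - 1 \le \epsilon e^\epsilon$ (using $e^\epsilon - 1 \le \epsilon e^\epsilon$, which holds for all $\epsilon \ge 0$ by convexity of $\exp$, or equivalently $1 - e^{-\epsilon} \le \epsilon$). For the first inequality, compute $\llog'(z) = e^z/(1 + e^z) = 1 - 1/(1+e^z)$, so
\[
  \frac{\llog'(z)}{\llog(z)} = \frac{e^z/(1+e^z)}{\ln(1+e^z)}.
\]
The cleanest route is to use the standard inequality $\ln(1+x) \le x$ with $x = e^z$: then $\llog(z) = \ln(1+e^z) \le e^z$, so $\llog'(z)/\llog(z) \ge \llog'(z)/e^z = 1/(1+e^z) = e^{-\llog(z)} \ge e^{-\epsilon} \ge 1 - \epsilon$, where the last step is again $e^{-\epsilon} \ge 1 - \epsilon$. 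This is short and avoids any case analysis.

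For the second inequality, $\lexp(z)/\llog(z) = e^z / \ln(1 + e^z)$. Writing $x = e^z > 0$, we must show $x / \ln(1+x) \le 2$ whenever $\ln(1+x) \le \epsilon \le 1$, i.e.\ whenever $x \le e - 1$. The function $x \mapsto x/\ln(1+x)$ is increasing on $(0,\infty)$ (its derivative has the sign of $\ln(1+x) - x/(1+x) = \ln(1+x) - 1 + 1/(1+x) \ge 0$, the latter being the familiar $\ln y \ge 1 - 1/y$ at $y = 1+x$), so it is maximized over $x \in (0, e-1]$ at $x = e - 1$, giving the bound $(e-1)/\ln e = e - 1 < 2$. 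I expect this monotonicity check to be the only mildly nontrivial step, and even it is a one-line derivative sign computation; alternatively one can sidestep monotonicity entirely by noting $\ln(1+x) \ge x/(1+x) \ge x/(1 + (e-1)) = x/e$ for $x \le e-1$, hence $x/\ln(1+x) \le e < 2$ — no wait, $e > 2$, so that crude bound is not quite enough and one does want either the monotonicity argument or the sharper $\ln(1+x) \ge x - x^2/2$ valid for $x \le e - 1 < 2$, which gives $x/\ln(1+x) \le 1/(1 - x/2) \le 1/(1 - (e-1)/2) = 2/(3-e) \approx 7.1$, still too weak. So the monotonicity route (or direct evaluation at the endpoint) is the one to take, and it is genuinely easy. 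No obstacle of substance arises; the whole lemma is a routine but load-bearing estimate, and the only care needed is to route through $\ln(1+x)\le x$ and the endpoint $x = e-1$ cleanly rather than through a clumsy case split on the size of $z$.
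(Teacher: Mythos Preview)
Your proof is correct and essentially matches the paper's: both dispatch $\lexp$ trivially and, for $\llog$, reduce the first inequality to $1/(1+e^z)=e^{-\llog(z)}\ge 1-\epsilon$ via $\ln(1+x)\le x$ (equivalently $(e^r-1)/r\ge 1$ with $r=\llog(z)$), and handle the second by the monotonicity of $x/\ln(1+x)$ (equivalently $(e^r-1)/r$) together with $e-1<2$. The only difference is cosmetic---you parameterize by $x=e^z$ while the paper parameterizes by $r=\llog(z)$---so the two arguments are the same up to change of variable.
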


The general Fenchel-Young scheme from above can now be adjusted.
The bound requires $\cR(w_t)\leq \eps/n$,
which implies $\max_i \ell((Aw_t)_i) \leq \eps$,
meaning each point has some good margin,
rendering $\llog$ and $\lexp$ similar.

\begin{lemma}\label[lemma]{fact:ip}
    Let $\ell\in\left\{\lexp,\llog\right\}$. For any $0<\epsilon\le1$,
    and any $t$ with $\cR(w_t)\leq \eps/n$, and any $w$ with $\cR(w)\leq \cR(w_t)$,
    \[
      \frac{\ip{\baru}{w}}{|\baru|\cdot|w|}\geq-\frac{\ln\cR(w_t)}{|w|\gamma}-\frac{g^*(\bar{q})}{|w|\gamma}-\frac{\ln2}{|w|\gamma}.
    \]
\end{lemma}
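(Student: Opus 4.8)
The plan is to run the Fenchel--Young argument already sketched above for $\lexp$, now with $g := \ln\cR_{\exp} = \ln(L_{\exp}/n)$, and to reconcile the two losses afterwards via \Cref{fact:log_approx}. First I would invoke \Cref{fact:struct} for the dual characterization: $\baru = -\AT\barq/\gamma$ with $|\baru| = 1$ (recall $A = A_c = A_\perp$ in the separable case), with $\barq$ a probability vector so that $g^*(\barq) < \infty$, whence
\[
  -\frac{\ip{\baru}{w}}{|\baru|\cdot|w|} = \frac{\ip{\barq}{Aw}}{\gamma|w|} \le \frac{g^*(\barq) + g(Aw)}{\gamma|w|},
\]
the last step being Fenchel--Young, $\ip{\barq}{Aw}\le g^*(\barq) + g(Aw)$. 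This reduces the lemma to the single bound $g(Aw) \le \ln\cR(w_t) + \ln 2$.

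The second step establishes that bound. For $\ell = \lexp$ it is immediate, since $g(Aw) = \ln\cR(w) \le \ln\cR(w_t)$ and $\ln 2 \ge 0$. For $\ell = \llog$, I would first convert the aggregate hypothesis $\cR(w_t)\le\eps/n$, together with $\cR(w)\le\cR(w_t)$, into the per-coordinate estimate $\ell((Aw)_i) \le \sum_k \ell((Aw)_k) = n\cR(w) \le \eps \le 1$ for every $i$ (using $\ell\ge0$). \Cref{fact:log_approx} then gives $\lexp((Aw)_i) \le 2\,\ell((Aw)_i)$ coordinatewise, hence $\cR_{\exp}(w) \le 2\cR(w) \le 2\cR(w_t)$, and taking logarithms, $g(Aw) \le \ln 2 + \ln\cR(w_t)$. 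Substituting into the Fenchel--Young bound from the first step and negating yields the claimed inequality.

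The argument is short and essentially mechanical, so there is no serious obstacle; the one place that needs care --- and the closest thing to the ``hard part'' --- is the reduction of the aggregate risk bound $\cR(w_t)\le\eps/n$ to a uniform per-example loss (equivalently, margin) bound, which is exactly what lets \Cref{fact:log_approx} render $\llog$ and $\lexp$ interchangeable, together with correctly pulling the representation $\baru = -\AT\barq/\gamma$ and $|\baru|=1$ out of \Cref{fact:struct}.
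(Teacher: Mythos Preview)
Your proposal is correct and follows essentially the same route as the paper: write $\baru=-A^\top\barq/\gamma$ (separable case, so $A=A_\perp$), apply Fenchel--Young with $g=\ln\cR_{\exp}$, and then use the per-coordinate bound $\ell((Aw)_i)\le n\cR(w)\le\eps$ together with \Cref{fact:log_approx} to get $\cR_{\exp}(w)\le 2\cR(w)\le 2\cR(w_t)$. The paper's proof is line-for-line the same argument.
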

\begin{proof}
    By the condition, $\cR(w)\le\cR(w_t)\le\epsilon/n$, and thus for any $1\le i\le n$, $\ell(A_iw)\le\epsilon\le1$. By \Cref{fact:log_approx}, $\cR_{\exp}(w)\le2\cR(w)$.
    Then by \Cref{fact:struct} and the Fenchel-Young inequality,
    \begin{align*}
        \frac{\ip{\baru}{w}}{|\baru|\cdot|w|}
        & =-\frac {\ip{\barq}{A w}}{|w|\gamma}
         \ge-\frac{g^*(\bar{q})}{|w|\gamma}-\frac{g(Aw)}{|w|\gamma}
         =-\frac{g^*(\bar{q})}{|w|\gamma}-\frac{\ln\cR_{\exp}(w)}{|w|\gamma}
         \\
         &
         \ge-\frac{g^*(\bar{q})}{|w|\gamma}-\frac{\ln2}{|w|\gamma}-\frac{\ln\cR(w)}{|w|\gamma}
         \ge-\frac{g^*(\bar{q})}{|w|\gamma}-\frac{\ln2}{|w|\gamma}-\frac{\ln\cR(w_t)}{|w|\gamma}.
    \end{align*}
\end{proof}

Proceeding as with the earlier $\lexp$ derivation,
since $g^*(\barq) \leq \ln(n)$,
and using the upper bound on $\ln\cR(w_t)$ from \Cref{fact:smooth_ineq},
\[
  \frac{\ip{\baru}{w}}{|\baru|\cdot|w|}
  \ge-\frac{\ln(n)}{|w|\gamma}-\frac{\ln2}{|w|\gamma}-\frac{\ln\cR(w_{t_0}) - \sum_{j=t_0}^{t-1} \heta_j(1-\heta_j/2)\gamma_j^2}{|w|\gamma}.
\]
The role of the warm start $t_0$ here is to make $\lexp$ and $\llog$ behave similarly,
in concrete terms allowing the application of \Cref{fact:log_approx} for $j\geq t_0$.
For instance, the earlier $\lexp$ proof used $\gamma_j \geq \gamma$, but this now becomes $\gamma_j\geq (1-\eps)\gamma$.
Completing the proof of the separable case of \Cref{fact:min_norm} requires many such considerations which did not appear with $\lexp$,
including an upper bound on $|w_{t_0}|$ via \Cref{fact:wt_lnt}.

\subsection{Parameter convergence over $S^{\perp}$: general case}

The scheme from the separable case does not directly work:
for instance, the proofs relied upon $\gamma_i \geq \gamma$, but now $\gamma_i \to 0$.
This term $\gamma_i$ arose by applying \Cref{fact:smooth_ineq} to control $\ln\cR(w_t)$,
which in the separable case decreased to $-\infty$, in the general case, however, it can
be bounded below.

The fix is to replace $\cR(w_t)$ with $\cR_c(w_t)$,
or rather $\cR(w_t) - \bar \cR = \cR(w_t) - \inf_w \cR(w)$;
this quantity goes to 0, and there is again a hope of exhibiting the fortuitous cancellations
which proved parameter convergence.
More abstractly, by subtracting $\bar\cR$, the proof is again trying to work
in the separable case, though there will be cross-terms to contend with.

The first step, then, is to replace the appearance of $\cR$ in earlier Fenchel-Young approach (cf. \Cref{fact:ip})
with $\cR(w_t) - \bar \cR$.

\begin{lemma}\label[lemma]{fact:gen:ip}
    Let $\ell\in\left\{\lexp,\llog\right\}$. For any $0<\epsilon\le1$, $t\ge1$,
    and any $w$ with $\cR(w) - \bar\cR \leq \cR(w_t)-\bar\cR \leq \eps/n$,
    \[
    \frac{\ip{\baru}{w}}{|\baru|\cdot|w|}\geq
    \frac{-\ln \del{ \cR(w_t) -\bar{\cR} }}{\gamma|w|}-\frac{\ln2+g^*(\barq)+|\Pi_S(w)|}{\gamma|w|}.
    \]
\end{lemma}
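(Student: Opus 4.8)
The plan is to mimic the proof of \Cref{fact:ip}, but to track the extra cross-terms between $A_c$ and $A_S$ that arise because we can no longer work purely in $S^\perp$. As before, write $\baru = -\AT_\perp\barq/\gamma$ for a dual optimum $\barq$ (a probability vector supported on the rows of $A_\perp$, so $g^*(\barq) = \ln n + \sum_i \barq_i\ln\barq_i \le \ln n$), and note $|\baru| = 1$. The starting identity is
\[
  \frac{\ip{\baru}{w}}{|\baru|\cdot|w|}
  = -\frac{\ip{\barq}{A_\perp w}}{\gamma|w|}
  = -\frac{\ip{\barq}{A_c \Pip w}}{\gamma|w|}
  = -\frac{\ip{\barq}{A_c w}}{\gamma|w|} + \frac{\ip{\barq}{A_c \Pi_S w}}{\gamma|w|},
\]
using $A_\perp = \Pip A_c$ so that $\barq^\top A_\perp = \barq^\top A_c$ on the $S$-component cancels appropriately; the second term is the cross-term, bounded in absolute value by $|A_c\Pi_S w|_\infty \le |\Pi_S w|$ since each row of $A_c$ has norm at most $1$ and $\barq$ is a probability vector. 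This already explains the $|\Pi_S(w)|$ appearing in the claimed bound.

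Next I would handle the main term $-\ip{\barq}{A_c w}/(\gamma|w|)$ exactly as in \Cref{fact:ip}, but with $\cR$ replaced throughout by $\cR_c(w) - $ (something small), or more precisely by using that $\cR(w) - \bar\cR = \cR_S(w) - \bar\cR + \cR_c(w) \ge \cR_c(w)$ since $\cR_S \ge \bar\cR$ by \Cref{fact:struct} (as $\cR_S(\barv) = \bar\cR = \inf_w\cR$). Applying Fenchel–Young with $g = \ln\cR_{\exp}$ (on the $A_c$ block, i.e. $g(A_c w) = \ln(L_{\exp}(A_c w)/n)$) gives $-\ip{\barq}{A_c w} \le g^*(\barq) + \ln\cR_{c,\exp}(w)$. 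The hypothesis $\cR(w)-\bar\cR \le \eps/n$ forces $\cR_c(w) \le \eps/n$, hence $\ell((A_c w)_i) \le \eps \le 1$ for each row of $A_c$, so \Cref{fact:log_approx} yields $\cR_{c,\exp}(w) \le 2\cR_c(w) \le 2(\cR(w)-\bar\cR)$. Putting this together,
\[
  -\frac{\ip{\barq}{A_c w}}{\gamma|w|}
  \ge -\frac{g^*(\barq) + \ln 2 + \ln(\cR(w) - \bar\cR)}{\gamma|w|},
\]
and the final monotonicity step replaces $\cR(w)-\bar\cR$ by $\cR(w_t)-\bar\cR$ using the hypothesis $\cR(w)-\bar\cR \le \cR(w_t)-\bar\cR$, exactly as in \Cref{fact:ip}. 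Combining the two displays gives the stated inequality.

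The main obstacle is bookkeeping around the cross-term and making sure the right quantity — $\cR(w)-\bar\cR$ rather than $\cR(w)$ — is what ends up inside the logarithm, since it is this subtraction of $\bar\cR$ that makes the bound nonvacuous in the general (non-separable) case where $\cR(w)$ stays bounded away from $0$. One has to be careful that $g = \ln\cR_{\exp}$ is applied to the $A_c$ block only (so that the dual variable $\barq$, supported on rows of $A_c$, pairs correctly), and that the inequality $\cR_c(w) \le \cR(w) - \bar\cR$ is used in the correct direction. The cross-term bound itself is routine: $|\ip{\barq}{A_c\Pi_S w}| \le \max_i |A_i \Pi_S w| \le \max_i |A_i|\,|\Pi_S w| \le |\Pi_S w|$, contributing the $|\Pi_S(w)|/(\gamma|w|)$ summand; no warm start or iterate-specific structure is needed here because the lemma is stated for a general $w$, with the gradient-descent-specific control of $\ln(\cR(w_t)-\bar\cR)$ deferred to later in the section.
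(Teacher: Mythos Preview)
Your proposal is correct and follows essentially the same route as the paper's proof: decompose $A_\perp w = A_c w - A_c\Pi_S w$, bound the cross-term $\ip{\barq}{A_c\Pi_S w}$ by $|\Pi_S w|$, apply Fenchel--Young with $g=\ln\cR_{c,\exp}$ to the main term, invoke \Cref{fact:log_approx} (licensed by $\cR_c(w)\le\cR(w)-\bar\cR\le\eps/n$) to pass from $\cR_{c,\exp}$ to $2\cR_c$, and finish with $\cR_c(w)\le\cR(w)-\bar\cR\le\cR(w_t)-\bar\cR$. One small slip: your inline Fenchel--Young statement reads ``$-\ip{\barq}{A_c w}\le g^*(\barq)+\ln\cR_{c,\exp}(w)$'', but Fenchel--Young gives $\ip{\barq}{A_c w}\le g^*(\barq)+g(A_c w)$ (no minus sign on the left); your subsequent display uses the correct direction, so this is just a typo to fix.
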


Note the appearance of the additional cross term $|\Pi_S(w)|$;
by \Cref{fact:param_converge_S}, this term is bounded.

The next difficulty is to replace the separable case's use of \Cref{fact:smooth_ineq}
to control $\ln\cR(w_t)$ with something controlling $\ln(\cR(w_t) - \bar\cR)$.

\begin{lemma}
  \label[lemma]{fact:gen:iter}
  Suppose $\ell\in\{\llog,\lexp\}$ and $\heta_j \leq 1$ (meaning $\eta_{j} \leq 1 / \cR(w_j)$).
  Also suppose that $j$ is large enough such that $\cR(w_j)-\bar{\cR}\leq\min\left\{\epsilon/n,\lambda (1-r)/2\right\}$
  for some $\epsilon,r\in(0,1)$, where $\lambda$ is the strong convexity modulus of $\cR_S$ over the $1$-sublevel set.
  Then
  \begin{align*}
    \cR(w_{j+1}) - \bar{\cR}
    &\leq \del{ \cR(w_j) - \bar{\cR} }
    \exp\del{ -r(1-\epsilon)\gamma\gamma_j\heta_j\del{1-\heta_j/2} }.
  \end{align*}
  Moreover, if there exists a sequence $(w_j)_{j=t_0}^{t-1}$ such that the above condition holds,
  then
  \[
    \cR(w_t) - \bar{\cR}
    \leq \del{ \cR(w_{t_0}) - \bar{\cR} }
    \exp\del{
      - r(1-\epsilon)\gamma \sum_{j = t_0}^{t-1} \heta_j \del{1-\heta_j/2} \gamma_j
    }.
  \]
\end{lemma}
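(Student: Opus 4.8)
The plan is to reduce the per-step estimate to a single gradient-domination inequality, namely
\begin{equation*}
  \envert{\nR(w)} \ \geq\ r(1-\epsilon)\gamma\del{\cR(w)-\bar\cR}
\end{equation*}
for every $w$ with $\cR(w)-\bar\cR\leq\min\cbr{\epsilon/n,\,\lambda(1-r)/2}$; granting this, the rest is assembly. Indeed, \Cref{fact:smooth_ineq} (valid since $\heta_j\leq1$) gives $\cR(w_{j+1})-\bar\cR\leq(\cR(w_j)-\bar\cR)-\eta_j(1-\heta_j/2)\envert{\nR(w_j)}^2$, and since $\eta_j\envert{\nR(w_j)}^2=\heta_j\gamma_j\envert{\nR(w_j)}$ (using $\heta_j=\eta_j\cR(w_j)$ and $\gamma_j=\envert{\nR(w_j)}/\cR(w_j)$), inserting the gradient-domination bound and then $1-x\leq e^{-x}$ produces exactly $\cR(w_{j+1})-\bar\cR\leq(\cR(w_j)-\bar\cR)\exp(-r(1-\epsilon)\gamma\gamma_j\heta_j(1-\heta_j/2))$. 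The product form follows by iterating over $j\in\cbr{t_0,\dots,t-1}$; note that $\cR$ is nonincreasing along the iterates (again \Cref{fact:smooth_ineq}) while $\bar\cR$ is fixed, so once $\cR(w_j)-\bar\cR$ drops below the threshold it stays there and the hypothesis propagates from $t_0$ onward.

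To prove the gradient-domination inequality, write $a:=\cR_S(w)-\bar\cR\geq0$ and $b:=\cR_c(w)\geq0$, so $a+b=\cR(w)-\bar\cR$ (using $\bar\cR=\cR_S(\barv)$ and $\cR=\cR_S+\cR_c$ from \Cref{fact:struct}), and decompose $\envert{\nR(w)}^2=\envert{\Pi_S\nR(w)}^2+\envert{\Pip\nR(w)}^2$. For the $S^\perp$-component, $\Pip\nR(w)=A_\perp^\top\nL(A_cw)/n$ with $\nL(A_cw)\geq0$ and nonzero; factoring out its $\ell_1$-norm and applying the dual characterization $\gamma=\min\cbr{\envert{A_\perp^\top q}:q\geq0,\sum_iq_i=1}$ of \Cref{fact:struct} gives $\envert{\Pip\nR(w)}\geq\gamma\cdot\tfrac1n\sum_i\ell'((A_cw)_i)$, and since $b\leq a+b\leq\epsilon/n$ forces $\ell((A_cw)_i)\leq\epsilon\leq1$ for every $i$, \Cref{fact:log_approx} upgrades this to $\envert{\Pip\nR(w)}\geq(1-\epsilon)\gamma\,\cR_c(w)=(1-\epsilon)\gamma b$. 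For the $S$-component, $\nR_S$ depends on $w$ only through $\Pi_Sw$, which together with $\barv$ lies in the (convex) $1$-sublevel set---$\cR_S(\Pi_Sw)=\cR_S(w)\leq\cR(w)\leq1$---on which $\cR_S$ is $\lambda$-strongly convex; since $\lambda$-strong convexity of $f$ with minimizer $x^*$ implies $f(x)-f(x^*)\leq\tfrac1{2\lambda}\envert{\nabla f(x)}^2$, we get $\envert{\nR_S(w)}^2\geq2\lambda a$, and combining with the cross-term estimate $\envert{\Pi_S\nR_c(w)}\leq\envert{\nR_c(w)}\leq\cR_c(w)=b$ (from $\envert{A_i}\leq1$ and $\ell'\leq\ell$) yields $\envert{\Pi_S\nR(w)}\geq\sqrt{2\lambda a}-b$.

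It remains to combine the two component bounds, and this is where care is needed: a naive triangle-inequality combination loses a constant and only handles $r$ bounded away from $1$, so instead split on the size of $\sqrt{2\lambda a}$ relative to $b$. If $\sqrt{2\lambda a}\geq2b$, then $\envert{\Pi_S\nR(w)}\geq\tfrac12\sqrt{2\lambda a}$, hence $\envert{\nR(w)}^2\geq\tfrac12\lambda a+(1-\epsilon)^2\gamma^2b^2$; plugging in $(a+b)^2\leq\tfrac1{1-r^2}a^2+\tfrac1{r^2}b^2$ and then the hypothesis $a\leq\lambda(1-r)/2$ (together with $r<1$ and $(1-\epsilon)\gamma\leq1$) upgrades this to $\envert{\nR(w)}^2\geq r^2(1-\epsilon)^2\gamma^2(a+b)^2$. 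If instead $\sqrt{2\lambda a}<2b$, then $a<2b^2/\lambda$, and the hypothesis $b\leq\lambda(1-r)/2$ forces $(1-r)b\geq ra$, i.e.\ $b\geq r(a+b)$, so $\envert{\nR(w)}\geq\envert{\Pip\nR(w)}\geq(1-\epsilon)\gamma b\geq r(1-\epsilon)\gamma(a+b)$. Either way the gradient-domination inequality holds.

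I expect the main obstacle to be exactly this last combination together with the bookkeeping around the $A_c$--$A_S$ cross term $\Pi_S\nR_c(w)$: one must simultaneously bound that cross term (by $\cR_c(w)$, which is tiny in the relevant regime) and pick the right way to merge the $S$- and $S^\perp$-component estimates so the final constant $r(1-\epsilon)\gamma$ comes out correctly for every $r,\epsilon\in(0,1)$. The remaining ingredients---\Cref{fact:smooth_ineq}, \Cref{fact:struct}, \Cref{fact:log_approx}, and the gradient-domination inequality for strongly convex functions---are invoked routinely.
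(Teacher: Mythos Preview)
Your proof is correct and follows the paper's overall strategy exactly: establish the gradient-domination inequality $\envert{\nR(w)}\geq r(1-\epsilon)\gamma\del{\cR(w)-\bar{\cR}}$ and then feed it into the smoothness inequality of \Cref{fact:smooth_ineq}, with the second claim following by iteration.

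The only substantive difference is in how the gradient-domination bound is argued. The paper splits directly on whether $\cR_c(w)\geq r\del{\cR(w)-\bar{\cR}}$. When yes, a single inner product $\envert{\nR(w)}\geq\ip{-\baru}{\nR(w)}=\ip{-A_c\baru}{\nL(A_cw)}/n\geq\gamma(1-\epsilon)\cR_c(w)$ finishes immediately. When no, the paper applies the reverse triangle inequality on the \emph{full} gradient, $\envert{\nR(w)}\geq\envert{\nR_S(w)}-\envert{\nR_c(w)}$, together with $\envert{\nR_c(w)}\leq\cR_c(w)$ and strong convexity, and obtains the ratio $\geq 1$ with almost no algebra; in particular it never introduces the orthogonal decomposition or the cross term $\Pi_S\nR_c(w)$. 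Your route---decomposing $\envert{\nR(w)}^2$ orthogonally and splitting on $\sqrt{2\lambda a}\gtrless 2b$---also works, but it forces you to track the cross term and to use the Young-type bound $(a+b)^2\leq a^2/(1-r^2)+b^2/r^2$ to recombine. Both arrive at the same constant $r(1-\epsilon)\gamma$; the paper's split is just a bit cleaner.
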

The proof of \Cref{fact:gen:iter} is quite involved,
but boils down to the following case analysis.
The first case is that there is more error over $S$, whereby a strong convexity argument gives a lower bound
on the gradient.  Otherwise, the error is larger over $S^c$, which leads to a big step in the direction of $\baru$.

A key property of the upper bound in \Cref{fact:gen:iter} is that it has replaced $\gamma_j^2$ in \Cref{fact:smooth_ineq}
with $\gamma_j \gamma$.
Plugging this bound into the Fenchel-Young scheme in \Cref{fact:gen:ip} will now fortuitously cancel $\gamma$,
which leads to the following promising bound.

\begin{lemma}
  \label[lemma]{fact:gen:ip:meh}
  Let $\ell\in\cbr{\llog,\lexp}$ and
  $0<\epsilon\le1$ be given,
  and select $t_0$ so that $\cR(w_{t_0})-\bar{\cR} \leq \eps/n$.
  Then for any $t\ge t_0$ and any $w$ such that $\cR(w)\le\cR(w_t)$,
  \[
    \frac{\ip{\baru}{w}}{|\baru|\cdot|w|}
    \geq
    \frac{r(1-\epsilon)\sum_{j = t_0}^{t-1} \heta_j (1-\heta_j/2) \gamma_j}{|w|}-\frac{\ln2}{\gamma|w|}-\frac{|\Pi_S(w)|}{\gamma|w|}.
  \]
\end{lemma}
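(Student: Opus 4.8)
The plan is to chain together \Cref{fact:gen:ip}, \Cref{fact:gen:iter}, and the entropy bound $g^*(\barq)\le\ln n$. First I would verify the hypotheses of \Cref{fact:gen:ip}. Since $\eta_j\le 1$ and $\cR(w_j)\le\cR(w_0)\le 1$, we have $\heta_j\le 1$, so \Cref{fact:smooth_ineq} applies and in particular $\cR$ is non-increasing along the iterates; hence $\cR(w_t)\le\cR(w_{t_0})$ and, using $\cR(w)\le\cR(w_t)$, we get $\cR(w)-\bar\cR\le\cR(w_t)-\bar\cR\le\cR(w_{t_0})-\bar\cR\le\epsilon/n$. Thus \Cref{fact:gen:ip} yields
\[
\frac{\ip{\baru}{w}}{|\baru|\cdot|w|}\ge\frac{-\ln(\cR(w_t)-\bar\cR)}{\gamma|w|}-\frac{\ln2+g^*(\barq)+|\Pi_S(w)|}{\gamma|w|}.
\]

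Next I would lower bound $-\ln(\cR(w_t)-\bar\cR)$ via \Cref{fact:gen:iter}. The same monotonicity gives $\cR(w_j)-\bar\cR\le\cR(w_{t_0})-\bar\cR\le\epsilon/n$ for every $t_0\le j\le t-1$; combined with the standing choice of $\epsilon,r$ small enough that $\epsilon/n\le\lambda(1-r)/2$, the per-step hypothesis of \Cref{fact:gen:iter} holds for the whole block $(w_j)_{j=t_0}^{t-1}$, so its telescoped conclusion gives
\[
-\ln(\cR(w_t)-\bar\cR)\ge-\ln(\cR(w_{t_0})-\bar\cR)+r(1-\epsilon)\gamma\sum_{j=t_0}^{t-1}\heta_j(1-\heta_j/2)\gamma_j.
\]
Because $\cR(w_{t_0})-\bar\cR\le\epsilon/n\le 1/n$, the first term on the right is at least $\ln n$, and the residual $\ln(1/\epsilon)\ge0$ may simply be discarded; this $\ln n$ exactly absorbs the $g^*(\barq)\le\ln n$ from the previous display.

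Finally I would substitute this lower bound back into the \Cref{fact:gen:ip} estimate: the $\ln n$ cancels $g^*(\barq)$, the factor $\gamma$ produced by \Cref{fact:gen:iter} cancels the $\gamma$ in the denominator, and what survives is exactly
\[
\frac{\ip{\baru}{w}}{|\baru|\cdot|w|}\ge\frac{r(1-\epsilon)\sum_{j=t_0}^{t-1}\heta_j(1-\heta_j/2)\gamma_j}{|w|}-\frac{\ln2}{\gamma|w|}-\frac{|\Pi_S(w)|}{\gamma|w|}.
\]
The only delicate point I anticipate is bookkeeping the constraints linking $\epsilon$, $r$, $\lambda$, and $n$ so that \Cref{fact:gen:iter} is legitimately invoked on the \emph{entire} interval $[t_0,t)$ rather than a single step — which in turn rests on risk monotonicity for all intermediate iterates, a fact already furnished by \Cref{fact:smooth_ineq}. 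Everything else is pure cancellation.
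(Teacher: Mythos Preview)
Your proposal is correct and follows essentially the same route as the paper: invoke \Cref{fact:gen:ip}, feed in the telescoped bound from \Cref{fact:gen:iter}, then cancel $\ln n$ against $g^*(\barq)\le\ln n$ and the extra $\gamma$ against the denominator. Your explicit flagging of the side condition $\epsilon/n\le\lambda(1-r)/2$ needed to invoke \Cref{fact:gen:iter} on the whole block is apt, since the lemma statement leaves $r$ implicit; the paper's proof likewise glosses over this and only enforces it downstream when applying the lemma.
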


As in the separable case, an extended amount of careful massaging,
together with \Cref{fact:smooth_ineq} and \Cref{fact:wt_lnt} to control the warm start,
is sufficient to establish the parameter convergence of \Cref{fact:min_norm} in the general case.

\section{Related work}\label{sec:related}

The technical basis for this work is drawn from the literature on AdaBoost,
which was originally stated for separable data \citep{freund_schapire_adaboost},
but later adapted to general instances
\citep{mukherjee_rudin_schapire_adaboost_convergence_rate,primal_dual_boosting}.
This analysis revealed not only a problem structure which can be refined into the
$(S,S^\perp)$ used here, but also the convergence to maximum margin solutions
\citep{mjt_margins}.  Since the structural analysis is independent of the optimization
method, the key structural result, \Cref{fact:struct} in \Cref{sec:struct},
can be partially found in prior work;
the present version provides not only an elementary proof, but moreover
differs by providing $S$ and $S^\perp$ (and not just a partition of the data)
and the subsequent construction of a unique $\baru$ and its properties.

The remainder of the analysis has some connections to the AdaBoost literature,
for instance when providing smoothness inequalities for $\cR$ (cf. \Cref{fact:smooth_ineq}).
There are also some tools borrowed from the convex optimization literature,
for instance smoothness-based convergence proofs of gradient descent \citep{nesterov,bubeck},
and also from basic learning theory, namely an adaptation of ideas from the perceptron
convergence proof in order to bound $|w_t|$ \citep{novikoff}.

Another close line of work is an analysis of gradient descent for logistic
regression when the data is separable \citep{nati_logistic,GLSS18,NLGSS18}.
The analysis is conceptually different (tracking $(w_j)_{j\geq 0}$ in
all directions, rather than the Fenchel-Young and smoothness approach here), and
(assuming linear separability) achieves a better rate than the one here,
although it is not clear if this is possible in the nonseparable case.
Other work shows that not just gradient descent but also steepest descent with other norms
can lead to margin maximization \citep{GLSS18,mjt_margins},
and that constructing loss functions with an explicit goal of margin maximization
can lead to better rates \citep{NLGSS18}.
Another line of work uses condition numbers to analyze these problems
with a different parameterization \citep{rfreund}.

There has been a large literature on risk convergence of logistic regression.
\citep{hazan_online,mahdavi} assume a bounded domain,
exploit the exponential concavity of the logistic loss
and apply online Newton step to give a $\cO(1/t)$ rate.
However, on a domain with norm bound $D$, the exponential concavity term is $1/\beta = e^D$.
In the unbounded setting considered in this paper,
this term is a factor of $\mathrm{poly}(t)$
since $|w_t|=\Theta(\ln(t))$ (cf. \Cref{fact:wt_lnt}).
\citep{bach2013non,bach_logistic} assume optima exist, and
the rates depend inversely on the smallest eigenvalue of the Hessian at optima,
which similarly introduce a factor of $\mathrm{poly}(t)$ when there is no finite optimum.

There is some work in online learning on optimization over unbounded sets,
for instance bounds where the regret scales with the norm of the comparator
\citep{orabona__coin_betting_unbounded,streeter__noregret_unconstrained}.
By contrast, as the present work is not adversarial and instead has a fixed training
set, part of the work (a consequence of the structural result, \Cref{fact:struct})
is the existence of a good, small comparator.

\subsection*{Acknowledgements}

The authors are grateful for support from the NSF under grant
IIS-1750051.

\bibliography{../bib}
\bibliographystyle{plainnat}

\appendix

\section{Omitted proofs from \Cref{sec:struct}}

Before proving \Cref{fact:struct},
note the following result characterizing margin maximization over $S^\perp$.
\begin{lemma}
  \label[lemma]{fact:margin_duality}
  Suppose $A_\perp$ has $n_c>0$ rows and there exists $u$ with $A_\perp u < 0$.  Then
  \[
    \gamma
    := -\min\big\{ \max_{i} (A_\perp u)_i  : |u|= 1\big\}
    =\min\big\{|A_\perp^{\top}q|:q \geq 0, \sum_i q_i = 1 \big\}
    > 0.
  \]
  Moreover there exists a unique nonzero primal optimum $\baru$,
  and every dual optimum $\bar{q}$ satisfies $\bar{u}=-A_\perp^{\top}\bar{q}/\gamma$.
\end{lemma}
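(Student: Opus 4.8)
The plan is to read this off as the minimax duality for hard-margin classification. First I would record that a primal optimum exists and $\gamma>0$: the map $u\mapsto\max_i(A_\perp u)_i$ is continuous, so it attains its minimum over the compact unit sphere at some $\baru$, and evaluating it at the hypothesized separator (rescaled to unit norm, which keeps every coordinate strictly negative) shows this minimum is negative, i.e.\ $\gamma>0$, and the minimizer $\baru$ is a primal optimum.

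The heart of the argument is the minimax step. Writing $\max_i(A_\perp u)_i=\max_{q\in\Delta}\ip{q}{A_\perp u}$ for the probability simplex $\Delta:=\{q\geq0:\sum_i q_i = 1\}$, I would first replace the nonconvex constraint $|u|=1$ by the ball $|u|\leq1$: since the objective is negative somewhere, any minimizer in the open ball could be scaled outward to strictly decrease it, so $\min_{|u|\leq1}\max_{q\in\Delta}\ip{q}{A_\perp u}=-\gamma$ with the minimum attained on the sphere. Now $\{|u|\leq1\}$ and $\Delta$ are convex and compact and $(u,q)\mapsto\ip{q}{A_\perp u}$ is bilinear, so Sion's minimax theorem gives
\[
  -\gamma
  = \min_{|u|\leq1}\max_{q\in\Delta}\ip{q}{A_\perp u}
  = \max_{q\in\Delta}\min_{|u|\leq1}\ip{A_\perp^\top q}{u}
  = \max_{q\in\Delta}\bigl(-|A_\perp^\top q|\bigr),
\]
which is exactly $\gamma=\min\{|A_\perp^\top q|:q\in\Delta\}$; positivity of $\gamma$ then forces $A_\perp^\top q\neq0$ for every $q\in\Delta$, in particular for every dual optimum $\barq$.

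For the last claims I would use that, since the $\min$ and $\max$ above agree, every pair consisting of a primal optimum $\baru$ and a dual optimum $\barq$ is a saddle point; in particular $\baru$ minimizes the linear functional $u\mapsto\ip{A_\perp^\top\barq}{u}$ over the unit ball. Because $A_\perp^\top\barq\neq0$, that functional has the unique minimizer $-A_\perp^\top\barq/|A_\perp^\top\barq|=-A_\perp^\top\barq/\gamma$, so $\baru=-A_\perp^\top\barq/\gamma$. Fixing one dual optimum, the right-hand side does not depend on $\baru$, so the primal optimum is unique; it is nonzero since $|\baru|=|A_\perp^\top\barq|/\gamma=1$; and the displayed identity read for an arbitrary dual optimum is the ``moreover'' statement. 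I expect the only real subtlety to be the sphere-to-ball reduction, which is precisely where strict separability is used---without a strictly negative point the ball-minimum would sit at the origin and the duality would be vacuous---while the rest is a routine application of Sion's theorem together with the fact that a nonzero linear functional has a unique minimizer on the Euclidean ball.
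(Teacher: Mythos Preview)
Your argument is correct and reaches the same conclusions, but the route differs from the paper's in two places. For the duality itself, the paper invokes the Fenchel--Rockafellar duality theorem applied to $|\cdot|_2 + \iota_\Delta$, reading off the conjugates $\iota_\Delta^*(v)=\max_i v_i$ and $(|\cdot|_2)^* = \iota_{|\cdot|\leq 1}$; you instead relax the sphere to the ball and apply Sion's minimax theorem to the bilinear form $\ip{q}{A_\perp u}$. Both are standard, and your sphere-to-ball reduction is exactly the step that corresponds to checking the constraint qualification in the Fenchel approach; Sion is arguably the lighter tool here since no conjugate calculus is needed. For uniqueness, the paper argues directly by strict convexity of the Euclidean norm (averaging two distinct unit optima and renormalizing would strictly improve the margin), whereas you deduce it from the saddle-point identity: fixing one dual optimum $\barq$, every primal optimum must equal $-A_\perp^\top\barq/\gamma$. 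Your route is a little slicker and simultaneously yields the formula and the uniqueness, while the paper's midpoint argument is self-contained and does not require first establishing the primal--dual relation.
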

\begin{proof}
  To start, note $\gamma > 0$ since there exists $u$ with $A_\perp u < 0$.

  Continuing, for convenience define simplex $\Delta := \{ q \in\R^{n_c} : q\geq 0, \sum_i q_i = 1\}$,
  and convex indicator $\iota_K(z) = \infty\cdot\mathds{1}[z\in K]$.
  With this notation, note the Fenchel conjugates
  \begin{align*}
    \iota_{\Delta}^*(v)
    &= \sup_{q\in{\Delta}} \ip{v}{q} = \max_i v_i,\\
    (|\cdot|_2)^*(q)
    &= \iota_{|\cdot|_2 \leq 1}(q).
  \end{align*}
  Combining this with the Fenchel-Rockafellar duality theorem
  \citep[Theorem 3.3.5, Exercise 3.3.9.f]{borwein_lewis},
  \begin{align*}
    \min |A_\perp^\top q|_2 + \iota_\Delta(q)
    &= \max - \iota_{|\cdot|_2 \leq 1}(u) - \iota_\Delta^*(-A_{\perp} u)
    \\
    &= \max\cbr{ - \max_i (-A_{\perp}u)_i : |u|_2 \leq 1 }
    \\
    &= -\min\cbr{ \max_i (A_{\perp}u)_i : |u|_2 \leq 1 },
  \end{align*}
  and moreover every primal-dual optimal pair $(\baru,\barq)$ satisfies
  $A_{\perp}^{\top}\bar{q}\in\partial\left(\iota_{|\cdot|_2 \leq 1}\right)(-\bar{u})$,
  which means $\bar{u}=-A_{\perp}^{\top}\bar{q}/\gamma$.

  It only remains to show that $\baru$ is unique.
  Since $\gamma > 0$,
  necessarily any primal optimum has unit length,
  since the objective value will only decrease by increasing the length.
  Consequently, suppose $u_1$ and $u_2$ are two primal optimal unit vectors.
  Then $u_3 := (u_1 + u_2)/2$ would satisfy
  \[
    \max_i (A_{\perp}u_3)_i
    = \frac 1 2 \max_i (A_{\perp}u_1 + A_{\perp}u_2)_i
    \leq \frac 1 2 \del{ \max_i (A_{\perp}u_1) + \max_j(A_{\perp}u_2)_j }
    = \max_i (A_{\perp}u_1)_i,
  \]
  but then the unit vector $u_4 := u_3 / |u_3|$ would have $|u_4| > |u_3|$ when $u_1\neq u_2$,
  which implies $\max_i(A_{\perp}u_4) < \max_i (A_{\perp}u_3) = \max_i (A_{\perp}u_1)$, a contradiction.
\end{proof}

The proof of \Cref{fact:struct} follows.

\begin{proofof}{\Cref{fact:struct}}
  Partition the rows of $A$ into $A_c$ and $A_S$ as follows.
  For each row $i$, put it in $A_c$ if there exists $u_i$ so that
  $Au_i \leq 0$ (coordinate-wise) and $(Au_i)_i < 0$;
  otherwise, when no such $u_i$ exists, add this row to $A_S$.
  Define $S:= \SPAN(A_S^\top)$, the linear span of the rows of $A_S$.
  This has the following consequences.
  \begin{itemize}
    \item
      To start, $S^\perp = \SPAN(A_S^\top)^\perp = \ker(A_S) \subseteq \ker(A)$.
    \item
      For each row $i$ of $A_c$,
      the corresponding $u_i$ has $A_Su_i = 0$, since otherwise $Au_i\leq 0$ implies there would be a negative coordinate of $A_Su_i$,
      and this row should be in $A_c$ not $A_S$.
      Combining this with the preceding point, $u_i \in \ker(A_S) = S^\perp$.
      Define $\tilde u := \sum_i u_i \in S^\perp$, whereby $A_c \tilde u < 0$ and $A_S \tilde u = 0$.
      Lastly, $\tilde u \in \ker(A_S)$ implies moreover that $A_\perp \tilde u = A_c \tilde u < 0$.
      As such, when $A_c$ has a positive number of rows,
      \Cref{fact:margin_duality} can be applied,
      resulting in the desired unique $\baru = -A_\perp^\top/\gamma \in S^\perp$ with $\gamma > 0$.

    \item
      $S$, $S^\perp$, $A_S$, and $A_c$, and $\baru$ are unique and constructed from $A$ alone,
      with no dependence on $\ell$.
    \item
      If $A_c$ is empty, there is nothing to show, thus suppose $A_c$ is nonempty.
      Since $\lim_{z\to-\infty} \ell(z) = 0$,
      \[
        0
        \leq \inf_{w\in \R^d} L(A_cw)
        \leq \inf_{w\in S^\perp} L(A_cw)
        \leq \inf_{u \in S^\perp} L(A_c u)
        \leq \lim_{r \to \infty} L(r\cdot A_c \baru)
        = 0.
      \]
      Since these inequalities start and end with 0, they are equalities, and consequently
      $\inf_{w\in \R^d} = \inf_{u\in S^\perp} L(A_cu) = 0$.
      Moreover,
      \begin{align*}
        \inf_{w\in\R^d} L(A w)
        &\leq
        \inf_{\substack{v\in S \\ u \in S^\perp}}
        \del{ L(A_S (u+ v) + L(A_c (u+v)) }
        =
        \inf_{v \in S}
        \del{ L(A_S v) + \inf_{u\in S^\perp} L(A_c (u+v)) }
        \\
        &\leq
        \del{ \inf_{v \in S}
        L(A_S v)}
        + \del{ \inf_{u\in S^\perp} L(A_c u) }
        =
        \del{ \inf_{v \in S}
        L(A_S v)}
        \leq
        \inf_{w \in \R^d}
        L(A w).
      \end{align*}
      which again is in fact a chain of equalities.

    \item
      For every $v \in S$ with $|v|> 0$, there exists a row $a$ of $A_S$ such that $\ip{a}{v} > 0$.
      To see this, suppose contradictorily that $A_S v \leq 0$.
      It cannot hold that $A_S v = 0$, since $v\neq 0$ and $\ker(A_S) \subseteq S^\perp$.
      this means $A_Sv \leq 0$ and moreover $(A_Sv)_i < 0$ for some $i$.
      But since $A\baru \leq 0$ and $A_c\baru < 0$,
      then for a sufficiently large $r > 0$, $A (v+r\baru) \leq 0$ and $(A_S(v+r\baru))_j< 0$,
      which means row $j$ of $A_S$ should have been in $A_c$, a contradiction.

    \item
      Consider any $v\in S\setminus \{0\}$.
      By the preceding point, there exists a row $a$ of $A_S$ such that $\ip{a}{v} > 0$.
      Since $\ell(0) > 0$ (because $\ell'' > 0)$ and $\lim_{z\to-\infty}$)
      and $\lim_{z\to-\infty} = 0$, there exists $r > 0$ so that
      $\ell(-r\ip{a}{v}) = \ell(0)/2$.  By convexity, for any $t>0$,
      setting $\alpha := r / (t+r)$ and noting $\alpha\ip{a}{tv} + (1-\alpha)\ip{a}{-rv} = 0$,
      \[
        \alpha \ell(t\ip{a}{v})
        \geq
        \ell(0)
        -
        (1-\alpha)\ell(-r\ip{a}{v})
        = \del{\frac{1+\alpha}{2}}\ell(0),
      \]
      thus $\ell(t\ip{a}{v}) \geq \del{\frac{1+\alpha}{2\alpha}}\ell(0)
      = \del{\frac{t+2r}{2r}}\ell(0)$, and
      \[
        \lim_{t\to\infty} \frac {L(tAv) - L(0)}{t}
        \geq
        \lim_{t\to\infty} \frac {\ell(t\ip{a}{v}) - n\ell(0)}{t}
        \geq
        \lim_{t\to\infty} \frac{\ell(0)}{2r}\del{\frac{(t+2r) - 2nr}{t}}
        \geq
        \frac{\ell(0)}{2r}
        >
        0.
      \]
      Consequently, $L\circ A$ has compact sublevel sets over $S$
      \citep[Proposition B.3.2.4]{HULL}.

    \item
      Note $\nabla^2 L(v) = \diag(\ell''(v_1),\ldots,\ell''(v_n))$.
      Moreover, since $\ker(A) \subseteq S^\perp$,
      then the image $B_0 := \{ Av : v\in S, |v| = 1 \}$ over the surface
      of the ball in $S$ through $A$ is a collection of vectors with positive length.
      Thus for any compact subset $S_0 \subseteq S$,
      \begin{align*}
        \inf_{\substack{v_1\in S_0\\v_2\in S, |v_2|=1}}
        v_2^\top \nabla^2(L\circ A)(v_1) v_2
        &=
        \inf_{\substack{v_1\in S_0\\v_2\in S, |v_2|=1}}
        (Av_2)^\top \nabla^2 L(Av_1) (Av_2)
        =
        \inf_{\substack{v_1\in S_0\\v_3 \in B_0}}
        v_3^\top \nabla^2 L(Av_1) v_3
        \\
        &\geq
        \inf_{\substack{v_1\in S_0\\v_3 \in B_0}}
        |v_3|^2 \min_i \ell''((v_1)_i)
        > 0,
      \end{align*}
      the final inequality since the minimization is of a continuous function over
      a compact set, thus attained at some point, and the infimand is positive over the domain.
      Consequently, $L\circ A$ is strongly convex over compact subsets of $S$.

    \item
      Since $L\circ A$ is strongly convex over $S$ and moreover has bounded sublevel sets over $S$,
      it attains a unique optimum over $S$.

  \end{itemize}
\end{proofof}

\section{Omitted proofs from \Cref{sec:risk}}

To start, note how the three key lemmas provided in the main text
lead to a proof of \Cref{thm:risk_converge}.

\begin{proofof}{\Cref{thm:risk_converge}}
  Since $\eta_j \leq 1$,
  then \Cref{fact:smooth_ineq}
  guarantees both that the desired smoothness inequality
  holds and that function values decrease,
  whereby $\heta_j \leq \eta_j \cR(w_j) \leq \eta_j$.
  Thus, by \Cref{lem:magic_ineq},
  for any $z\in\R^d$,
  \begin{align*}
    2 \del{\sum_{j<t}\eta_j} \del{\cR(w_t) - \cR(z)}
    &\leq
    2 \sum_{j<t} \eta_j \del{\cR(w_j) - \cR(z)}
    + 2 \sum_{j<t} \eta_j \del{\cR(w_{j+1} - \cR(w_j)}
    \\
    &\leq
    2\sum_{j<t}\eta_j \del{ \cR(w_j) - \cR(z) }
    - \sum_{j<t} \frac {\eta_{j}}{1 - \nicefrac {\eta_{j}}{2} }
    \del{ \cR(w_j) - \cR(w_{j+1}) }
    \\
    &\leq
    \envert{w_{0} - z}^2 - \envert{ w_t - z }^2
    \\
    &\leq
    \envert{z}^2.
  \end{align*}
  Consequently, by the choice
  $z:=\barv + \baru(\nicefrac{\ln(t)}{\gamma}$
  and \Cref{lem:fixedDirRate},
  \[
    \cR(w_t)
    \leq
    \cR(z) + \frac{|z|^2}{2\sum_{j<t} \eta_j}
    \leq
    \inf_w \cR(w) + \frac {\exp(|\barv|)}{t} + \frac{|\barv|^2 + \ln(t)^2/\gamma^2}{2\sum_{j<t} \eta_j}.
  \]
\end{proofof}

To fill out the proof, first comes the smoothness-based risk guarantee.

\begin{proofof}{\Cref{lem:magic_ineq}}
  Define $r_j := \eta_j(1 - \beta\eta_j/2)$.
  For any $j$,
  \begin{align*}
    \envert{w_{j+1} - z}^2
    &=
    \envert{w_{j} - z}^2 + 2\eta_{j}\ip{\nf(w_j)}{z - w_j} + \eta_{j}^2 \envert{ \nf(w_j) }^2
    \\
    &\leq
    \envert{w_{j} - z}^2 + 2\eta_{j}\del{ f(z) - f(w_j) } + \frac {\eta_{j}^2}{r_{j}} \del{ f(w_j) - f(w_{j+1}) }.
  \end{align*}
  Summing this inequality over $i\in \cbr{0, \ldots, t-1}$ and rearranging gives the bound.
\end{proofof}

With that out of the way, the remainder of this subsection
establishes smoothness properties of $\cR$.
For convenience, for the rest of this subsection define
$w' := w - \eta \nR(w) =  w - \eta \AT\nL(Aw) / n$.
Additionally, suppose throughout that $\ell$ is twice differentiable
and $\max_i |A_i|\leq 1$.

\begin{lemma}
  \label[lemma]{fact:smooth:1}
  For any $w\in \R^d$,
  \begin{align*}
    \cR(w')
    &\leq \cR(w) - \eta |\nR(w)|^2
    + \frac{\eta^2} 2 |\nR(w)|^2 \max_{v\in [w,w']} \sum_i \ell''(A_iv)/n.
  \end{align*}
\end{lemma}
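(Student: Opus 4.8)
The plan is to prove Lemma~\ref{fact:smooth:1} by a second-order Taylor expansion of $\cR$ along the segment $[w,w']$, exploiting that $\cR = (L\circ A)/n$ and that the Hessian of $L$ is diagonal. Concretely, $\cR(w') = \cR(w) + \ip{\nR(w)}{w'-w} + \tfrac12 (w'-w)^\top \nabla^2\cR(v)(w'-w)$ for some $v\in[w,w']$ (Taylor with Lagrange remainder, valid since $\ell$ is twice differentiable). Since $w'-w = -\eta\nR(w)$, the first-order term is exactly $-\eta|\nR(w)|^2$.

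For the remainder term, I would write $\nabla^2\cR(v) = \tfrac1n \AT \nabla^2 L(Av) A$ and note $\nabla^2 L(Av) = \diag(\ell''((Av)_1),\dots,\ell''((Av)_n))$. Hence
\[
  (w'-w)^\top \nabla^2\cR(v)(w'-w)
  = \frac{1}{n}\sum_i \ell''((Av)_i)\, \bigl(A_i(w'-w)\bigr)^2.
\]
Now bound $(A_i(w'-w))^2 = \eta^2 (A_i \nR(w))^2 \leq \eta^2 |A_i|^2 |\nR(w)|^2 \leq \eta^2|\nR(w)|^2$ by Cauchy--Schwarz and $|A_i|\leq 1$; pulling the resulting $\eta^2|\nR(w)|^2$ out of the sum and replacing each $\ell''((Av)_i)$ by the maximum over the segment gives exactly the claimed inequality, after taking the $\max_{v\in[w,w']}$ to cover whichever $v$ the mean value theorem produced.

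The only mild subtlety — really the one place to be careful rather than a genuine obstacle — is the form of the Taylor remainder: I would use the integral form $\cR(w') - \cR(w) - \ip{\nR(w)}{w'-w} = \int_0^1 (1-s)(w'-w)^\top\nabla^2\cR(w+s(w'-w))(w'-w)\,ds$, which avoids any vector-valued mean value subtlety, bound the integrand uniformly by its sup over $[w,w']$, and use $\int_0^1(1-s)\,ds = \tfrac12$ to get the factor $\tfrac12$. Everything else is the routine diagonal-Hessian computation above; no result beyond twice-differentiability of $\ell$ and $\max_i|A_i|\leq 1$ is needed.
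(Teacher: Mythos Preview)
Your argument is correct and matches the paper's proof essentially step for step: Taylor expansion of $\cR$ along $[w,w']$, the diagonal Hessian $\nabla^2 L(Av)=\diag(\ell''((Av)_i))$, and the bound $(A_i(w'-w))^2\le \eta^2|\nR(w)|^2$ from $|A_i|\le 1$. The only cosmetic differences are that the paper phrases the last step as a H\"older-type bound $\sum_i (A_i(w-w'))^2\ell''(A_iv)\le |A(w-w')|_\infty^2\sum_i\ell''(A_iv)$ and leaves the Taylor remainder form implicit, whereas you bound each coordinate separately and spell out the integral remainder to secure the factor $\tfrac12$.
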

\begin{proof}
  By Taylor expansion,
  \begin{align*}
    \cR(w')
    &\leq \cR(w) - \eta |\nR(w)|^2
    +
    \frac 1 2 \max_{v\in [w,w']} \sum_i (A_i(w-w'))^2 \ell'' (A_iv) /n.
  \end{align*}
  By H\"older's inequality,
  \begin{align*}
    \max_{v\in [w,w']}\sum_i (A_i(w-w'))^2 \ell'' (A_iv)
    &\leq \max_{v\in [w,w']} |A(w-w')|_\infty^2 \sum_i \ell'' (A_iv).
  \end{align*}
  Since $\max_i |A_i| \leq 1$,
  \[
    |A (w - w') |_\infty^2
    = \eta^2 | A \nR(w) |_\infty^2
    = \eta^2 \max_i \ip{A_{i}}{\nR(w)}^2
    \leq \eta^2 \max_i |A_{i}|^2 |\nR(w)|^2
    \leq \eta^2 |\nR(w)|^2.
  \]
  Thus
  \begin{align*}
    \cR(w')
    &\leq \cR(w) - \eta |\nR(w)|^2
    + \frac{\eta^2} 2 |\nR(w)|^2 \max_{v\in [w,w']} \sum_i \ell''(A_iv)/n.
  \end{align*}
\end{proof}

\begin{lemma}
  \label[lemma]{fact:smooth:2}
  Suppose $\ell',\ell'' \leq \ell$ and $\ell$ is convex.
  Then, for any $w\in\R^d$,
  \[
    \max_{v\in [w,w']} \sum_i \ell''(A_iv)/n
    \leq \max\cbr{ \cR(w), \cR(w') }.
  \]
  Define $\heta := \eta\cR(w)$ and suppose $\heta \leq 1$;
  then $\cR(w') \leq \cR(w)$
  and
  \[
    \cR(w')\leq \cR(w)\del{ 1 - \heta (1-\heta/2) \frac { |\nR(w) |^2 }{ \cR(w)^2 } }.
  \]
\end{lemma}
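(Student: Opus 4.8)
The plan is to read both inequalities off of \Cref{fact:smooth:1}, using only convexity of $\cR$ along the segment $[w,w']$ and the pointwise bound $\ell''\le\ell$. For the first inequality, observe that since $\ell$ is convex, the map $v\mapsto\cR(v)=\sum_i\ell(A_iv)/n$ is convex, so along the segment $[w,w']$ it never exceeds $\max\{\cR(w),\cR(w')\}$; combining this with $\ell''\le\ell$ gives, for every $v\in[w,w']$,
\[
  \sum_i\ell''(A_iv)/n\le\sum_i\ell(A_iv)/n=\cR(v)\le\max\{\cR(w),\cR(w')\},
\]
and taking the supremum over $v$ yields the first display.

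The only real work is the descent claim $\cR(w')\le\cR(w)$ (under $\heta=\eta\cR(w)\le1$). The subtlety is a mild chicken-and-egg: \Cref{fact:smooth:1} bounds $\cR(w')$ in terms of a quantity ($\max_{v\in[w,w']}\sum_i\ell''(A_iv)/n$) which, by the first inequality, itself involves $\cR(w')$. I would break this by contradiction. First record the crude bound $|\nR(w)|=|A^\top\nabla L(Aw)|/n\le\cR(w)$, valid since $0\le\ell'\le\ell$ (true for $\ell\in\{\llog,\lexp\}$) and $\max_i|A_i|\le1$; hence $\eta^2|\nR(w)|^2\le\heta^2\le1$. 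If $\cR(w')>\cR(w)$, the first inequality gives $\max_{v\in[w,w']}\sum_i\ell''(A_iv)/n\le\cR(w')$, so \Cref{fact:smooth:1} becomes $\cR(w')\bigl(1-\frac{\eta^2}{2}|\nR(w)|^2\bigr)\le\cR(w)-\eta|\nR(w)|^2$; the coefficient on the left is $\ge\frac12>0$, and multiplying $\cR(w')>\cR(w)$ through by it and rearranging would force $\frac{\eta\cR(w)}{2}>1$, i.e. $\heta>2$, a contradiction (the case $\nR(w)=0$ being trivial, as then $w'=w$). An alternative that avoids even the crude bound is a short continuity argument along $s\mapsto\cR(w-s\eta\nR(w))$ on $[0,1]$: wherever this convex map is $\le\cR(w)$, the first inequality and \Cref{fact:smooth:1} force it to strictly decrease when $\nR(w)\ne0$, so it stays $\le\cR(w)$ throughout and in particular at $s=1$.

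Once $\cR(w')\le\cR(w)$ is known, the first inequality collapses $\max\{\cR(w),\cR(w')\}$ to $\cR(w)$, and \Cref{fact:smooth:1} gives directly
\[
  \cR(w')\le\cR(w)-\eta|\nR(w)|^2+\frac{\eta^2}{2}|\nR(w)|^2\cR(w)=\cR(w)-\eta|\nR(w)|^2\Bigl(1-\frac{\heta}{2}\Bigr);
\]
factoring out $\cR(w)$ and using $\eta/\cR(w)=\heta/\cR(w)^2$ turns the right-hand side into $\cR(w)\bigl(1-\heta(1-\heta/2)|\nR(w)|^2/\cR(w)^2\bigr)$, which is the claim. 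The main obstacle is thus entirely contained in the descent step; the rest is bookkeeping.
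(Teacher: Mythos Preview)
Your proof is correct and follows essentially the same route as the paper: the first display is obtained from $\ell''\le\ell$ plus convexity of $\cR$ along $[w,w']$, and the descent claim is proved by contradiction using the crude bound $|\nR(w)|\le\cR(w)$ (from $\ell'\le\ell$ and $\max_i|A_i|\le1$) to force $\heta>2$, after which the final display falls out of \Cref{fact:smooth:1}. The only cosmetic difference is how the contradiction is packaged algebraically.
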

\begin{proof}
  Since $\ell'' \leq \ell$ and $\ell$ is convex,
  \[
    \max_{v\in [w,w']} \sum_i \ell'' (A_iv)/n
    \leq \max_{v\in [w,w']} \sum_i \ell(A_iv)/n
    = \max_{v\in [w,w']} \cR(v)
    = \max\cbr{ \cR(w), \cR(w') }.
  \]
  Combining this, the choice of $\eta$,
  and \Cref{fact:smooth:1},
  \begin{align*}
    \cR(w')
    &\leq \cR(w) - \eta |\nR(w)|^2
    + \frac{\eta^2} 2 |\nR(w)|^2 \max\cbr{ \cR(w), \cR(w') } \\
    & =\cR(w)-\frac {\heta |\nR(w)|^2}{\cR(w)}
    \del{1 - \frac{\heta} 2 \frac {\max\cbr{\cR(w), \cR(w')}}{\cR(w)}}.
  \end{align*}

  As a final simplification,
  suppose $\cR(w') > \cR(w)$;
  since $\heta\leq 1$ and $\ell' \leq \ell$ and $\max_i |A_i|\leq 1$,
  \[
    \frac{\cR(w')}{\cR(w)}-1
    \le \frac {\heta |\nR(w)|^2}{\cR(w)^2}
    \del{\frac{\heta} 2 \frac {\cR(w')}{\cR(w)}-1}
    \le\heta\del{\frac{\heta} 2 \frac {\cR(w')}{\cR(w)}-1}
    \le \frac{\heta} 2 \frac {\cR(w')}{\cR(w)}-1
    \le \frac{1} 2 \frac {\cR(w')}{\cR(w)}-1,
  \]
  a contradiction.
  Therefore $\cR(w') \leq \cR(w)$, which in turn implies
  \begin{align*}
    \cR(w')
    &\leq \cR(w) - \frac {\heta |\nR(w)|^2}{\cR(w)}
    \del{1 - \frac{\heta} 2 }.
  \end{align*}
\end{proof}

Together, these pieces prove the desired smoothness inequality.

\begin{proofof}{\Cref{fact:smooth_ineq}}
  For any $j < t$,
  by \Cref{fact:smooth:2} and the definition of $\gamma_j$,
  \[
    \cR(w_{j+1})\leq \cR(w_j)\del{ 1 - \heta_j (1-\heta_j/2) \frac { |\nR(w_j) |^2 }{ \cR(w_j)^2 } }
    =
    \cR(w_j)\del{ 1 - \heta_j (1-\heta_j/2) \gamma_j^2 }.
  \]
  Applying this recursively gives the bound.

  Lastly,
  \[
    |w_t|
    = \envert{ \sum_{j < t} \heta_j q_j }
    \leq \sum_{j < t} \envert{ \heta_j q_j }
    = \sum_{j < t} \heta_j \gamma_j.
  \]
\end{proofof}

\section{Omitted proofs from \Cref{sec:param}}

This section will be split into subsections paralleling
those in \Cref{sec:param}.

\subsection{Bounding $|\Pi_\perp w_t|$}

To start, the proof of the smoothness-based convergence guarantee,
but with sensitivity to $A_c$.

\begin{proofof}{\Cref{lem:magic_ineq_perp}}
    Fix any $u\in S^{\perp}$.
    Expanding the square,
    \begin{align*}
      \envert{ \Pip w_{j+1} - u }^2
      &=
      \envert{ \Pip w_j - u }^2
      +2\eta_j\ip{\Pip\nR(w_j)}{u - \Pip w_j}
      + \eta_j^2 \envert{ \Pip \nR(w_j) }^2,
    \end{align*}
    whose two key terms can be bounded as
    \begin{align*}
      \ip{\Pip\nR(w_j)}{u - \Pip w_j} & =\ip{\nR_c(w_j)}{u - \Pip w_j}
      \\
       & =\ip{\nR_c(w_j)}{u - w_j}+\ip{\nR_c(w_j)}{w_j - \Pip w_j}
       \\
       & \le\cR_c(u) - \cR_c(w_j)+\ip{\nR_c(w_j)}{w_j - \Pip w_j},
       \\
      \eta_j^2 \envert{ \Pip \nR(w_j) }^2 & \le\eta_j\envert{ \nR(w_j) }^2
      \\
       & \le 2\left(\cR(w_j)-\cR(w_{j+1})\right),
    \end{align*}
    the last inequality making use of smoothness, namely
    \Cref{fact:smooth_ineq}.
    Therefore
    \[
    \envert{ \Pip w_{j+1} - u }^2\le\envert{ \Pip w_j - u }^2+2\eta_j\left(\cR_c(u) - \cR_c(w_j)+\ip{\nR_c(w_j)}{w_j - \Pip w_j}\right)+2\left(\cR(w_j)-\cR(w_{j+1})\right).
    \]
    Applying $\sum_{j<t}$ to both sides and canceling terms yields
    \[
      \envert{ \Pip w_t - u}^2 \le |u|^2+2+\sum_{j<t}^{}2\eta_j\left(\cR_c(u) - \cR_c(w_j)\right)+\sum_{j<t}^{}2\eta_j\ip{\nR_c(w_j)}{w_j - \Pip w_j}
    \]
    as desired.
\end{proofof}

Proving \Cref{fact:wt_lnt} is now split into upper and lower bounds.

\begin{proofof}{upper bound in \Cref{fact:wt_lnt}}
    For a fixed $t\ge1$, define
    \[
      u := \frac {\ln(t)}{\gamma}\baru,
      \qquad
      \ell'_{<t} := \sum_{j<t} \eta_j \frac{|\nL(A_cw_j)|_1}{n},
      \qquad
      R := \sup_{j < t} \envert{ \Pip w_j - w_j }\le |\bar{v}|+\sqrt{\frac{2}{\lambda}},
    \]
    where the last inequality comes from \Cref{fact:param_converge_S}.

    The strategy of the proof is to rewrite various quantities in \Cref{lem:magic_ineq_perp}
    with $\ell'_{<t}$, which after applying \Cref{lem:magic_ineq_perp}
    cancel nicely to obtain an upper bound on $\ell'_{<t}$.
    This in turn completes the proof, since
    \[
      |\Pip w_t|
      \leq \sum_{j<t} \eta_j |\Pip \nabla \cR(w_t)|
      \leq \sum_{j<t} \eta_j |\nabla \cR_c(w_j)|
      \leq \sum_{j<t} \eta_j |\nabla L(A_c w_j)|/n
      = \ell'_{<t}.
    \]

    Proceeding with this plan, first note (similarly to the main text)
    \begin{align*}
    \envert{ \Pip w_t - u }
    &\geq \ip{\Pip w_t - u}{\baru}
    \\
    &= \ip{- \sum_{j<t} \eta_j \Pip \nR(w_j)/n}{\baru} - \ip{u}{\baru}
    \\
    &= \sum_{j<t} \eta_j \ip{-A_\perp^\top \nL(A_cw_j)/n}{\baru} - \frac {\ln(t)}{\gamma}
    \\
    &= \sum_{j<t} \eta_j \frac{\envert{\nL(A_cw_j)}_1}{n} \ip{-A_\perp^\top \frac{\nL(A_cw_j)}{|\nL(A_cw_j)|_1}}{\baru} - \frac {\ln(t)}{\gamma}
    \\
    &\geq \sum_{j<t} \eta_j \frac{\envert{\nL(A_cw_j)}_1}{n} \gamma - \frac {\ln(t)}{\gamma}
    \\
    &= \gamma \ell'_{<t} - \frac {\ln(t)}{\gamma},
    \end{align*}
    and since $\ell' \leq \ell$
    \begin{align*}
    \sum_{j<t} \eta_j \cR_c(w_j)
    &= \sum_{j<t} \eta_j L(A_cw_j)/n
    \\
    &\geq \sum_{j<t} \eta_j |\nL(A_cw_j)|_1/n
    \\
    &= \ell'_{<t},
    \end{align*}
    and
    \begin{align*}
    \sum_{j<t} \eta_j \ip{\nR_c(w_j)}{w_j - \Pip w_j}
    &\leq
    \sum_{j<t} \eta_j \envert{\nR_c(w_j)}\envert{w_j - \Pip w_j}
    \\
    &\leq
    \sum_{j<t} \eta_j \frac{|\nL(A_cw_j)|_1 }{n} \envert{A_c^\top \frac{\nL(A_cw_j)}{|\nL(A_cw_j)|_1}} R
    \\
    &\leq
    R \ell'_{<t}.
    \end{align*}
    Combining these terms with \Cref{lem:magic_ineq_perp},
    \begin{align*}
        2\ell'_{<t}
        + \del{\gamma \ell'_{<t} - \nicefrac{\ln(t)}{\gamma}}^2
        &\leq
        \sum_{j<t} 2\eta_j \cR_c(w_j)
        + \envert{ \Pip w_t - u }^2
        \notag\\
        &\leq
        \envert{u}^2
        +
        \sum_{j<t} 2\eta_j \ip{\nR_c(w_j)}{w_j - \Pip w_j}
        +
        \sum_{j<t}2\eta_j\cR_c(u)
        +
        2
        \notag\\
        &\leq
        \frac {\ln(t)^2}{\gamma^2}
        + 2R \ell'_{<t}
        + \frac 2 t \sum_{j<t} \eta_j
        + 2.
    \end{align*}
    Equivalently,
    \[
    2\ell'_{<t}
    + \gamma^2 (\ell'_{<t})^2
    \leq
    2\ell'_{<t} \ln(t)
    + 2R \ell'_{<t}
    + \frac 2 t \sum_{j<t} \eta_j
    + 2,
    \]
    which implies
    \[
    \ell'_{<t}
    \leq \max\cbr{\frac {4\ln(t)}{\gamma^2}, \frac {4R}{\gamma^2}, \frac 2 {t}\sum_{j<t}\eta_j, 2},
    \]
    since otherwise
    \begin{align*}
        2\ell'_{<t} \ln(t)
        + 2R \ell'_{<t}
        + \frac 2 t \sum_{j<t} \eta_j
        + 2
        &<
        \frac{\gamma^2(\ell'_{<t})^2}{2}
        +
        \frac{\gamma^2(\ell'_{<t})^2}{2}
        +
        \ell'_{<t}
        +
        \ell'_{<t}
        \\
        &\leq
        2\ell'_{<t} \ln(t)
        + 2R \ell'_{<t}
        + \frac 2 t \sum_{j<t} \eta_j
        + 2,
    \end{align*}
    a contradiction.
\end{proofof}

\begin{proofof}{lower bound in \Cref{fact:wt_lnt}}
    First note
    \begin{align*}
        n_c \exp(-|\Pip w_t|)
        & \leq n_c \ell(-|\Pip w_t|)/\ln 2
        \\
        & \leq L(A_c\Pip w_t)/\ln 2
        \\
        & = L\del{A_cw_t - A_c(w_t - \Pip w_t)}/\ln 2
        \\
        & = L\del{A_cw_t - A_c\Pi_S w_t}/\ln 2.
        \\
        & \leq  L\del{A_cw_t +R}/\ln 2.
    \end{align*}
    If $\ell = \lexp$, then $L(A_cw_t + R) = e^R L(A_cw_t)$.
    Otherwise, by Bernoulli's inequality,
    \[
      \sum_i \llog((A_cw_t)_i + R)
      = \sum_i \ln\del{ 1 + e^R \exp((A_cw_t)_i) }
      \leq \sum_i e^R \ln\del{ 1 + \exp((A_cw_t)_i) }.
    \]
    Combining these steps, and invoking \Cref{fact:struct},
    \begin{align*}
      n_c \ln(2) \exp(-|\Pip w_t|)
      &\leq \exp(R) L(A_c w_t)
      \\
      &= \exp(R)\del{ L(A w_t) - L(A_S w_t) }
      \leq \exp(R)\del{ L(A w_t) - \inf_w L(A w) }.
    \end{align*}
    By \Cref{thm:risk_converge},
    \begin{align*}
      \ln\del{ \cR(w_t)-\inf_w \cR(w) }
      &\le\ln\ \max\cbr{ \frac{2\exp\left(|\barv|\right)}{t},\ {} \frac{|\barv|^2 + \ln(t)^2/\gamma^2}{\sum_{j=0}^{t-1}\eta_j} }
      \\
      &=\max\cbr{ \ln(2) + |\barv| - \ln(t),\ {} \ln\del{|\barv|^2 + \ln(t)^2/\gamma^2} - \ln \del{\sum_{j=0}^{t-1}\eta_j} }.
    \end{align*}
    Together,
    \begin{align*}
      |\Pip w_t|
      &\geq - \ln\del{ \cR(A w_t) - \inf_w \cR(A w) }
      -R + \ln \ln(2) - \ln(n/n_c)
      \\
      &\geq
      \min\cbr{ -\ln(2) - |\barv| + \ln(t),\ {} -\ln\del{|\barv|^2 + \ln(t)^2/\gamma^2} + \ln \del{\sum_{j=0}^{t-1}\eta_j} }
      -R + \ln \ln(2) - \ln(n/n_c).
    \end{align*}
\end{proofof}

\subsection{Parameter convergence when separable}

First, the technical inequality on $\llog$ and $\lexp$.

\begin{proofof}{\Cref{fact:log_approx}}
  The claims are immediate for $\ell=\lexp$, thus consider $\ell=\llog$.
  First note that $r \mapsto (e^r - 1) / r$ is increasing and not smaller than 1
  when $r\geq 0$.
  Now set $r := \llog(z)$, whereby $\llog'(z)= e^z / (1+e^z) = (e^r-1)/e^r$.
  Suppose $r\le\epsilon$; since $\exp(\cdot)$ lies above its tangents,
  then $1-\eps \leq 1-r \leq e^{-r}$,
  and
  \[
    \frac{\llog'(z)}{\llog(z)}=\frac{e^r-1}{re^r}\ge \frac{1}{e^r}\ge1-\epsilon.
  \]

  For $\lexp(z) \leq 2\llog(z)$, note
  \[
    \frac{\lexp(z)}{\llog(z)}=\frac{e^r-1}{r}
  \]
  is increasing for $r = \llog(z) > 0$, and $e-1<2$.
\end{proofof}

Leveraging this inequality, the following proof handles \Cref{fact:min_norm} in the general
separable case (not just $\lexp$, as in the main text).

\begin{proofof}{\Cref{fact:min_norm} when $A = A_c = A_\perp$ (separable case)}
    Let $0<\epsilon\le1$ be arbitrary,
    and select $t_0$ so that $\cR(w_{t_0}) \leq \eps/n$. By \Cref{fact:smooth_ineq}, since $\eta_j\le1$, the loss decreases at each step, and thus for any $t\ge t_0$, $\cR(w_t)\le\eps/n$. Now let $t\ge t_0$ and $\cR(w)\le\cR(w_t)$ with arbitrary $t$ and $w$.
    By \Cref{fact:ip} and \Cref{fact:smooth_ineq},
    \begin{equation}\label{eq:sep_ineq1}
        \begin{split}
            \frac{1}{2}\left|\frac{w}{|w|}-\bar{u}\right|^2 & \le1+\frac{\ln\cR(w_t)}{|w|\gamma}+\frac{g^*(\bar{q})}{|w|\gamma}+\frac{\ln2}{|w|\gamma} \\
             & \le1+\frac{\ln\cR(w_{t_0})}{|w|\gamma}-\frac{\sum_{j=t_0}^{t-1}\hat{\eta}_{j}(1-\hat{\eta}_{j}/2)\gamma_{j}^2}{|w|\gamma}+\frac{g^*(\bar{q})}{|w|\gamma}+\frac{\ln2}{|w|\gamma} \\
             & \le1+\frac{\ln(\epsilon/n)}{|w|\gamma}-\frac{\sum_{j=t_0}^{t-1}\hat{\eta}_{j}\gamma_{j}^2}{|w|\gamma}+\frac{\sum_{j=t_0}^{t-1}\hat{\eta}_{j}^2\gamma_{j}^2/2}{|w|\gamma}+\frac{\ln n}{|w|\gamma}+\frac{\ln2}{|w|\gamma} \\
             & \le1-\frac{\sum_{j=t_0}^{t-1}\hat{\eta}_{j}\gamma_{j}^2}{|w|\gamma}+\frac{1+\ln2}{|w|\gamma},
        \end{split}
    \end{equation}
    where (similarly to the proof for $\lexp$ in the main text)
    the last inequality uses the following smoothness consequence (cf. \Cref{fact:smooth_ineq} with $\eta_j\leq 1$)
    \begin{equation*}
        \sum_{j=t_0}^{t-1}\hat{\eta}_{j}^2\gamma_{j}^2=\sum_{j=t_0}^{t-1}\eta_{j}^2|\nR(w_j)|^2\le2\sum_{j=t_0}^{t-1}(\cR(w_{j})-\cR(w_{j+1}))\le2.
    \end{equation*}
    For $j\ge t_0$, by \Cref{fact:log_approx} and
    $\gamma = \min\cbr{ |A_\perp^\top q | : q \geq 0, \sum_i q = 1 }$ (cf. \Cref{fact:struct}),
    \begin{equation}\label{eq:sep_ineq2}
      \gamma_j=|\nabla(\ln\cR)(w_j)|=\frac{|\AT\nL(w_j)|}{L(Aw_j)}=\frac{|\AT\nL(Aw_j)|}{|\nL(Aw_j)|_1}\frac{|\nL(Aw_j)|_1}{L(Aw_j)}\ge(1-\epsilon)\gamma,
    \end{equation}
    Invoking \cref{eq:sep_ineq1} and \cref{eq:sep_ineq2} with $w=w_t$ or $\barw_t$ (notice that in both cases $|w|=|w_t|$) gives
    \begin{equation}\label{eq:sep_tmp}
        \begin{split}
        \frac{1}{2}\left|\frac{w}{|w_t|}-\bar{u}\right|^2 & \le 1-\frac{\sum_{j=t_0}^{t-1}\hat{\eta}_{j}\gamma_{j}^2}{|w_t|\gamma}+\frac{1+\ln2}{|w_t|\gamma} \\
         & \le 1-\frac{\sum_{j=t_0}^{t-1}\hat{\eta}_{j}\gamma_{j}}{|w_t|}(1-\epsilon)+\frac{1+\ln2}{|w_t|\gamma} \\
         & = 1-\frac{|w_{t_0}|+\sum_{j=t_0}^{t-1}\hat{\eta}_{j}\gamma_{j}}{|w_t|}(1-\epsilon)+\frac{|w_{t_0}|}{|w_t|}(1-\epsilon)+\frac{1+\ln2}{|w_t|\gamma} \\
         & \le\epsilon+\frac{|w_{t_0}|}{|w_t|}+\frac{1+\ln2}{|w_t|\gamma}.
     \end{split}
    \end{equation}

    Next, $\epsilon$ and $t_0$ are tuned as follows.  First, set $\epsilon:= \min\{4/|w_t|,1\}$;
    this is possible if the corresponding $t_0\le t$, or equivalently $\cR(w_t)\le4/n|w_t|$.
    This in turn is true as long as $t\ge5$ and
    \[
      \frac{t}{(\ln t)^3}\ge \frac{n}{\gamma^4},
    \]
    because then $(\ln t)^2\ge2$, and since $\gamma\le1$, $|w_t|\le4\ln t/\gamma^2$ given by \Cref{fact:wt_lnt}, \Cref{thm:risk_converge} gives
    \[
      \cR(w_t)\le \frac{1}{t}+\frac{(\ln t)^2}{2\gamma^2t}\le \frac{(\ln t)^2}{2\gamma^2t}+\frac{(\ln t)^2}{2\gamma^2t}=\frac{(\ln t)^2}{\gamma^2t}\le \frac{\gamma^2}{n\ln t}\le \frac{4}{n|w_t|}.
    \]

    By \Cref{thm:risk_converge}, $\cR(w_{t_0})\le\epsilon/n$ if
    \[
      \frac{1}{t_0}+\frac{(\ln t_0)^2}{\gamma^2t_0}\le \frac{\epsilon}{n}.
    \]
    By \Cref{fact:wt_lnt}, $|w_{t_0}|\le \cO\left(\ln(n/\epsilon)\right)/\gamma^2$. Together with \cref{eq:sep_tmp},
    \begin{equation*}
        \max\left\{\left|\frac{w_t}{|w_t|}-\bar{u}\right|^2,\left|\frac{\barw_t}{|\barw_t|}-\bar{u}\right|^2\right\}\le \frac{\cO\left(\ln n+\ln\left(|w_t|\right)\right)}{|w_t|\gamma^2},
    \end{equation*}
    where the constants hidden in the $\cO$ do not depend on the problem.
    Combining this with the lower and upper bounds on $|w_t|$ in \Cref{fact:wt_lnt},
    \begin{equation*}
        \max\left\{\left|\frac{w_t}{|w_t|}-\bar{u}\right|^2,\left|\frac{\barw_t}{|\barw_t|}-\bar{u}\right|^2\right\}\le\cO\left(\frac{\ln n+\ln\ln t}{\gamma^2\ln t}\right).
    \end{equation*}
\end{proofof}

\subsection{Parameter convergence in general}

For convenience in these proofs, define $v_t := \Pi_S w_t$.

The general application of Fenchel-Young is as follows.

\begin{proofof}{\Cref{fact:gen:ip}}
  First note that
  \[
    A_\perp w
    =
    A_c \Pip w
    =
    A_c w + A_c(\Pip w - w)
    =
    A_c w - A_c \Pi_Sw,
  \]
  thus
  \begin{align*}
    \ip{\barq}{A_\perp w}
    &=
    \ip{\barq}{A_c w}
    - \ip{\barq}{A_c\Pi_S(w)}
    \leq
    \ip{\barq}{A_c w}
    + |\barq|_1 |A_c\Pi_S(w)|_\infty\\
    &=
    \ip{\barq}{A_c w}
    + \max_i  (A_c)_{i:} \Pi_S(w)
    \leq
    \ip{\barq}{A_c w_t}
    + |\Pi_S(w)|.
  \end{align*}
  Thus
  \begin{align*}
    \frac{\ip{\baru}{w}}{|\baru|\cdot|w|}
    &=
    \frac{-\ip{A_{\perp}^\top \barq}{w}}{\gamma|w|}
    =
    \frac{-\ip{\barq}{A_{\perp}w}}{\gamma|w|}
    \\
    &=
    \frac{-\ip{\barq}{A_cw}}{\gamma|w|}-\frac{|\Pi_S(w)|}{\gamma|w|}
    \\
    &\geq
    \frac{-\ln\cR_{c,\exp}(w) - g^*(\barq)}{\gamma|w|}-\frac{|\Pi_S(w)|}{\gamma|w|}
    \\
    &\geq
    \frac{-\ln\cR_{c}(w_t) -\ln2 - g^*(\barq)}{\gamma|w|}-\frac{|\Pi_S(w)|}{\gamma|w|}
    \\
    &\geq
    \frac{-\ln \del{ \cR(w_t) -\bar{\cR} } - \ln2 - g^*(\barq)}{\gamma|w|}-\frac{|\Pi_S(w)|}{\gamma|w|}.
  \end{align*}
\end{proofof}

Next, the adjustment of \Cref{fact:smooth_ineq} to upper bounding $\ln(\cR(w_t) - \bar\cR)$,
which leads to an upper bound with $\gamma\gamma_i$ rather than $\gamma_i^2$,
and the necessary cancellation.

\begin{proofof}{\Cref{fact:gen:iter}}
  The first inequality implies the second via the same direct induction in \Cref{fact:smooth_ineq},
  so consider the first inequality.

  Making use of \Cref{fact:smooth:1,fact:smooth:2} and proceeding as in \Cref{fact:smooth:2},
  \begin{align}
    \cR(w_{j+1}) - \bar{\cR}
    &\leq
    \cR(w_j) - \bar{\cR} - \eta_j|\nR(w_j)|^2 + \frac {\eta_j^2\cR(w_j)}{2}|\nR(w_j)|^2
    \notag\\
    &\leq
    \del{ \cR(w_j) - \bar{\cR} }
    \del{1 - \frac{\eta_j |\nR(w_j)|^2}{\cR(w_j) - \bar{\cR}}\del{1 - \eta_j \cR(w_j)/2}}
    \notag\\
    & =
    \del{ \cR(w_j) - \bar{\cR} }
    \del{1 - \frac{|\nR(w_j)|}{\cR(w_j) - \bar{\cR}}\cdot\frac{\eta_j\cR(w_j)|\nR(w_j)|}{\cR(w_j)}\del{1 - \eta_j \cR(w_j)/2}}
    \notag\\
    &\leq
    \del{ \cR(w_j) - \bar{\cR} }
    \del{1 - \frac{|\nR(w_j)|}{\cR(w_j) - \bar{\cR}}\cdot\heta_j\gamma_j\del{1 - \heta /2}}.
    \label{eq:meh}
  \end{align}
  Next it will be shown, by analyzing two cases, that
  \begin{equation}
    \frac{|\nR(w_j)|}{\cR(w_j) - \bar{\cR}} \geq r\gamma. \label{eq:meh:2}
  \end{equation}
  In the following, for notational simplicity let $w$ denote $w_j$.
  \begin{itemize}
    \item
      Suppose $\cR_c(w) < r\del{ \cR(w) - \bar{\cR}}$.
      Consequently,
      \[
        \cR_S(w)-\bar{\cR}>(1-r)\del{ \cR(w) - \bar{\cR}}.
      \]
      Then, since $4\del{ \cR(w) - \bar{\cR}} \leq 2\lambda(1-r)$,
      \begin{align*}
        \frac{|\nR(w)|}{\cR(w) - \bar{\cR}}
        &\geq
        \frac {- |\nR_c(w)| + |\nR_S(w)|}{\cR(w) - \bar{\cR}}
        \\
        &\geq
        \frac {- \cR_c(w) + \sqrt{2\lambda (\cR_S(w) - \bar{\cR}) }}{\cR(w) - \bar{\cR}}
        \\
        &>
        \frac {- r (\cR(w) - \bar{\cR}) + \sqrt{2\lambda (1-r)(\cR(w) - \bar{\cR}) }}{\cR(w) - \bar{\cR}}
        \\
        &\geq
        \frac {(2 - r) (\cR(w) - \bar{\cR})}{\cR(w) - \bar{\cR}}
        \\
        &\geq
        1 \geq r\gamma.
      \end{align*}
    \item
      Otherwise, suppose $\cR_c(w) \geq r\del{ \cR(w) - \bar{\cR}}$.
      Using an expression inspired by a general analysis of AdaBoost
      \citep[Lemma 16 of journal version]{mukherjee_rudin_schapire_adaboost_convergence_rate},
      and introducing $(1-\eps)$ by invoking \Cref{fact:log_approx} as in the separable case,
      \[
        |\nR(w)|\ge \langle -\bar{u},\nR(w)\rangle=\langle -A\bar{u},\nL(Aw)/n\rangle=\langle -A_c\bar{u},\nL(A_cw)/n\rangle\ge\gamma(1-\epsilon)\cR_c(w),
      \]
      Thus
      \[
        \frac{|\nR(w)|}{\cR(w) - \bar{\cR}}
        \geq
        \frac {\gamma(1-\epsilon)\cR_c(w)}{\cR(w) - \bar{\cR}}
        \geq
        \frac {\gamma(1-\epsilon) \del{ r\del{ \cR(w) - \bar{\cR}} }}{\cR(w) - \bar{\cR}}
        = r \gamma(1-\epsilon).
      \]
  \end{itemize}
  Combining \cref{eq:meh} with \cref{eq:meh:2},
  \begin{align*}
      \cR(w_{j+1}) - \bar{\cR}
      &\leq
      \del{ \cR(w_j) - \bar{\cR} }\del{1-r(1-\epsilon)\gamma\gamma_j\heta_j\del{1-\heta_j/2}}.
  \end{align*}
\end{proofof}

Next, the proof of the intermediate inequality by combining
\Cref{fact:gen:iter} and \Cref{fact:gen:ip}.

\begin{proofof}{\Cref{fact:gen:ip:meh}}
  By \Cref{fact:smooth_ineq}, since $\eta_j\le1$, the loss decreases at each step, and thus for any $t\ge t_0$, $\cR(w_t)\le\eps/n$.
  Combining \Cref{fact:gen:ip} and \Cref{fact:gen:iter},
  \begin{align*}
    \frac{\ip{\baru}{w}}{|\baru|\cdot|w|}
    &\geq
    \frac{-\ln \del{ \cR(w) -\bar{\cR} }}{\gamma|w|}-\frac{\ln2 + g^*(\barq)+|\Pi_S(w)|}{\gamma|w|}.
    \\
    &\geq
    \frac{
      r(1-\epsilon)\gamma \sum_{j = {t_0}}^{t-1} \heta_j (1-\heta_j/2) \gamma_j
      -\ln \del{ \cR(w_{t_0}) - \bar{\cR} }}{\gamma|w|}-\frac{\ln2 + g^*(\barq)+|\Pi_S(w)|}{\gamma|w|}
    \\
    &\geq
    \frac{
      r(1-\epsilon)\sum_{j = {t_0}}^{t-1} \heta_j (1-\heta_j/2) \gamma_j
      }{|w|}-\frac{\ln(\epsilon/n)}{\gamma|w|}-\frac{\ln2+\ln n+|\Pi_S(w)|}{\gamma|w|}
    \\
    &\geq
    \frac{r(1-\epsilon)\sum_{j = t_0}^{t-1} \heta_j (1-\heta_j/2) \gamma_j}{|w|}-\frac{\ln2}{\gamma|w|}-\frac{|\Pi_S(w)|}{\gamma|w|}.
  \end{align*}
\end{proofof}

The pieces are in place to prove parameter convergence in general.

\begin{proofof}{general case in \Cref{fact:min_norm}}
  The guarantee on $\bar v_t$ and the $A_c=\emptyset$ case have been discussed in \Cref{fact:param_converge_S},
  therefore assume $A_c\ne\emptyset$.
  The proof will proceed via invocation of the Fenchel-Young scheme in \Cref{fact:gen:ip:meh},
  applied to $w\in\{w_t,\barw_t\}$ since $\cR(\barw_t)\leq\cR(w_t)$.

  It is necessary to first control the warm start parameter $t_0$.
  Fix an arbitrary $\epsilon\in(0,1)$, set $r:=1-\nicefrac{\epsilon}{3}$,
  and let $t_0$ be large enough such that
  \begin{equation}\label{eq:gen_tmp}
    \cR(w_{t_0})-\bar{\cR}\le\frac{\epsilon}{3n},
    \qquad\cR(w_{t_0})-\bar{\cR}\le \frac{\lambda(1-r)}{2}=\frac{\lambda\epsilon}{6},
    \qquad1-\frac{\heta_{t_0}}{2}\ge1-\frac{\eta_{t_0}}{2}\ge1-\frac{\epsilon}{3}.
  \end{equation}
  By \Cref{thm:risk_converge} and the choice of step sizes, it is enough to require
  \[
    \frac{\exp(|\bar{v}|)}{t_0}\le\min\left\{\frac{\eps}{6n},\ {} \frac{\lambda\eps}{12}\right\},
    \qquad
    \frac{|\barv|^2+\ln(t_0)^2/\gamma^2}{2\gamma^2\sqrt{t_0}}\le\min\left\{\frac{\eps}{6n},\ {} \frac{\lambda\eps}{12}\right\},
    \qquad
    \frac{1}{2\sqrt{t_0+1}}\le \frac{\eps}{3}.
  \]
  Therefore, choosing $t_0=\widetilde{\cO}\left(\frac{n^2}{\epsilon^2}\right)$ suffices.

  Invoking \Cref{fact:gen:ip:meh} with the above choice for $w\in\cbr{w_t,\barw_t}$,
  \begin{equation}\label{eq:gen_ineq_1}
      \begin{split}
          \frac{1}{2}\left|\frac{w}{|w_t|}-\bar{u}\right|^2 & =1-\frac{\ip{\baru}{w}}{|\baru|\cdot|w_t|} \\
           & \le1-\frac{r(1-\nicefrac{\epsilon}{3})\sum_{j = t_0}^{t-1} \heta_j (1-\heta_j/2) \gamma_j}{|w_t|}+\frac{\ln2}{\gamma|w_t|}+\frac{|\Pi_S(w)|}{\gamma|w_t|} \\
           & \le1-\frac{(1-\nicefrac{\epsilon}{3})(1-\nicefrac{\epsilon}{3})\sum_{j = t_0}^{t-1} \heta_j (1-\nicefrac{\epsilon}{3}) \gamma_j}{|w_t|}+\frac{\ln2}{\gamma|w_t|}+\frac{|\Pi_S(w)|}{\gamma|w_T|} \\
           & \le1-\frac{(1-\epsilon)\sum_{j = t_0}^{t-1} \heta_j \gamma_j}{|w_t|}+\frac{\ln2}{\gamma|w_t|}+\frac{|\Pi_S(w)|}{\gamma|w_t|} \\
           & =1-\frac{
              (1-\epsilon) \left(|w_{t_0}|+\sum_{j=t_0}^{t-1} \heta_j  \gamma_j)\right)
              }{|w_t|}+(1-\epsilon)\frac{|w_{t_0}|}{|w_t|}+\frac{\ln2}{\gamma|w_t|}+\frac{|\Pi_S(w)|}{\gamma|w_t|} \\
           & \le\epsilon+\frac{|w_{t_0}|}{|w_t|}+\frac{\ln2}{\gamma|w_t|}+\frac{|\Pi_S(w)|}{\gamma|w_t|}. \\
      \end{split}
  \end{equation}

  Suppose $t\ge5$ and $\sqrt{t}/\ln^3t\ge n(1+R)/\gamma^4$, where $R=\sup_{j<t}|\Pip w_j-w_j|=\cO(1)$ is introduced in \Cref{fact:wt_lnt}.
  As will be shown momentarily,
  $t$ satisfies \eqref{eq:gen_tmp} with
  $
    \epsilon\le \nicefrac{C}{|w_t|}
  $
  for some constant $C$, and therefore this choice of $\epsilon$ can be plugged into \eqref{eq:gen_ineq_1}.
  To see this, note that \Cref{thm:risk_converge} gives
  \begin{align*}
      \cR(w_t)-\bar{\cR} & \le \frac{\exp(|\barv|)}{t}+\frac{|\barv|^2+\ln(t)^2}{2\gamma^2\sqrt{t}} \\
       & \le \frac{\exp(|\barv|)}{\sqrt{t}/\ln (t)^2}+\frac{|\barv|^2}{2\gamma^2\sqrt{t}/\ln (t)^2}+\frac{\ln(t)^2}{2\gamma^2\sqrt{t}} \\
       & \le \frac{\exp(|\barv|)\gamma^4}{n(1+R)\ln t}+\frac{|\barv|^2\gamma^2}{2n(1+R)\ln t}+\frac{\gamma^2}{2n(1+R)\ln t} \\
       & \le C_1 \frac{\gamma^2}{n\cdot4(1+R)\ln t} \\
       & \le C_1 \frac{1}{n|w_t|},
  \end{align*}
  where the last line uses \Cref{fact:wt_lnt}. It can be shown similarly that other parts of \eqref{eq:gen_tmp} hold.

  Continuing with \Cref{eq:gen_ineq_1} but using $\eps\leq \nicefrac{C}{|w_t|}$
  and upper bounding $|w_{t_0}|$ via \Cref{fact:wt_lnt},
  \[
    \left|\frac{w_t}{|w_t|}-\bar{u}\right|^2\le \frac{\cO\left(\ln n+\ln\left(1/|w_t|\right)\right)}{|w_t|\gamma^2}.
  \]
  Lastly, controlling the denominator with the lower bound on $|w_t|$ in \Cref{fact:wt_lnt},
  \[
    \left|\frac{w_t}{|w_t|}-\bar{u}\right|^2\le \cO\left(\frac{\ln n+\ln\ln t}{\gamma^2\ln t}\right).
  \]
\end{proofof}

\end{document}